%% file: main.tex
\newcommand{\jmlrprehyperref}{\RequirePackage{float}}
\documentclass{colt2026}
\usepackage{times}
\input{tex/packages}
\input{tex/macros}

\title[Feature Priming in Online Linear Regression]{Feature Priming in Online Linear Regression:
Sparse-Regret Lower Bounds and Tight Coordinatewise Rates}
\coltauthor{
\Name{Huibo Xu}\thanks{Equal contribution.}
\and \Name{Shi Fu}\footnotemark[1]
\and \Name{Qixin Zhang}
\and \Name{Dacheng Tao}\thanks{Corresponding author.}\\
\addr Nanyang Technological University, Singapore
}
\jmlrproceedings{}{Preprint}
\jmlrvolume{}
\jmlrpages{}
\jmlryear{2026}
\begin{document}
\maketitle

\begin{abstract}
\input{sections/00_abstract}
\end{abstract}

\input{sections/01_introduction}

\input{sections/02_problem_setup}
\input{sections/03_main_result}
\input{sections/04_mechanism}
\input{sections/05_rank_frontier}
\input{sections/05_ridge_robustness}
\input{sections/05_activation_experiment}
\input{sections/06_related_work}
\input{sections/07_limitations}
\input{sections/08_conclusion}

\bibliography{references}

\clearpage
\appendix
\input{appendix/A_protocol}
\input{appendix/B_common_tools}
\input{appendix/C_unit_power_intro}

\input{appendix/C1_lls_hadamard}
\input{appendix/C2_univariate_hadamard}
\input{appendix/C3_pearson_hadamard}
\input{appendix/C4_unit_power_assembly}
\input{appendix/D_powered_shared}
\input{appendix/E_rank_bound}
\input{appendix/F_univariate_frontier}
\input{appendix/G_pearson_exact}
\input{appendix/H_multivariate_euclidean}
\input{appendix/G_ridge_robustness}
\input{appendix/G_activation_experiment}

\end{document}

%% file: tex/packages.tex
\usepackage{amsfonts,mathtools}
\usepackage{microtype}
\usepackage{graphicx}
\usepackage{booktabs}
\usepackage{array}
\usepackage{xcolor}
\usepackage{xurl}
\usepackage{enumitem}
\AtBeginDocument{%
}
\setlist[itemize]{leftmargin=*,topsep=3pt,itemsep=2pt}
\setlist[enumerate]{leftmargin=*,topsep=3pt,itemsep=2pt}

\renewenvironment{proof}[1][\proofname]{%
  \par\noindent{\bfseries\upshape #1\ }\ignorespaces
}{\jmlrQED}

%% file: tex/macros.tex
\newcommand{\R}{\mathbb R}
\newcommand{\one}{\mathbf 1}
\newcommand{\Reg}{\mathcal R}
\newcommand{\diag}{\operatorname{diag}}
\newcommand{\supp}{\operatorname{supp}}
\newcommand{\clip}{\operatorname{clip}}
\newcommand{\LLS}{\mathrm{LLS}}
\newcommand{\UNI}{\mathrm{uni}}
\newcommand{\PEAR}{\mathrm{Pearson}}

\newcommand{\defeq}{\mathrel{:=}}

%% file: sections/00_abstract.tex
In high-dimensional online prediction, sparse comparators motivate regret
bounds that depend on sparsity rather than ambient dimension. Feature priming
seeks such adaptation by reweighting features using past data and refitting a
minimum-norm predictor. At COLT 2023, \citet{warmuth2023priming} posed the open
problem of whether the univariate, Pearson, or multivariate priming rules admit
competitive online regret guarantees. Under the natural past-only
Moore--Penrose protocol, we establish sparse-regret lower bounds that refute
the corresponding sparse-logarithmic guarantee. The key obstruction is cheap
nuisance interpolation, which permits exact interpolation of the history while
assigning insufficient weight to the truly predictive coordinate. An exact
target-mass identity and a two-sign argument convert this obstruction into
clipped prediction loss. Hadamard constructions yield
$\Omega(\min\{T,\sqrt d\})$ clipped regret for each of the three unit-power
rules against a zero-loss one-sparse comparator. For every fixed power
$\alpha\ge1$, one shared
paired construction further yields linear regret simultaneously for all three
powered rules and selectors among them in sufficiently high dimension.
A rank upper bound is tight for powered univariate priming, even with
Euclidean-unit inputs, and for unit-power Pearson priming with coordinatewise
bounded inputs and target-preserving totalization. A separate algebraic
construction gives $\Omega(\min\{T,d^{1/4}\})$ regret for unit-power
multivariate priming under Euclidean-unit inputs. The univariate lower bound
persists under any nonnegative second-stage ridge schedule, while a paired
ridge construction yields linear lower bounds for all three powered rules.
Exploratory diagnostics on frozen language-model activations are consistent
with the same qualitative mechanism. The exact multivariate frontier remains
open.

%% file: sections/01_introduction.tex
\section{Introduction}
\label{sec:intro}

In high-dimensional online regression, a central goal is to obtain regret
bounds that adapt to comparator sparsity rather than ambient dimension.
Coordinate-sensitive methods such as multiplicative updates can achieve this
behavior \citep{kivinen1997eg,gerchinovitz2011sparsity}, whereas minimum-norm
least squares (LLS) has no intrinsic preference for the coordinates in which
sparsity is defined.  Feature priming was proposed to introduce such a
preference without giving up the simplicity of LLS
\citep{warmuth2023priming}.  It assigns each feature a univariate, Pearson, or
multivariate LLS score, rescales the corresponding design column, and performs
a second minimum-norm fit.

\citet{warmuth2023priming} observed that these rules can recover sparse
targets faster than vanilla LLS and asked whether any of them admits a
competitive online regret bound.  Their comparison with multiplicative
updates recalls $O(k\log(n/k))$ regret when the target averages $k$ bounded
features.  We therefore ask whether one of the priming rules can guarantee
$O(k\log(ed/k))$ regret uniformly over comparators supported on at most $k$
coordinates.  Under the natural nonanticipating implementation that applies
the past-data map to each completed prefix, we give a clipping-robust negative
answer to this sparse-logarithmic form of the open problem. This does not preclude dimension-dependent guarantees, as the rank bound below demonstrates.

The negative result has a common geometric source.  Priming changes which
interpolant the second minimum-norm fit prefers.  When nuisance features
interpolate the history at low prime-weighted cost, the refit can match every
past label while placing little weight on the coordinate that actually
generates them.  If $\pi_1$ is the transformed target prime and $\kappa$ is
the minimum squared norm of a nuisance-only interpolant, we show that the
learned target weight is exactly
$w_1=\pi_1^2\kappa/(1+\pi_1^2\kappa)$.  A two-sign argument then converts the
missing target mass into clipped prediction loss.  Hadamard features keep
$\pi_1^2\kappa$ small for all three prime rules, yielding
$\Omega(\min\{T,\sqrt d\})$ regret on bounded sequences realizable with zero
loss by the one-sparse comparator $e_1$.  The regret is therefore linear when
$d=\Theta(T^2)$.  Pairing each Hadamard query with its negative gives a
stronger conclusion: for every fixed $\alpha\ge1$, the three powered rules
produce the same coefficient vector on every query round.  One fixed sequence
therefore defeats all three rules, as well as current-input-dependent
switchers and convex mixtures.

The failure is nevertheless constrained by the geometry of the observed
data.  A clipped predictor that interpolates its completed history can err
only when the current input adds a new row-space direction, and hence has
regret at most $4\operatorname{rank}(X_{1:T})$.  A rational triangular
construction matches this dependence for powered univariate priming,
establishing the exact order $\Theta_\alpha(\min\{T,d\})$ even when
$\lVert x_t\rVert_2\le1$.  A seeded affine-Hadamard construction also
attains $\Theta(\min\{T,d\})$ for unit-power Pearson priming on
coordinatewise bounded inputs, under target-preserving totalization for
the upper bound.  Its lower-bound sequence works for every finite
totalization.  For multivariate priming, a separate algebraic construction
gives $\Omega(\min\{T,d^{1/4}\})$ regret under Euclidean-unit inputs and
a unit-norm zero-loss comparator.  Thus its sparse-logarithmic failure
does not require input norms growing with dimension, although its exact
frontier remains unsettled.

Nor is the lower bound an artifact of taking the
exact Moore--Penrose limit.  Under ridge regularization of the transformed
second-stage coefficient, the same triangular sequence retains
$\Theta_\alpha(\min\{T,d\})$ regret for every nonnegative ridge schedule:
weak regularization leaves room for cheap nuisance interpolation, while
strong regularization shrinks the prediction toward zero.  A paired-Hadamard
variant gives linear regret for all three powered rules under past-only ridge,
with a common distributional lower bound for pre-input switchers and mixtures.

Exploratory stress tests on frozen Qwen2.5 and Qwen3.8 activations exhibit the
same observable pattern.  As nuisance capacity grows, nuisance-only
interpolation becomes cheaper, the learned target mass falls, and normalized
regret increases.  These diagnostics illustrate the geometry of the proof;
they are not evidence that either model implements feature priming. Our results concern the three one-stage priming rules at fixed $\alpha$ and
ridge regularization of their transformed second-stage coefficients.
Regularized prime estimation, iterative or distribution-dependent variants,
and general randomized algorithms beyond the stated switchers and mixtures
remain open.  The exact multivariate frontier, powered Pearson frontiers
beyond unit power, and Euclidean-unit Pearson guarantees under standard
fill conventions are also unresolved.

%% file: sections/02_problem_setup.tex
\section{Problem setup}
\label{sec:setup}

We consider online square-loss regression with coordinatewise bounded inputs
$x_t\in[-1,1]^d$.  Before round $t$, the learner has observed the completed
history $(X_{<t},y_{<t})$, where the rows of $X_{<t}$ are the previous
inputs.  It computes a coefficient vector from this history, observes $x_t$,
makes a prediction, and then receives the label.  This is the standard
nonanticipating online interpretation of the past-data priming map of
\citet{warmuth2023priming}. Labels are generated by a fixed sparse comparator.  For $S\subseteq[d]$ with
$|S|=k$, let $u_S=k^{-1}\sum_{j\in S}e_j$ and
$y_t=x_t^\top u_S$.  The comparator therefore has zero cumulative loss.  Our
lower bounds use only $k=1$ and $S=\{1\}$, so $u_S=e_1$ and
$y_t=x_{t,1}$. Given a base-prime vector $p_t\in\R^d$, the unit-power rule uses
$\pi_t=p_t$.  For a fixed $\alpha\ge1$, the powered rule instead uses
\[
\pi_{t,i}=\operatorname{sign}(p_{t,i})|p_{t,i}|^\alpha.
\]
Here $\operatorname{sign}(0)=0$.  Only the magnitudes of these multipliers
affect the prediction.  Indeed, write $\diag(\pi_t)=D_+S$ with
$D_+=\diag(|p_t|^\alpha)$, where $S_{ii}=\operatorname{sign}(p_{t,i})$ for
nonzero primes and $S_{ii}=1$ otherwise.  The identity
$(X D_+S)^\dagger=S(XD_+)^\dagger$ shows that the two sign matrices cancel
in the refit.  In particular, signed squaring and literal squaring are
prediction-equivalent.
Write $D_t=\diag(\pi_t)$.  Priming rescales the columns of the historical
design and performs a second minimum-norm fit:
\begin{equation}
w_t=D_t(X_{<t}D_t)^\dagger y_{<t},
\qquad
\widehat y_t=x_t^\top w_t.
\label{eq:primed-predictor}
\end{equation}
The empty-history prediction is zero.  We also study
$\widetilde y_t=\clip(\widehat y_t)=\max\{-1,\min\{1,\widehat y_t\}\}$.
Since $y_t\in[-1,1]$, clipping cannot increase square loss.

We consider the three base primes proposed by
\citet[Sec.~2, p.~2]{warmuth2023priming}:
\begin{enumerate}
  \item \textbf{Univariate LLS:}
  $p_{t,i}=X_{<t}(:,i)^\dagger y_{<t}
  =X_{<t}(:,i)^\top y_{<t}/\|X_{<t}(:,i)\|_2^2$, with value zero for a
  zero column.

  \item \textbf{Pearson:} whenever both empirical variances are positive,
  $p_{t,i}$ is the empirical Pearson correlation between
  $X_{<t}(:,i)$ and $y_{<t}$.

  \item \textbf{Multivariate LLS:} $p_t=X_{<t}^\dagger y_{<t}$.
\end{enumerate}
Together with Equation~\eqref{eq:primed-predictor}, these agree with the
original priming formulas whenever the statistics are defined.

Pearson correlation is undefined for histories of length at most one and
whenever either argument has zero empirical variance.  A \emph{finite
totalization} agrees with Pearson correlation whenever it is defined and
assigns an arbitrary finite, deterministic, past-only value otherwise.  It is
\emph{target preserving} if the filled target prime is nonzero whenever the
past label vector is not identically zero.  Our Pearson lower bounds hold for
every finite totalization; the rank upper bound requires target preservation.

Because $e_1$ has zero loss on all our constructions, regret equals learner
loss.  We write
$\Reg_T^{\rm clip}=\sum_{t=1}^T(\widetilde y_t-y_t)^2$ for the clipped
version.  The constraint throughout is $x_t\in[-1,1]^d$; it does not impose
$\|x_t\|_2\le1$ unless stated explicitly.  Unless a result states that its
witness is shared, the adversarial sequence may depend on the rule and
horizon but is fixed before online play.  The unit-power separation uses such
rule-specific witnesses.  The powered separation instead uses one witness for
all three rules at a common fixed exponent and holds pathwise for their
selectors.  The triangular ridge result is pointwise in the ridge schedule;
the corresponding policy-uniform Hadamard statement is distributional.  The
full quantifiers are collected in Appendix~\ref{app:protocol}.

%% file: sections/03_main_result.tex
\section{A nuisance-interpolation obstruction}
\label{sec:mechanism}

Although the three prime rules assign their weights differently, their
failures have the same geometric source.  The labels in the history can be
interpolated either by the target coordinate or by the nuisance coordinates.
If the latter representation is cheap after priming, the minimum-norm refit
places little weight on the target even though it interpolates every past
label.  This section isolates that obstruction without committing to a
particular prime rule.  Fix a completed history $X$ with $y=Xe_1$, and let
$\pi$ denote the actual
column multiplier used by the powered rule.  Write $\Pi=\diag(\pi)$,
$A=X\Pi$, $z=A^\dagger y$, $w=\Pi z$, and $J=\lVert z\rVert_2^2$.
Whenever $y\in\operatorname{col}(A)$, the vector $z$ is the least-norm
coefficient vector that interpolates $y$ in the transformed design.

\begin{lemma}[Target-mass identity]
\label{lem:target-mass}
If $y=Xe_1$ and $y\in\operatorname{col}(X\Pi)$, then
\[
w_1=\pi_1^2J.
\]
\end{lemma}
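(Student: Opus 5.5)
The identity follows from the characterization of the minimum-norm solution as the unique interpolant lying in the row space of $A$. Concretely, $z=A^\dagger y$ satisfies $Az=y$ because $y\in\operatorname{col}(A)$. It also satisfies $z\in\operatorname{row}(A)=\operatorname{col}(A^\top)$, so we can write $z=A^\top\lambda$ for some $\lambda$ in the sample space. We then compute $J=\lVert z\rVert_2^2=z^\top A^\top\lambda=(Az)^\top\lambda=y^\top\lambda$.

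The next step expresses $w_1$ in the same dual variable. Since $w=\Pi z=\Pi A^\top\lambda=\Pi\Pi X^\top\lambda$, the first coordinate is $w_1=\pi_1^2\,X(:,1)^\top\lambda$. The hypothesis $y=Xe_1$ says exactly that $X(:,1)=y$. Therefore
\[
w_1=\pi_1^2\,y^\top\lambda=\pi_1^2 J .
\]
This equation is the whole identity.

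The only point needing care is the existence of $\lambda$, i.e., that $A^\dagger y$ lies in $\operatorname{col}(A^\top)$. This is a standard property of the Moore--Penrose inverse: $A^\dagger=A^\top(AA^\top)^\dagger$, so one may take $\lambda=(AA^\top)^\dagger y$ explicitly. With that choice the computation $J=y^\top\lambda$ can be checked directly. It uses $AA^\top(AA^\top)^\dagger y=y$, which holds because $y\in\operatorname{col}(A)=\operatorname{col}(AA^\top)$. The degenerate case $\pi_1=0$ needs no separate treatment, since then $w_1=0=\pi_1^2J$. I do not expect a genuine obstacle here; the argument is a short duality computation.
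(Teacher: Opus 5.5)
Your proof is correct and is essentially the paper's argument: write $z=A^\top\lambda=\Pi X^\top\lambda$, use $Az=y$ to get $J=\lambda^\top y=(X^\top\lambda)_1$ via $y=Xe_1$, and read off $w_1=\pi_1^2(X^\top\lambda)_1$. The paper passes through $z_1=\pi_1J$ instead of $w=\Pi^2X^\top\lambda$, and your explicit choice $\lambda=(AA^\top)^\dagger y$ only makes the existence step concrete.
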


\begin{proof}
The minimum-norm solution lies in the row space of $X\Pi$, so
$z=(X\Pi)^\top\lambda=\Pi X^\top\lambda$ for some $\lambda$.  Since
$X\Pi z=y$, we have
$J=\lambda^\top X\Pi z=\lambda^\top y=(X^\top\lambda)_1$.
Thus $z_1=\pi_1J$ and $w_1=\pi_1z_1=\pi_1^2J$.
\end{proof}

The next lemma turns missing target weight into loss.  Its point is that an
arbitrary nuisance contribution cannot make the two possible target signs
simultaneously accurate, even after clipping.

\begin{lemma}[Two-sign clipping]
\label{lem:two-sign}
For every $b\in\R$ and $0\le w\le1$,
\[
\frac12\left[(\clip(b+w)-1)^2+(\clip(b-w)+1)^2\right]
\ge(1-w)^2.
\]
\end{lemma}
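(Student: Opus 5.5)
Write $u=\clip(b+w)$ and $v=\clip(b-w)$. The left-hand side is $\tfrac12[(1-u)^2+(1+v)^2]$. The plan has three steps: find a lower bound on the gap $u-v$, reduce to a one-variable minimization, and handle the clipping boundary.

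\textbf{Step 1: the gap between the two clipped values.} Clipping is monotone, so $u\ge v$. Clipping is also $1$-Lipschitz, so $u-v\le 2w$. What the argument actually needs is a lower bound on $(1-u)+(1+v)=2-(u-v)$. Since $u-v\le 2w$, we get
\[
(1-u)+(1+v)\ge 2-2w=2(1-w)\ge 0,
\]
where the last inequality uses $w\le1$.

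\textbf{Step 2: reduce to one variable.} Both terms $1-u$ and $1+v$ are nonnegative, because $u,v\in[-1,1]$. By the quadratic-mean inequality $\tfrac12(a^2+c^2)\ge\bigl(\tfrac{a+c}{2}\bigr)^2$, applied with $a=1-u\ge0$ and $c=1+v\ge0$,
\[
\tfrac12\bigl[(1-u)^2+(1+v)^2\bigr]\ge\Bigl(\tfrac{(1-u)+(1+v)}{2}\Bigr)^2\ge(1-w)^2.
\]
The final inequality uses Step 1 together with the fact that squaring is monotone on nonnegative numbers.

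\textbf{Step 3: the only delicate point.} The only step that needs care is the Lipschitz bound $u-v\le 2w$ when one argument is clipped and the other is not. It follows from $|\clip(s)-\clip(s')|\le|s-s'|$, which holds because clipping is the projection onto the convex interval $[-1,1]$. The nonnegativity of $a$ and $c$ holds automatically because clipped values lie in $[-1,1]$. I therefore expect no real obstacle.
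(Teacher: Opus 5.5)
Your proof is correct, and it takes a different route from the paper. The paper reduces by symmetry to $b\ge0$ and splits into three cases according to where $b$ sits relative to the clipping thresholds. If neither prediction is clipped, the average loss equals $b^2+(1-w)^2$. If $b+w$ clips to one but $b-w$ does not, the second term alone gives at least $\tfrac12(2-2w)^2$. If both clip to one, the average loss is $2$.

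Your argument avoids the case split. It uses only the nonexpansiveness of the projection onto $[-1,1]$, which gives $u-v\le2w$, followed by the averaging inequality. In fact the exact identity
\[
\tfrac12\bigl[(1-u)^2+(1+v)^2\bigr]=\Bigl(\tfrac{u+v}{2}\Bigr)^2+\Bigl(1-\tfrac{u-v}{2}\Bigr)^2
\]
shows you need only $u-v\le2w$ and $w\le1$. Neither the monotonicity $u\ge v$ nor the nonnegativity of $1-u$ and $1+v$ is actually used.

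The paper's case analysis computes the loss explicitly in each regime, but it is tied to the specific form $\clip(b\pm w)$. Your version applies verbatim to any pair of predictions whose gap is at most $2w$, such as a convex mixture of individually clipped predictions. This is precisely the argument the paper later re-derives, via the same $m_0^2+(1-\delta)^2$ decomposition, in the proof of Corollary~\ref{cor:ridge-random-sign}.
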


\begin{proof}
By symmetry, assume $b\ge0$.  If $b\le1-w$, neither prediction is clipped
and the left-hand side is $b^2+(1-w)^2$.  If
$1-w\le b\le1+w$, the first prediction clips to one and the average loss is
at least $\tfrac12(2-2w)^2$.  If $b\ge1+w$, both predictions clip to one
and the average loss is $2$.  Each case is at least $(1-w)^2$.
\end{proof}

We can now state the obstruction in terms of a single scalar.  Decompose the
transformed design as $A=[\pi_1y,B]$, where
$B=X(:,2{:}d)\diag(\pi_{2:d})$ contains the transformed nuisance columns.

\begin{theorem}[Nuisance-interpolation obstruction]
\label{thm:nuisance-obstruction}
Suppose that $y\in\operatorname{col}(B)$, and let
$\kappa=\lVert B^\dagger y\rVert_2^2$.  The second-stage fit assigns the
target coordinate weight
\begin{equation}
w_1=\frac{\pi_1^2\kappa}{1+\pi_1^2\kappa}.
\label{eq:obstruction-target-weight}
\end{equation}
For any next nuisance row $h\in[-1,1]^{d-1}$, consider the two legal examples
$x^\sigma=(\sigma,h)$ with labels $y^\sigma=\sigma$, where
$\sigma\in\{-1,1\}$.  Their average clipped loss is at least
$(1+\pi_1^2\kappa)^{-2}$, and hence one of the two signs incurs at least this
much loss.
\end{theorem}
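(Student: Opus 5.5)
The plan is to compute $J=\lVert z\rVert_2^2$ exactly and then apply Lemma~\ref{lem:target-mass}. After that, the prediction on $x^\sigma$ becomes $\sigma w_1+h^\top w_{2:d}$, and Lemma~\ref{lem:two-sign} finishes the argument.

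\textbf{Step 1: reduce $J$ to a split-cost minimization.}
Since $A=[\pi_1y,B]$ and $y\in\operatorname{col}(B)\subseteq\operatorname{col}(A)$, the vector $z$ is the minimum-norm solution of $\pi_1 y\, z_1+Bz'=y$. For a fixed value $z_1=c$, the cheapest nuisance part must interpolate $(1-\pi_1c)y$. By linearity of the pseudoinverse it is $(1-\pi_1 c)B^\dagger y$, with squared norm $(1-\pi_1c)^2\kappa$. Hence
\[
J=\min_{c\in\R}\bigl[c^2+(1-\pi_1c)^2\kappa\bigr].
\]

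\textbf{Step 2: solve the quadratic.}
Setting the derivative to zero gives $c=\pi_1\kappa/(1+\pi_1^2\kappa)$. Substituting, the minimum value is $J=\kappa/(1+\pi_1^2\kappa)$. Lemma~\ref{lem:target-mass}, whose hypothesis $y\in\operatorname{col}(X\Pi)$ holds, then yields \eqref{eq:obstruction-target-weight}. As a cross-check, computing directly gives $w_1=\pi_1 c$, which agrees.

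The degenerate cases need a brief remark. If $\pi_1=0$, then $w_1=0$, consistent with the formula. If $y=0$, then $\kappa=0$ and everything vanishes. The one subtle point is that restricting $z'$ to multiples of $B^\dagger y$ loses nothing. This holds because, for a fixed target $(1-\pi_1c)y$, the minimum-norm interpolant within the columns of $B$ is exactly $(1-\pi_1c)B^\dagger y$.

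\textbf{Step 3: the two-sign loss bound.}
The coefficient vector $w$ depends only on the history, so it is the same for both candidate examples. Write $b=h^\top w_{2:d}$. The predictions are then $b+w_1$ for $\sigma=1$ and $b-w_1$ for $\sigma=-1$, and the labels are $\pm1$. Formula \eqref{eq:obstruction-target-weight} gives $0\le w_1<1$. Lemma~\ref{lem:two-sign} therefore bounds the average clipped loss below by
\[
(1-w_1)^2=(1+\pi_1^2\kappa)^{-2},
\]
and one of the two signs must attain at least the average.

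The main obstacle is Step 1, specifically justifying the reduction to a one-parameter minimization. An alternative route is to observe that the minimum-norm $z$ lies in the row space of $A$. Combined with the identity $z_1=\pi_1J$ from the proof of Lemma~\ref{lem:target-mass}, this gives $(1-\pi_1^2J)y=Bz'$ with $z'$ in the row space of $B$. Hence $z'=(1-\pi_1^2J)B^\dagger y$, and therefore
\[
J=\pi_1^2J^2+(1-\pi_1^2J)^2\kappa.
\]
Solving this equation, after discarding the spurious root $J=1/\pi_1^2$, gives the same answer.
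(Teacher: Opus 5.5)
Your proposal is correct and follows the paper's proof. For fixed $z_1$, the cheapest nuisance completion costs $\kappa(1-\pi_1z_1)^2$, and minimizing the strictly convex scalar quadratic gives $z_1=\pi_1\kappa/(1+\pi_1^2\kappa)$; Lemma~\ref{lem:two-sign} with the common offset $b=h^\top w_{2:d}$ then yields the loss bound. Going through $J$ and Lemma~\ref{lem:target-mass} is equivalent to the paper's direct use of $w_1=\pi_1z_1$. In your optional fixed-point route, discarding the root $J=1/\pi_1^2$ needs one line of justification: for small $\epsilon>0$, the feasible point $((1-\epsilon)/\pi_1,\epsilon B^\dagger y)$ is strictly cheaper than the target-only interpolant $e_1/\pi_1$.
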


\begin{proof}
For a fixed transformed target coefficient $z_1$, the nuisance coefficients
must satisfy $Bz_{2:d}=(1-\pi_1z_1)y$ and therefore have minimum squared norm
$\kappa(1-\pi_1z_1)^2$.  The second-stage fit consequently minimizes
$z_1^2+\kappa(1-\pi_1z_1)^2$, whose minimizer is
$z_1=\pi_1\kappa/(1+\pi_1^2\kappa)$.  This gives
Equation~\eqref{eq:obstruction-target-weight}.  The nuisance part of the next
prediction contributes the same scalar $b=h^\top w_{2:d}$ for both target
signs.  Since $1-w_1=(1+\pi_1^2\kappa)^{-1}$,
Lemma~\ref{lem:two-sign} gives the claimed loss bound.
\end{proof}

Thus the relevant quantity is not the prime of any one coordinate in
isolation, but the prime-weighted cost $\pi_1^2\kappa$ of explaining the
history through nuisance features.  The constructions in the next section
all keep this quantity bounded while adding new examples.

%% file: sections/04_mechanism.tex
\section{Hadamard instantiations of the obstruction}
\label{sec:results}

We now instantiate Theorem~\ref{thm:nuisance-obstruction} for each of the
three prime rules.  Fix a power of two $N$, let $H$ be an $N\times N$
Hadamard matrix, and take $d=N+1$.  A history of $r$ decisive examples has
the form $X=[a,H_r]$, where $a\in\{-1,1\}^r$ is both the target column and
the label vector, and $H_r$ contains $r$ rows of $H$.  If
$b=H_r^\top a$, then
$v=N^{-1}b$ satisfies $H_rv=a$.  Thus the nuisance coordinates interpolate
the labels regardless of the chosen signs.  What varies among the rules is
only the prime-weighted cost of this interpolant.

\begin{proposition}[Hadamard nuisance cost]
\label{prop:hadamard-cost}
Fix $\alpha\ge1$ and let $\pi$ be the powered column multiplier.  For the
history above, the powered univariate and multivariate rules satisfy
\(
\pi_1^2\lVert B^\dagger a\rVert_2^2
\le \frac{r^{2\alpha}}{N},
\)
where $B=H_r\diag(\pi_{2:d})$.  After $s$ completed opposite pairs, the
powered Pearson rule satisfies the same bound with $r=s$.  The Pearson claim
is independent of the finite totalization.
\end{proposition}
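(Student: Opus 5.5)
The plan is to bound $\kappa=\lVert B^\dagger a\rVert_2^2$ by the squared norm of one explicit interpolant. Since $B^\dagger a$ is the minimum-norm solution of $Bz=a$ whenever one exists, any $z$ with $Bz=a$ gives $\kappa\le\lVert z\rVert_2^2$; exhibiting such a $z$ also shows $a\in\operatorname{col}(B)$, as Theorem~\ref{thm:nuisance-obstruction} requires. With $b=H_r^\top a$ and $v=N^{-1}b$, we have $H_rv=a$. So I take $z_j=v_j/\pi_{j+1}$ on coordinates with $b_j\neq0$ and $z_j=0$ otherwise. This satisfies $Bz=H_r\diag(\pi_{2:d})z=H_rv=a$, provided that $\pi_{j+1}\ne0$ exactly when $b_j\ne0$, which I verify for each rule. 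Then $\pi_1^2\kappa\le\sum_{j:b_j\ne0}\pi_1^2 b_j^2/(N^2\pi_{j+1}^2)$. It remains to compute the primes and show that each summand is at most $r^{2\alpha}/N^2$; summing over at most $N$ coordinates then gives $r^{2\alpha}/N$.

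For \textbf{univariate LLS}, the target column equals the label vector $a\in\{\pm1\}^r$, so $p_1=a^\top a/r=1$. Each nuisance column has squared norm $r$, so $p_{j+1}=b_j/r$. Hence $\pi_1^2/\pi_{j+1}^2=(r/|b_j|)^{2\alpha}$, and the $j$th summand is
\[
\frac{r^{2\alpha}}{N^2}\,|b_j|^{2-2\alpha}\le\frac{r^{2\alpha}}{N^2}.
\]
The inequality uses that $b_j$ is a nonzero integer, so $|b_j|\ge1$, together with $\alpha\ge1$. For \textbf{multivariate LLS}, the rows of $H_r$ are orthogonal with $H_rH_r^\top=NI$, so $XX^\top=aa^\top+NI$ is invertible. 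Then $p=X^\top(XX^\top)^{-1}a=X^\top a/(r+N)$ by Sherman--Morrison, since $a^\top a=r$. This gives $p_1=r/(r+N)$ and $p_{j+1}=b_j/(r+N)$. The common factor $(r+N)$ cancels in $\pi_1^2/\pi_{j+1}^2$, which is again $(r/|b_j|)^{2\alpha}$, and the same bound follows.

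For \textbf{Pearson} after $s$ completed opposite pairs, the history consists of $s$ rows $(a_i,h_i)$ with label $a_i$, together with their negatives $(-a_i,-h_i)$ with label $-a_i$. Every column and the label vector therefore have empirical mean zero. Each has entries $\pm1$ over $2s\ge2$ rows, so every empirical variance is positive. Pearson is thus defined for every coordinate, so no totalization is ever invoked, which gives the independence claim. The target correlation is $1$. For a nuisance column, the cross sum is $2b_j$ and both sums of squares equal $2s$, so $p_{j+1}=b_j/s$. The row space contributed by the negated rows adds no constraints, since $Bz=a$ on the first copy implies the same equation on the negated copy. The univariate computation then applies verbatim with $r=s$.

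The main point needing care is the coordinates with $b_j=0$. There the prime vanishes and $v_j=0$, so setting $z_j=0$ is consistent, and the interpolant never divides by zero. The second point is the direction of the exponent inequality: $|b_j|^{2-2\alpha}\le1$ holds precisely because $\alpha\ge1$ and $b_j$ is a nonzero integer. Beyond these, the only substantive computation is the Sherman--Morrison step for the multivariate prime.
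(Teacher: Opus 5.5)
Your proposal is correct and follows essentially the same route as the paper: an explicit nuisance-only interpolant $z_j=v_j/\pi_{j+1}$ on $\{j:b_j\ne0\}$ (zero elsewhere), with $|b_j|\ge1$ and $\alpha\ge1$ bounding each coordinate, the factor $(N+r)$ cancelling for the multivariate prime, and the paired Pearson primes reducing to the univariate ones with $r=s$ and no fill ever used. The only differences are cosmetic. You bound $\pi_1^2/\pi_{j+1}^2$ coordinatewise instead of multiplying a separate nuisance cost by $\pi_1^2$. You also check feasibility on the negated copy directly rather than citing the opposite-duplication invariance of the pseudoinverse.
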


\begin{proof}
For the univariate rule, $p_1=1$ and $p_{j+1}=b_j/r$.  Whenever $b_j\ne0$,
the coefficient that represents $v_j=b_j/N$ in the transformed nuisance
design has magnitude
$r^\alpha/(N|b_j|^{\alpha-1})\le r^\alpha/N$; when $b_j=0$, both quantities
vanish.  This gives a feasible nuisance coefficient vector of squared norm at
most $r^{2\alpha}/N$, and $\pi_1=1$.

For the multivariate rule, the base prime is
$p=(r,b)/(N+r)$.  Representing the same vector $v$ in the transformed
nuisance design costs at most $(N+r)^{2\alpha}/N$, because every nonzero
$b_j$ is an integer.  Multiplication by
$\pi_1^2=[r/(N+r)]^{2\alpha}$ leaves the bound $r^{2\alpha}/N$. On a history consisting of $s$ opposite pairs, all empirical means vanish.
The Pearson base primes are then exactly the univariate primes computed from
one representative of each pair.  Opposite duplication does not change the
minimum-norm coefficient, so the univariate calculation applies with $r=s$.
All active variances are positive, and hence no filled Pearson value enters
the calculation.
\end{proof}

At unit power, keeping $r\le\sqrt N/2$ makes the scalar in
Theorem~\ref{thm:nuisance-obstruction} at most $1/4$.  At least one of the two
legal target signs therefore produces constant clipped loss.  Choosing such
a sign recursively fixes the entire sequence before online play; for Pearson
priming, the chosen query is followed by its negative so that the next charged
round again begins from a paired history.  Truncation, zero-padding, and the
rounding from Hadamard dimensions to arbitrary $(T,d)$ yield the following
separation.  Appendix~\ref{app:unit-power} gives the compiler and exact
constants.

\begin{theorem}[Sparse-regret separation at unit power]
\label{thm:main}
For every $T\ge1$ and $d\ge5$, each of the unit-power multivariate LLS-prime
and univariate-prime predictors admits a deterministic realizable sequence
such that
\[
\Reg_T^{\rm clip}
\ge \frac{1}{15}\min\{T,\sqrt d\}.
\]
For the Pearson-prime predictor and $T\ge4$, a single deterministic sequence
gives the same lower bound simultaneously for every finite totalization.  In
all cases the comparator is $e_1$ and has zero cumulative loss.
\end{theorem}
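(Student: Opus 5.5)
The proof is a compiler that turns Proposition~\ref{prop:hadamard-cost} and Theorem~\ref{thm:nuisance-obstruction} into a fixed sequence. First the Hadamard dimension: given $d\ge5$, let $N$ be the largest power of two with $N+1\le d$, so $N\ge4$ and $N\ge (d-1)/2\ge d/3$ for $d\ge5$. Unused coordinates are zero-padded; zero columns get zero primes, receive zero weight, and do not affect $B^\dagger a$. Set the number of charged rounds to $m=\min\{T,\lfloor\sqrt N/2\rfloor\}$ for the univariate and multivariate rules, and $m=\min\{\lfloor T/2\rfloor,\lfloor\sqrt N/2\rfloor\}$ for Pearson, where each charged query is followed by its negation. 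Since $N\ge4$, we have $\lfloor\sqrt N/2\rfloor\ge1$.

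The sequence is built greedily. At charged round $r+1\le m$ the history is $X=[a,H_r]$ (or, for Pearson, $s=r$ opposite pairs of such rows). We use distinct rows of $H$, which is possible since $m\le N$. With $\alpha=1$, Proposition~\ref{prop:hadamard-cost} gives $\pi_1^2\kappa\le r^2/N\le1/4$. Here $\kappa\le$ the squared norm of the exhibited feasible interpolant, and $a\in\operatorname{col}(B)$ because $H_r$ has full row rank. If $\pi_1=0$ the formula still gives $w_1=0$, so no separate case is needed. Theorem~\ref{thm:nuisance-obstruction} then gives a sign $\sigma$ such that the example $x=(\sigma,h)$, with $h$ the next Hadamard row, incurs clipped loss at least $(1+1/4)^{-2}=16/25$. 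We append it, setting $a_{r+1}=\sigma$.

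For Pearson, the sign choice is made simultaneously for all finite totalizations. Because the proposition's bound holds uniformly, the average over the two signs exceeds $16/25$ for each totalization. To get one sequence for all totalizations, we observe that on paired histories with $s\ge1$ all Pearson values are defined, so the primes and hence the predictions are totalization-independent. The only exception is the first query, where the history is empty and the prediction is $0$ by convention, giving loss $1$ for either sign. The negated rounds contribute nonnegative loss and are simply not counted. We also confirm that the first charged query on an empty history behaves correctly for every rule, since the prediction is $0$.

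Summing gives $\Reg_T^{\rm clip}\ge \tfrac{16}{25}m$. What remains is to check $\tfrac{16}{25}m\ge\tfrac1{15}\min\{T,\sqrt d\}$. For the non-Pearson rules, if $T\le\lfloor\sqrt N/2\rfloor$ the claim is immediate. Otherwise use $\lfloor\sqrt N/2\rfloor\ge \sqrt N/4$ (valid since $\sqrt N/2\ge1$) and $\sqrt N\ge\sqrt{d/3}$, which gives $\tfrac{16}{25}\cdot\tfrac{\sqrt d}{4\sqrt3}\approx0.092\sqrt d\ge\sqrt d/15$. For Pearson, $T\ge4$ gives $\lfloor T/2\rfloor\ge T/3$, and $\tfrac{16}{25}\cdot\tfrac T3\ge T/15$. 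The main obstacle I expect is exactly this constant bookkeeping together with the Pearson totalization-independence at small $s$. The first step there is to verify that for $s\ge1$ paired histories every active column and the label vector have positive variance, which holds because the entries are $\pm1$ and the pairs are opposite.
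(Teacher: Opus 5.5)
Your proposal is correct and follows essentially the paper's route: Hadamard prefixes, a greedy two-sign offline compiler, zero-padding to the largest power of two $N\le d-1$, and query--negation pairs for Pearson. It differs only in two respects. You feed the sharper nuisance-only certificate of Proposition~\ref{prop:hadamard-cost} through Theorem~\ref{thm:nuisance-obstruction} to obtain a uniform per-round loss of $16/25$, whereas the appendix settles for $1/4$ (multivariate) and $9/16$ (univariate and Pearson). And for Pearson you charge the empty-history query instead of discarding a seed pair, as the paper does in its shared powered construction.

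Two small corrections, neither of which affects the argument. Under an arbitrary finite totalization, padded Pearson columns receive arbitrary fills rather than zero primes; this is harmless because they multiply identically zero features. The membership $a\in\operatorname{col}(B)$ should be justified by the explicit certificate supported on the nonzero primes, not by full row rank of $H_r$, since primes with $b_j=0$ vanish.
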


\begin{corollary}[Failure of uniform sparse-logarithmic regret]
\label{cor:no-sparse-log}
Under the past-only protocol of Section~\ref{sec:setup}, none of the three
unit-power rules admits a universal constant $C$ such that
$\Reg_T^{\rm clip}\le Ck\log(ed/k)$ for every $d,k,T$, support $S$, and
bounded sequence realizable by $u_S$.  This already fails for $k=1$; for
Pearson priming the conclusion is uniform over every finite totalization.
The same is therefore true for the unmodified raw predictions.
\end{corollary}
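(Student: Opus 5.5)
The corollary follows from Theorem~\ref{thm:main} by contradiction with $k=1$. I will suppose that for one of the three unit-power rules there is a universal constant $C$ with $\Reg_T^{\rm clip}\le Ck\log(ed/k)$ for all $d,k,T$, all supports $S$, and all bounded sequences realizable by $u_S$. Specializing to $k=1$ and $S=\{1\}$ gives $u_S=e_1$, and the bound becomes $\Reg_T^{\rm clip}\le C\log(ed)$ on every sequence with $x_t\in[-1,1]^d$ and $y_t=x_{t,1}$. The sequences of Theorem~\ref{thm:main} are exactly of this type: they are deterministic, bounded, and realizable with zero loss by $e_1$. So for every $T\ge 4$ and $d\ge 5$ the hypothesis would give
\[
\frac{1}{15}\min\{T,\sqrt d\}\le \Reg_T^{\rm clip}\le C\log(ed).
\]

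To reach a contradiction I will choose the dimension and horizon together. Take $d=m^2$ and $T=m$ with $m\ge 4$, so that $d\ge 16\ge5$ and $T\ge4$. The left side then equals $m/15$, while the right side equals $C\log(em^2)=C(1+2\log m)$. Since $m/(1+2\log m)\to\infty$, the inequality fails for large $m$, so no such $C$ exists. Any choice with $\min\{T,\sqrt d\}$ growing polynomially in $d$ works just as well, for example $T\ge\sqrt d$.

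For Pearson priming I need the conclusion uniformly over finite totalizations. Theorem~\ref{thm:main} supplies, for each $(T,d)$ with $T\ge4$, a single sequence that achieves the lower bound simultaneously for every finite totalization. Hence the same choice of $(T,d)$ refutes the bound for every totalization at once; the order of quantifiers is harmless, since the contradiction holds for each totalization separately anyway.

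For the unclipped claim: each $y_t$ lies in $[-1,1]$, so $(\clip(\widehat y_t)-y_t)^2\le(\widehat y_t-y_t)^2$ for every $t$. Summing over rounds shows that the raw regret is at least $\Reg_T^{\rm clip}$, so a sparse-logarithmic bound on raw regret would imply one on clipped regret, which has just been ruled out. There is no real obstacle here. The only points needing care are checking that the constructed sequences satisfy all the hypotheses of the regret class (coordinatewise boundedness, realizability by $e_1$, and $T\ge4$ in the Pearson case) and choosing $(T,d)$ so that $\min\{T,\sqrt d\}$ outgrows $\log(ed)$.
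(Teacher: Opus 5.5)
Your proposal is correct and follows essentially the same route as the paper: specialize to $k=1$, take $d$ of order $T^2$ in Theorem~\ref{thm:main} so the lower bound grows linearly in $T$ while $C\log(ed)=O(\log T)$, and pass to raw predictions because clipping cannot increase square loss. Your explicit choice $T=m$, $d=m^2$ with $m\ge4$ meets the theorem's side conditions ($d\ge5$, and $T\ge4$ for Pearson), and the single totalization-independent Pearson sequence gives the required uniformity.
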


Indeed, take $d=\Theta(T^2)$ in Theorem~\ref{thm:main}.  Its lower bound is
linear in $T$, whereas the proposed benchmark is only $O(\log T)$.  Clipping
cannot increase square loss, so the raw predictor cannot have a smaller
worst-case loss.  The witnesses in Theorem~\ref{thm:main} may depend on the
rule; the next result gives one sequence that works for all three and for
every fixed prime power.

After $s$ completed query--recovery pairs, let $b=H_s^\top a$.  The three
base-prime vectors obey
\[
p^{\UNI}=p^{\PEAR}=(1,b/s),
\qquad
p^{\LLS}=\frac{s}{N+s}\,p^{\UNI}.
\]
Raising them to a common power preserves the positive global factor, which
cancels from the second minimum-norm fit.  The three powered rules therefore
return exactly the same coefficient vector on every charged query.  A
\emph{three-rule selector} may switch among these rules, possibly at random
and after seeing the current input, or take a convex mixture of either their
coefficient vectors or their individually clipped predictions.

\begin{theorem}[One shared witness for powered rules]
\label{thm:powered}
Fix $\alpha\ge1$.  For every integer $M\ge1$, there is a power of two $N$
satisfying $8M^{2\alpha}\le N<16M^{2\alpha}$.  Set $d=N+1$ and $T=2M$.
There exists a deterministic realizable sign sequence, fixed before play and
independent of the prime rule, selector, and finite Pearson totalization, such
that each of the three clipped powered rules and every three-rule selector
satisfies
\[
\Reg_T^{\rm clip}\ge\frac{49M}{64}=\frac{49T}{128}
\]
for every realization of its internal randomness.
\end{theorem}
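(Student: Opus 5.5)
The plan is to build the sequence as $M$ query--recovery pairs on the Hadamard design of Section~\ref{sec:results}, with $N$ a power of two in $[8M^{2\alpha},16M^{2\alpha})$. Such an $N$ exists because $[8M^{2\alpha},16M^{2\alpha})$ is a half-open interval of ratio two. Round $2s+1$ is a charged query $x=(\sigma_{s+1},h_{s+1})$, where $h_{s+1}$ is the $(s+1)$-st row of $H$ and $\sigma_{s+1}\in\{-1,1\}$ is to be chosen. Round $2s+2$ is the recovery example $-x$ with label $-\sigma_{s+1}$. Every label equals the first coordinate, so $e_1$ has zero loss and regret equals learner loss. I will charge loss only on query rounds and discard the recovery rounds as nonnegative.

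The first step is to justify the displayed identity $p^{\UNI}=p^{\PEAR}=(1,b/s)$ and $p^{\LLS}=\frac{s}{N+s}p^{\UNI}$ on a paired history of $s$ pairs. Opposite duplication doubles each column inner product and each squared column norm, so the univariate ratios equal those of the representative history $[a,H_s]$. All empirical means vanish, so Pearson correlations reduce to cosines. Each Hadamard column of $H_s$ has norm $\sqrt s$, and $\|a\|^2=s$, so these cosines are again $(1,b/s)$. The variances are positive, so no totalization value enters. For multivariate LLS, the rows of $[a,H_s]$ are orthogonal with squared norm $N+1$. Hence $X^\dagger y=X^\top(XX^\top)^{-1}y=(s,b)/(N+1)$, and duplication does not change this. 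I will recompute the stated constant $N+s$ against $N+1$; either way it is a positive scalar times $p^{\UNI}$.

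Raising a vector to a signed power multiplies it by the positive factor $c^\alpha$, and $D\mapsto cD$ leaves $D(XD)^\dagger$ invariant. So all three powered rules output the same $w$ before round $2s+1$. This $w$ is fixed before $x_{2s+1}$ is revealed, so any selector, any convex mixture of coefficients, and any mixture of clipped predictions sees the same predictions on the two candidate queries. This holds for every realization of internal randomness. It lets me choose $\sigma_{s+1}$ adversarially, and deterministically, against this common $w$. The choice depends only on the history, which is itself rule independent.

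The second step bounds the loss on each charged round. Proposition~\ref{prop:hadamard-cost} with $r=s\le M-1$ gives $\pi_1^2\kappa\le s^{2\alpha}/N\le M^{2\alpha}/(8M^{2\alpha})=1/8$. Theorem~\ref{thm:nuisance-obstruction} then applies, since $y=a\in\operatorname{col}(B)$ whenever the powered primes of the nuisance coordinates $j$ with $b_j\ne0$ suffice to represent $v$; this is the feasibility used in that proposition. It yields a sign with clipped loss at least $(1+1/8)^{-2}=64/81$. That bound is for the query row $(\sigma,h)$. The theorem allows any nuisance row $h$, and I apply it to the history itself rather than to its representative, which is legitimate because paired duplication leaves $z$ unchanged. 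Summing over $M$ queries gives at least $64M/81\ge 49M/64$. The stated constant $49/64=(7/8)^2$ suggests the authors instead use $w_1=\pi_1^2\kappa/(1+\pi_1^2\kappa)\le 1/8$ together with Lemma~\ref{lem:two-sign}, giving $(1-w_1)^2\ge(7/8)^2$, which I will match.

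I expect the main obstacle to be the bookkeeping on selectors. For mixtures of clipped predictions, the two-sign averaging must hold for each component, and the per-component clipped losses are all equal, so the bound survives convexity by Jensen. Nonetheless, a worst sign must be chosen uniformly across realizations. This works only because every component prediction is identical. I will therefore state carefully that the common coefficient vector makes the prediction a deterministic function of the history and the query, independent of randomness.
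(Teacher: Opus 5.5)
Your argument is the paper's proof: a paired Hadamard witness, base primes that agree up to a positive scalar, cancellation of that scalar in the powered refit, greedy two-sign compilation against the common coefficient, and the observation that every selector or mixture outputs that same prediction on each charged query. There is one local error to fix and one misdirected remark.

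\textbf{The multivariate computation is wrong as written.} The rows of $[a,H_s]$ are not orthogonal: rows $\ell\ne\ell'$ have inner product $a_\ell a_{\ell'}$. So $X_sX_s^\top=NI_s+aa^\top$, not $(N+1)I_s$.

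Proportionality still holds, but for a different reason. The vector $a$ is an eigenvector of $NI_s+aa^\top$ with eigenvalue $N+s$. Hence $X_s^\dagger a=X_s^\top a/(N+s)=(s,b)/(N+s)=\frac{s}{N+s}p^{\UNI}$, which is the paper's constant. Your claim that it is a positive multiple of $p^{\UNI}$ \emph{either way} needs exactly this observation. Since $X_s$ has full row rank, $X_s^\top(X_sX_s^\top)^{-1}a$ is a multiple of $X_s^\top a$ only when $a$ is an eigenvector of $X_sX_s^\top$.

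\textbf{The Jensen remark points the wrong way.} Convexity of the square loss upper-bounds the loss of a mixture. The selector step is valid only because the three raw predictions coincide, which you also state.

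\textbf{Your constant is sharper than the paper's.} You use the exact obstruction formula $w_1=\pi_1^2\kappa/(1+\pi_1^2\kappa)\le1/9$, which gives $64/81$ per charged query and implies the stated $49/64$. The paper uses the cruder bound $w_1=\pi_1^2J\le s^{2\alpha}/N\le1/8$ from the target-mass identity. It obtains $64/81$ only in its ridge analogue.

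\textbf{Two small omissions.} State the $s=0$ query explicitly: the history is empty, the coefficient is zero for every rule and every totalization, and either sign loses one. Also note that $N\ge 8M^{2\alpha}\ge M$, so enough Hadamard rows exist.
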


The witness alternates a charged query with its exact negative.  The paired
state makes the three refits identical, while Proposition~\ref{prop:hadamard-cost}
keeps the target weight bounded away from one; the two-sign argument then
selects a common adverse sign.  The construction is pathwise, so allowing a
selector to depend on the current input or on its random tape does not help.
There is also an oblivious distributional version: independent Rademacher
query signs give expected clipped regret at least $49M/64$, even after
conditioning on the selector's random tape.  Appendix~\ref{app:powered}
contains the exact equivalence and compilation arguments.

Since $d=\Theta_\alpha(T^{2\alpha})$, Theorem~\ref{thm:powered} rules out the
sparse-logarithmic benchmark for every fixed $\alpha\ge1$, including the
squared-prime choice $\alpha=2$.  The exponent is common and fixed throughout
the sequence; the theorem does not cover procedures that tune it online or
add an external correction to the primed prediction.

%% file: sections/05_rank_frontier.tex
\section{Rank controls loss and dimension dependence}
\label{sec:frontier}

The Hadamard constructions can keep nuisance interpolation cheap only while
the design continues to acquire new directions.  More generally, a predictor
that interpolates the completed history is already forced to be correct on
its row span.  Rank therefore measures the geometric resource consumed by
each loss-producing round.

\begin{theorem}[Rank bound for history interpolation]
\label{thm:rank-adaptive}
Let $x_t\in[-1,1]^d$, and suppose that a fixed comparator $u\in\R^d$
realizes the labels, $y_t=x_t^\top u\in[-1,1]$.  Assume that the online
predictor outputs $w_t$ satisfying $X_{<t}w_t=y_{<t}$ and predicts
$\clip(x_t^\top w_t)$.  Then
\[
\sum_{t=1}^T\bigl(\clip(x_t^\top w_t)-y_t\bigr)^2
\le4\operatorname{rank}(X_{1:T})
\le4\min\{T,d\}.
\]
For every fixed $\alpha\ge1$, the bound applies to the powered univariate and
multivariate priming rules on sequences realizable by $e_1$.  It also applies
to powered Pearson priming under any target-preserving totalization.
\end{theorem}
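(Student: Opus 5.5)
The argument has two parts. The first is a general rank-counting bound for any predictor that interpolates its completed history. The second checks that each powered priming rule interpolates its history on $e_1$-realizable sequences, with Pearson requiring target preservation.

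\textbf{Step 1 (rank counting).} Let $V_t=\operatorname{row}(X_{<t})$ and call round $t$ \emph{innovative} if $x_t\notin V_t$. The number of innovative rounds is exactly $\operatorname{rank}(X_{1:T})\le\min\{T,d\}$, because each innovative round raises $\dim V_t$ by one and the other rounds leave it unchanged.

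On a non-innovative round write $x_t=X_{<t}^\top\lambda$. Then
\[
x_t^\top w_t=\lambda^\top X_{<t}w_t=\lambda^\top y_{<t}=\lambda^\top X_{<t}u=x_t^\top u=y_t .
\]
Since $|y_t|\le1$, clipping leaves $y_t$ unchanged and the loss is zero. On an innovative round both $\clip(x_t^\top w_t)$ and $y_t$ lie in $[-1,1]$, so the loss is at most $4$. Summing over rounds gives the stated bound.

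\textbf{Step 2 (interpolation for the priming rules).} We need $X_{<t}w_t=y_{<t}$ whenever $t\ge2$; the empty history is vacuous. From Equation~\eqref{eq:primed-predictor}, $X_{<t}w_t=AA^\dagger y_{<t}$ with $A=X_{<t}D_t$, so it suffices to show $y_{<t}\in\operatorname{col}(A)$.

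With $y_{<t}=X_{<t}e_1$ we have $A e_1=\pi_{t,1}y_{<t}$. It therefore suffices to have $\pi_{t,1}\neq0$, or else $y_{<t}=0$, in which case interpolation is trivial.

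For $y_{<t}\neq0$ we check $p_{t,1}\ne0$ for each rule; then $\pi_{t,1}\ne0$ because $\alpha\ge1$ preserves nonzeroness.
\begin{itemize}
\item \emph{Univariate.} Here $p_{t,1}=\|y_{<t}\|^2/\|y_{<t}\|^2=1$.
\item \emph{Pearson.} When the correlation is defined, it equals $1$ because the target column coincides with the labels. When it is undefined, target preservation gives a nonzero filled value by definition.
\item \emph{Multivariate.} Here $p_{t,1}=e_1^\top X^\dagger Xe_1=\|P_{\operatorname{row}(X)}e_1\|^2$, where $X=X_{<t}$. This is positive because $Xe_1=y_{<t}\neq0$ forces $e_1\not\perp\operatorname{row}(X)$.
\end{itemize}

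\textbf{Main obstacle.} The rank counting is routine. The delicate point is Step 2: confirming that the refit interpolates in every degenerate case. The multivariate case needs the projection identity, and the Pearson case depends exactly on target preservation. This is why the totalization hypothesis appears only in the upper bound, while under a non-preserving totalization that sets $p_{t,1}=0$ interpolation can fail.
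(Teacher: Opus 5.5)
Your proof is correct and follows the paper's two-part structure. The rank counting is the same: zero loss whenever $x_t\in\operatorname{rowspan}(X_{<t})$, and loss at most $4$ on each of the at most $\operatorname{rank}(X_{1:T})$ innovative rounds. The interpolation check also goes through $X_{<t}w_t=AA^\dagger y_{<t}$, and your univariate and Pearson arguments coincide with the paper's.

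The only difference is the multivariate certificate. You show that the target prime itself is positive, $p_{t,1}=\|X^\dagger Xe_1\|_2^2>0$, so the single column $\pi_{t,1}y_{<t}$ interpolates and all three rules reduce to the one criterion $\pi_{t,1}\ne0$. The paper instead uses the whole first-stage fit, taking $z_i=p_i/\pi_i$ on $\supp(p)$ so that $X\diag(\pi)z=Xp=y$. That route needs only $y\in\operatorname{col}(X)$, so it shows powered multivariate priming interpolates for any realizing comparator, not just $e_1$.
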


\begin{proof}
Suppose that $x_t^\top$ belongs to the row span of $X_{<t}$, and write
$x_t^\top=c^\top X_{<t}$.  Interpolation and realizability give
$x_t^\top w_t=c^\top X_{<t}w_t=c^\top y_{<t}
=c^\top X_{<t}u=x_t^\top u=y_t$.
Positive loss is therefore possible only when $x_t$ adds a new row-space
direction.  There are at most $\operatorname{rank}(X_{1:T})$ such rounds,
and clipping bounds the squared loss on each round by four.

It remains to verify interpolation for the listed priming rules.  Write
$X=X_{<t}$ and $y=y_{<t}$.  If $y=0$, the second-stage least-norm fit is
zero.  Otherwise, the target prime of the powered univariate rule is one.
For the multivariate rule, let $p=X^\dagger y$ and set
$\pi_i=\operatorname{sign}(p_i)|p_i|^\alpha$.  Defining $z_i=p_i/\pi_i$
on $\supp(p)$ and $z_i=0$ elsewhere gives
$X\diag(\pi)z=Xp=y$.  For a target-preserving Pearson totalization,
$\pi_1\ne0$, and the transformed target column $\pi_1y$ alone interpolates
$y$.  Thus each second-stage predictor interpolates its history.
\end{proof}

The bound adapts to the realized geometry: if all inputs lie in an
$r$-dimensional subspace, regret is at most $4r$, independently of the
horizon.  It is not a sparsity bound.  Even when $e_1$ generates every label,
the observed design may have rank $\min\{T,d\}$. For powered univariate priming, this rank dependence is unavoidable.  A
triangular design creates one new direction on each loss-producing round and
matches the upper bound up to a constant depending only on $\alpha$.

\begin{theorem}[Powered-univariate frontier]
\label{thm:univariate-exact}
Fix $\alpha\ge1$, and let $c_\alpha$ be the smaller of $1/8$ and
$(9/16)(5/8)^{4\alpha-4}$.  For every $d,T\ge1$, with
$m=\min\{d,T\}$, there is a fixed rational sequence in $[-1,1]^d$, realizable
with zero loss by $e_1$, on which the clipped power-$\alpha$ univariate rule
has regret at least $c_\alpha m$.  At unit power, the same sequence satisfies
the sharper lower bound $1+\lfloor m/2\rfloor$.  There is also a sequence
satisfying $\lVert x_t\rVert_2\le1$ with regret at least $c_\alpha m/4$.
\end{theorem}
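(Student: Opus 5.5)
The plan is to build a triangular design in which every charged round adds exactly one new row-space direction, and to invoke Theorem~\ref{thm:nuisance-obstruction} at each such round.

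\textbf{Construction.} Use coordinates $1,\dots,m$ and zero-pad the rest. Round $t$ has the form
\[
x_t=(\sigma_t,\,h_t,\,\epsilon e_t),
\]
where $\sigma_t\in\{\pm1\}$ is the target sign (so $y_t=\sigma_t$), $h_t$ is a fixed rational nuisance pattern supported on the previously introduced nuisance coordinates, and $\epsilon e_t$ is a fresh nuisance coordinate with a small rational coefficient. The fresh coordinate $t$ makes the history matrix triangular after the target column. Consequently:
\begin{itemize}
\item the nuisance block $B$ has full row rank, so $y\in\operatorname{col}(B)$ always holds;
\item $\kappa=\lVert B^\dagger y\rVert_2^2$ can be computed via back substitution.
\end{itemize}
The sign $\sigma_t$ is then chosen recursively, as in the Hadamard compiler, to be the sign incurring loss at least $(1+\pi_1^2\kappa_t)^{-2}$.

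\textbf{Bounding the prime-weighted nuisance cost.} For the univariate rule with a nonzero label vector we have $\pi_1=1$. It therefore suffices to show that $\kappa_t$ stays bounded by a constant for each $t$; this gives per-round loss at least a constant.

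\begin{itemize}
\item \emph{Unit power.} The primes of the nuisance columns are $X(:,j)^\top y/\lVert X(:,j)\rVert^2$. I would make each new nuisance column correlate strongly with the labels, for instance by setting $h_{t,j}=\sigma_t$ times a fixed scale. The primes then stay of order one, and the cheap interpolant that places weight on the most recent fresh coordinate has cost $O(1)$.
\item \emph{Power $\alpha$.} The primes are raised to the power $\alpha$, so a nuisance column with prime $p\in(0,1)$ pays a factor $p^{-2\alpha}$ in $\kappa$. This is what produces $(5/8)^{4\alpha-4}$: I expect the relevant primes to be bounded below by $5/8$. The resulting loss bound is $(1+\kappa)^{-2}\ge(9/16)(5/8)^{4\alpha-4}$, which corresponds to $\kappa$ of order $(8/5)^{2\alpha-2}/3$ after simplification. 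The $1/8$ floor comes from charging only every other round, or from the base case.
\end{itemize}

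\textbf{Unit-power refinement.} At unit power I would aim for the sharper count $1+\lfloor m/2\rfloor$ by showing that the learned $w_1$ on alternate rounds is at most zero in the relevant direction. That gives loss at least one on those rounds, with the first round charged loss $1$ from the zero empty-history prediction.

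\textbf{Euclidean-unit variant.} Rescale so that each row is supported on at most about four effective coordinates of magnitude $\le 1/2$. Scaling a row by $c$ scales its label and hence its loss by $c^2$, which accounts for the factor $1/4$. Realizability by $e_1$ is preserved since labels are the first coordinate.

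\textbf{Main obstacle.} The hardest step is the explicit control of $\kappa_t$ in the powered case. The primes of the old nuisance columns drift as the history grows, because each column's correlation with $y$ changes. I would first try to design $h_t$ so that every nuisance column $j$ has entries only at rounds $j$ and $j+1$, making its prime an explicit two-term ratio bounded below by $5/8$. Then $B$ is bidiagonal, and $\kappa$ is a geometric-type sum that can be bounded uniformly in $t$.
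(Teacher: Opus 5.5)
Your plan depends on keeping $\kappa_t=\lVert B^\dagger y_{<t}\rVert_2^2$ bounded uniformly in $t$. That claim is never established, and at unit power it is false for the designs you suggest. When $\alpha=1$, each transformed nuisance column is $p_jX_{<t}(:,j)=P_{X_{<t}(:,j)}\,y_{<t}$, the projection of the labels onto that column. So every nuisance interpolant $Bu=y_{<t}$ satisfies $t-1=\lVert y_{<t}\rVert_2^2=\sum_j u_j\lVert P_{X_{<t}(:,j)}y_{<t}\rVert_2^2$. Since the labels are $\pm1$, each of these squared norms is at most the number of nonzero entries of the column. In your bidiagonal scheme every column has at most two nonzero entries and at most $t-1$ columns are active. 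Hence $t-1\le 2\sqrt{t-1}\,\lVert u\rVert_2$, i.e.\ $\kappa_t\ge(t-1)/4$. The two-sign bound $(1+\kappa_t)^{-2}$ is then summable and yields only $O(1)$ regret.

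Taking $h_{t,j}=\sigma_t\cdot(\text{scale})$ does not help either. The nuisance offset $h_t^\top w_{2:d}$ then flips with $\sigma_t$, so Theorem~\ref{thm:nuisance-obstruction} no longer applies. Both signs give the same loss, which you would have to compute exactly. Lower bounds on individual primes (your guessed $5/8$) cannot control $\kappa_t$ either, because the interpolation cost accumulates over all $t-1$ rows; it is large primes that make nuisance interpolation cheap. For $\alpha>1$ your fresh-coordinate idea can be rescued: with $h_t=0$, rows $(\sigma_t,\epsilon e_t)$ and rational $\epsilon^{2\alpha-2}\le 1/m$ give $\kappa_t=(t-1)\epsilon^{2\alpha-2}<1$ and loss above $1/4$ per round. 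The same design, however, has $\kappa_t=t-1$ at $\alpha=1$.

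The unit-power refinement also cannot work as described. We have $w_1=\kappa_t/(1+\kappa_t)>0$ whenever $y_{<t}\ne0$, and Lemma~\ref{lem:two-sign} only certifies average loss $(1-w_1)^2<1$. It therefore never yields the losses of size one needed for $1+\lfloor m/2\rfloor$; those require the raw prediction to land on the wrong side of zero, which needs exact control of the nuisance offset.

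The missing idea in the paper is to drop the two-sign argument for this theorem altogether. Labels are constantly $1$, and nuisance coordinate $k$ equals $1/(k-1)$ before round $k$, $-1$ at round $k$, and $0$ afterwards. Before round $t$, expired coordinates have prime exactly zero and live ones have prime $k-1$. Every transformed column is then $(k-1)^{\alpha-1}\one$, so the refit is an explicit collinear minimum-norm problem. The trap entry $-1$ gives $1-\widehat y_t=t(t-1)^\beta/S_t$ with $S_t=1+\sum_{j=t-1}^{m-1}j^\beta$. This is at least $1$ for $t\ge(m+2)/2$ at unit power, and at least $\tfrac32(5/8)^\beta$ on the last quarter of rounds in general. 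The $5/8$ comes from $(t-1)/m$, and the $1/8$ from the case $m<8$.

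Your Euclidean step is fine in spirit but needs two facts. First, univariate primes and the second-stage coefficient are invariant under scaling inputs and labels. Second, $(\clip(cq)-cy)^2\ge c^2(\clip(q)-y)^2$ for $0<c\le1$ and $|y|\le1$.
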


The active $m\times m$ block has $y_t=x_{t,1}=1$ and, for $2\le k\le m$,
\[
x_{t,k}=
\begin{cases}
1/(k-1), & t<k,\\
-1,      & t=k,\\
0,       & t>k.
\end{cases}
\]
Before round $t$, expired coordinates have zero prime, while future
coordinates retain reciprocal scales.  With $\beta=2\alpha-2$, the
least-norm split yields
\[
1-\widehat y_t
=\frac{t(t-1)^\beta}{1+\sum_{j=t-1}^{m-1}j^\beta}.
\]
This error is bounded away from zero on a constant fraction of the final
rounds, giving the claimed $\Omega_\alpha(m)$ loss.  At $\alpha=1$ the same
expression yields the sharper constant, and scaling the active rows by one
half enforces $\lVert x_t\rVert_2\le1$ at only constant-factor cost.
Appendix~\ref{app:univariate} contains the pseudoinverse calculation and the
endpoint estimates.

Together with Theorem~\ref{thm:rank-adaptive}, the lower bound shows that the
clipped worst-case regret of powered univariate priming is
$\Theta_\alpha(\min\{T,d\})$, both with coordinatewise bounded inputs and
under the additional Euclidean normalization.  The same coordinatewise
order holds for unit-power Pearson priming, but its proof requires a
different construction: constant labels would make every Pearson
correlation undefined and leave the result dependent on the fill convention.

\begin{theorem}[Exact Pearson frontier at unit power]
\label{thm:pearson-exact-frontier}
For every $T,d\ge1$, there is one fixed rational sequence satisfying
$x_t\in[-1,1]^d$ and $y_t=x_{t,1}$ such that, for every finite deterministic
past-only Pearson totalization, the unit-power clipped predictor has
\[
 \Reg_T^{\rm clip}(e_1)\ge\frac1{131584}\min\{T,d\}.
\]
For each target-preserving totalization, its worst-case clipped regret over
this realizable class is therefore $\Theta(\min\{T,d\})$.
\end{theorem}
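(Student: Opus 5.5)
The upper-bound half is immediate from Theorem~\ref{thm:rank-adaptive}. Under a target-preserving totalization the second-stage fit interpolates the history, so the clipped regret is at most $4\operatorname{rank}(X_{1:T})\le 4\min\{T,d\}$. The work is therefore the lower bound $\Omega(\min\{T,d\})$, and it must hold for every finite totalization at once. The plan is to build, following the abstract's ``seeded affine-Hadamard'' description, a sequence on which every Pearson correlation that enters the computation is actually defined. Then the fill convention never matters. After that we run the nuisance-interpolation obstruction (Theorem~\ref{thm:nuisance-obstruction}) block by block. The Hadamard instantiation alone only gives $\sqrt d$, because $\pi_1^2\kappa\le r/N$ forces $r\lesssim\sqrt N$. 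To reach linear order we instead use $\Theta(m/K)$ disjoint blocks of nuisance coordinates, with $m=\min\{T,d\}$ and $K$ a fixed constant (a small power of two, e.g.\ $K=256$ so that the constant $131584\approx 2\cdot 256^2+\dots$ is plausible). Each block is a fresh $K\times K$ Hadamard copy, and each block is charged $\Theta(\sqrt K)=\Theta(1)$ rounds.

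The construction runs in three steps.

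\textbf{Step 1 (seeding).} We begin with a few rows whose labels and all active columns have positive empirical variance. Every later prefix then has a defined Pearson correlation for every column that matters. Columns of future blocks are kept identically zero until activated, so their prime is a filled value. Zero columns contribute nothing to $X\Pi$, so that fill is irrelevant. This is the point where ``every finite totalization'' is handled: any coordinate whose prime is a fill value is a zero column of the history.

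\textbf{Step 2 (affine pairing).} Within the current block, each charged query $x=(\sigma,h)$ is followed by a recovery row. The recovery row is chosen so that the empirical means of the label and of the active nuisance columns return to a controlled value, which is zero in the Hadamard pairing of Proposition~\ref{prop:hadamard-cost}. The Pearson primes are then affine rescalings of the univariate primes. Expired blocks must be handled as well. Their columns are fixed afterward, either by giving them an exact Pearson correlation of controlled sign or by rendering them harmless. The second stage can also use old-block nuisance columns to interpolate, and this only lowers $\kappa$, which helps. So the bound $\pi_1^2\kappa\le (\text{const})\cdot r/K$ comes from exhibiting a feasible nuisance interpolant supported on the current block and on the already-interpolating old blocks. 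Here $r$ is the number of charged rounds within the block, kept at most $c\sqrt K$. We need $|\pi_1|\le 1$ since it is a correlation, and a lower bound on $|\pi_j|$ for the current block's nuisance columns, of order $|b_j|/(\text{const}\cdot r)$, by the same calculation as in Proposition~\ref{prop:hadamard-cost}.

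\textbf{Step 3 (sign selection).} On each charged round, Theorem~\ref{thm:nuisance-obstruction} gives a sign with clipped loss at least $(1+\pi_1^2\kappa)^{-2}\ge$ a constant. We fix that sign recursively before play. The sign chosen depends only on $b$, which is totalization-independent because all relevant quantities are defined. Hence one sequence works for all totalizations. Summing gives $\Omega(m/K)\cdot\Omega(\sqrt K)$ charged rounds with constant loss, using at most $T$ rounds and $d$ coordinates. Padding handles general $(T,d)$ and the small cases $T,d<K$, where a trivial bound of one lossy round applies.

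\textbf{Main obstacle.} The hard part is Step~2's interaction across blocks. When a new block begins, the old history must not make the target column itself cheap relative to the nuisance interpolant. It must also not make the Pearson correlation of the new block's columns small. The means and variances are global over the whole history, so previous blocks' rows dilute the new columns' correlations by roughly $\sqrt{r/t}$. I would first try to offset this by making the new block's entries $\pm1$ on its own rows and $0$ elsewhere. Then I would verify directly that the resulting $\pi_j$ scale like $b_j/\sqrt{r t}$ while $\pi_1$ scales like $1$, and check that the product bound $\pi_1^2\kappa\lesssim t\,r/(K\cdot r)$ still closes. If it does not, I would use the affine shift in the seed to renormalize the centered columns.
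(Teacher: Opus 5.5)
Your upper-bound half is correct and matches the paper. The lower bound has a genuine gap. Constant-size disjoint Hadamard blocks cannot give linear regret, and the cross-block dilution you flag at the end is fatal; it cannot be offset.

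\textbf{Why the block design fails.} In your design the target column equals the label vector, so $\pi_1=1$ on every charged round. Every unit-power Pearson prime satisfies $|\pi_j|\le1$. Hence
$\kappa=\min\{\sum_j w_j^2/\pi_j^2 : X_{\rm nui}w=y\}\ge\min\{\|w\|_2^2 : X_{\rm nui}w=y\}$,
and this minimum must interpolate every historical constraint, not only the current block's. In the design you settle on (block-supported $\pm1$ Hadamard rows, zero elsewhere), the distinct past rows are orthogonal with squared norm $K$. So $\kappa\ge t/K$, where $t$ is the total number of distinct past queries over all blocks.

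It follows that $w_1\ge t/(t+K)$. The two-sign certificate $(1+\kappa)^{-2}\le K^2/(t+K)^2$ is then summable in $t$. Your argument can therefore certify only $O(K)$ total loss, not $\Omega(m/\sqrt K)$. The actual losses are also summable: by the block-diagonal structure, the current-block nuisance coefficient is $(1-w_1)$ times a prime-weighted interpolant whose value on the new row is bounded in terms of $K$.

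Concretely, your Step~2 assumes $|\pi_j|\gtrsim|b_j|/r$ and a per-block bound $\pi_1^2\kappa\lesssim r/K$. Once earlier blocks exist, the primes are $b_j/\sqrt{rt}$ and $\kappa$ grows with $t$. Separately, the unseeded unit-power Hadamard cost is $r^2/N$, not $r/N$.

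\textbf{The missing idea.} The nuisance rows must be dense: they should cover $\Theta(\min\{T,d\})$ coordinates with primes of nearly equal constant size. Then the nuisance Gram grows with the dimension while the number of distinct constraints stays a small fraction of it. The paper does this as follows.
\begin{enumerate}
\item It uses one Sylvester matrix of order $L$, in active dimension $1+2L$, and a seed of $B_0=128L$ label-balanced blocks, all with sign pattern $h_0=\mathbf 1$. The seed makes every nuisance prime proportional to $B_0+z_{r,j}$, hence nearly uniform. In the refit the seed is a single repeated constraint, so it adds no interpolation cost.
\item A sum/difference change of coordinates within each nuisance pair removes the block-mean components. What remains is minimum-norm interpolation of the all-one response by rows $(1,\gamma_{r,1}s_1,\ldots,\gamma_{r,L}s_L)$.
\item The nuisance Gram of these rows is $\Lambda_rI+E_r$, with $\Lambda_r\ge L/24$ and, by character closure, off-diagonal entries of magnitude below $1/1024$.
\item Gershgorin then gives $|\widehat y|<26n/L<1/2$ on each of the $R=L/64$ fixed positive-label queries.
\end{enumerate}
No sign selection is needed. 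Because all variances on charged rounds are positive, every prime used there is independent of the totalization.
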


The construction seeds the history with balanced label pairs and then adds
fresh Hadamard directions.  Each nuisance feature combines a label-aligned
component with a blockwise mean component.  On every charged Hadamard
query, all active correlations are defined.  An orthogonal change of coefficient
coordinates removes the block-mean component from the fit, leaving a
weighted Hadamard interpolation problem whose next prediction has magnitude
less than $1/2$.  A constant fraction of the dimension can thus be charged
without consulting the totalization.  Appendix~\ref{app:pearson-exact-frontier}
gives the compression, spectral estimate, and small-size cases.
The matching upper bound uses Theorem~\ref{thm:rank-adaptive}; without
target preservation, zero-filling an undefined target prime can instead
give loss $T$ on $x_t=y_t=1$ in dimension one.

Euclidean normalization imposes a different constraint.  The Pearson
construction above is coordinatewise bounded, not Euclidean-unit.  For
multivariate priming, the following result establishes a dimension-growing
lower bound even under unit Euclidean inputs.

\begin{theorem}[Euclidean-unit multivariate lower bound]
\label{thm:multivariate-euclidean}
There is a universal $c_{\rm M}>0$ such that, for every $T,d\ge1$, a fixed
algebraic sequence with $\|x_t\|_2\le1$ and $y_t=x_{t,1}$ makes the
unit-power multivariate priming rule incur
\[
 \Reg_T^{\rm clip}(e_1)\ge c_{\rm M}\min\{T,d^{1/4}\}.
\]
The comparator $e_1$ has unit norm and zero cumulative loss.
\end{theorem}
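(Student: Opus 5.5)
The plan is to instantiate Theorem~\ref{thm:nuisance-obstruction} for the unit-power multivariate rule ($\pi=p=X_{<t}^\dagger y_{<t}$), but with a design whose rows have Euclidean norm at most one. Scaling the Hadamard construction down by $\sqrt N$ is not enough: it makes the target column tiny, and the label $y_t=x_{t,1}$ then contributes only $O(1/\sqrt N)$ loss. Instead, I keep the target entry $x_{t,1}=\sigma_t\in\{\pm\gamma\}$ with $\gamma$ a constant (say $\gamma=1/2$), and spread the nuisance mass over $n\approx d$ coordinates, each entry of order $n^{-1/2}$. The charged rounds will be the first $r\approx\min\{T,d^{1/4}\}$ rounds. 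The sign of each round is chosen recursively by the two-sign argument, so the sequence is fixed before play.

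\textbf{Step 1: a scale-aware obstruction.} Redo Theorem~\ref{thm:nuisance-obstruction} with target $\pm\gamma$. The target weight is still $w_1=\pi_1^2\kappa/(1+\pi_1^2\kappa)$, where $\kappa=\|B^\dagger y\|^2$ and $y=\gamma a$. Lemma~\ref{lem:two-sign}, applied after rescaling (or its trivial analogue when no clipping is active), gives average loss at least $\gamma^2(1-w_1)^2$. So each charged round costs $\Omega(1)$ provided that
\[
\pi_1^2\kappa\le 1
\]
uniformly over all sign histories. This is exactly the quantity I need to control.

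\textbf{Step 2: compute the multivariate prime.} Take nuisance rows $h_t=n^{-1/2}\tilde h_t$, where the $\tilde h_t$ are orthogonal with entries in a bounded algebraic set (Hadamard rows when $n$ is a power of two, or an algebraic orthogonal family after padding). The history matrix is then $X=[\gamma a,\,n^{-1/2}H_r]$ with $XX^\top=\gamma^2aa^\top+I_r$. Sherman--Morrison gives
\[
p=X^\top(XX^\top)^{-1}y=\frac{\gamma}{1+\gamma^2 r}\,X^\top a,
\]
so
\[
p_1=\frac{\gamma^2 r}{1+\gamma^2 r},\qquad p_{j+1}=\frac{\gamma\, n^{-1/2}b_j}{1+\gamma^2 r},\qquad b=H_r^\top a.
\]
The global factor cancels from the second fit. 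What remains is the nuisance interpolation cost $\kappa$ of representing $\gamma a$ by the columns $n^{-1/2}H_r\diag(b)$ (up to that factor), which amounts to solving $\sum_j \tilde h_{t,j}b_j c_j=\gamma\sqrt n\, a_t$.

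\textbf{Step 3 (main obstacle): bound $\kappa$.} The naive choice $c_j=\gamma\sqrt n/(n\,b_j)$ has squared cost $\gamma^2\sum_j 1/b_j^2$ over the coordinates with $b_j\ne0$. This is small only if the $|b_j|$ are typically large; for Hadamard rows $b_j$ is an integer in $[-r,r]$ with typical size $\sqrt r$. Meanwhile $\pi_1^2\approx1$ after the cancellation against $p_{2:}$ is accounted for correctly: the ratio $\pi_1/\pi_{j+1}$ carries a factor $\gamma r\sqrt n/b_j$. The honest target is therefore the inequality
\[
\pi_1^2\kappa\lesssim \frac{r^2}{n}\cdot\frac1{\min_j b_j^2}\quad\text{or}\quad \lesssim\frac{r^4}{n}
\]
after using $|b_j|\ge1$ on the support. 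This is exactly the source of the $d^{1/4}$ exponent: with $r\le c\,n^{1/4}$ the product stays below a constant.

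The delicate point is that zero $b_j$ kill columns. To keep the system solvable, I would use a row family in which $b_j\neq0$ for many $j$ regardless of signs, such as odd $r$ with $\pm1$ entries, which forces $b_j$ odd. Alternatively, I would supply algebraic (irrational) entries so that no cancellation to zero occurs; this is presumably why the statement says ``algebraic sequence.'' The feasible representation is then restricted to the surviving columns, and I would verify that these still span the labels, using orthogonality of the rows restricted to a large coordinate subset.

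\textbf{Step 4: compile.} Choose $n$ comparable to $d$, keep $\gamma$ fixed, and charge $r=\min\{T,\lfloor c\,d^{1/4}\rfloor\}$ rounds. Pad with zero rows for the remaining $T-r$ rounds; these contribute no loss. Each charged round gives constant loss via Step~1, which yields $c_{\rm M}\min\{T,d^{1/4}\}$. Small $d$ are absorbed into the constant by a single round with $x=(1/2,0,\dots)$ and empty history, where the prediction is $0$ and the loss is $1/4$.
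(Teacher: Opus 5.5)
\textbf{There is a genuine gap at Step~3.} The bound $\pi_1^2\kappa\le 1$, on which the whole argument rests, is false for your design. In fact $\pi_1^2\kappa$ grows with $n$.

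Your prime is correct: $p=\frac{\gamma}{1+\gamma^2r}(\gamma r,n^{-1/2}b)$ with $X=[\gamma a,n^{-1/2}H_r]$. Any nuisance-only interpolant must satisfy $n^{-1/2}H_rw_{2:}=\gamma a$, for example $w_{2:}=\gamma b/\sqrt n$. Its prime-weighted cost is $\sum_j w_{j+1}^2\pi_1^2/\pi_{j+1}^2$. You correctly note that $\pi_1/\pi_{j+1}=\gamma r\sqrt n/b_j$. Squaring this and multiplying by $w_{j+1}^2=\gamma^2b_j^2/n$ gives $\gamma^4r^2$ per active coordinate, not something like $r^2/(nb_j^2)$.

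No cheaper interpolant exists. Since $|b_j|\le r$ and $H_rH_r^\top=nI_r$,
\[
BB^\top=\frac{\gamma^2}{n^2(1+\gamma^2r)^2}\,H_r\diag(b_1^2,\ldots,b_n^2)H_r^\top
\preceq\frac{\gamma^2r^2}{n(1+\gamma^2r)^2}\,I_r .
\]
Hence $\kappa\ge\lVert y\rVert_2^2/\lambda_{\max}(BB^\top)\ge n(1+\gamma^2r)^2/r$, and so $\pi_1^2\kappa\ge\gamma^4rn$. Then $1-w_1\le(1+\gamma^4rn)^{-1}$, and your Step~1 certificate $\gamma^2(1-w_1)^2$ is $O(\gamma^2(rn)^{-2})$ per round. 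Summed over your $r\lesssim d^{1/4}$ charged rounds, it certifies only $o(1)$ regret.

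The failure is structural. Euclidean normalization removes exactly the capacity that made the Hadamard argument work:
\begin{itemize}
\item With $\pm1$ nuisance entries, $N$ columns interpolate at squared norm $r/N$, far below the cost of $e_1$.
\item On your orthonormal nuisance rows, any nuisance-only interpolant of labels $\pm\gamma$ has squared norm at least $\gamma^2 r$.
\item The multivariate prime is itself a minimum-norm interpolant, so it already concentrates on the target: $p_1=\gamma^2r/(1+\gamma^2r)$, while the nuisance primes are $O(1/(\gamma\sqrt n))$.
\end{itemize}
Adding nuisance dimensions therefore makes the refit more target-faithful, not less. Shrinking $\gamma$ only shrinks the certificate. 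Separately, your rows have $\lVert x_t\rVert_2^2=1+\gamma^2>1$. Your $d^{1/4}$ is reverse-engineered rather than derived, and ``algebraic'' in the statement refers to the square roots from Gram--Schmidt, not to avoiding $b_j=0$.

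\textbf{The missing idea is to stop charging loss through the target sign.} The paper charges loss on zero-label rounds, where $e_1$ predicts $0$.
\begin{itemize}
\item Two orthonormal build rows spanning $S=\operatorname{span}\{c,D_c^{-1}q\}$ are engineered through the moment conditions of Lemma~\ref{lem:euclidean-state-realization}. They make the first-stage prime $p=\sqrt s\,c$ and the refit $w=D_cq/\sqrt s$. On antisymmetric pair directions $f_g$, this gives $f_g^\top p=0$ but $f_g^\top w=b_0$.
\item A Helmert-type orthonormal tail is then appended: rows inside $\operatorname{span}\{f_g\}$, orthogonal to $S$, with zero target coordinate and zero label. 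This tail leaves the first-stage prime frozen, and vanilla LLS would predict $0$ on it.
\item The primed refit lies outside $S$. A Schur-complement calculation gives $\widehat y_j=F_j\mathcal B_j$ with $F_j\ge b_0$.
\item The alternating pair scales $a_g\in\{1,2\}$ force $|\mathcal B_j|\ge\frac49\sqrt{5/6}$ on every odd index, which gives constant loss on those rounds.
\end{itemize}
The exponent $1/4$ comes from the moment constraints, not from a Hadamard cost estimate. With $\sum_gV_g$ and $\sum_gQ_g^2$ bounded and $V_gQ_g^2/s=b_0^2$ for each of $K$ pairs, one gets $s=O(K^{-2})$. Then $\sum_iq_i=q_1/s$ together with $\lVert q\rVert_2^2=q_1$ requires, by Cauchy--Schwarz, at least $q_1/s^2$ coordinates; the paper uses $N=2/s^2\asymp K^4$ bulk coordinates. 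This yields $d\asymp T^4$.
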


Two orthonormal examples realize a state that spreads predictive weight
over paired nuisance coordinates.  A fixed orthonormal zero-label tail
keeps the first-stage prime unchanged, while alternating pair scales force
constant clipped loss on a positive fraction of the tail.  This yields
linear regret with $d=\Theta(T^4)$, so large input norms are not necessary
for multivariate sparse-logarithmic failure either.  The complete algebraic
construction and its Schur-complement calculation are in
Appendix~\ref{app:multivariate-euclidean}.  Unlike the univariate and
coordinatewise unit-power Pearson results, this is a lower bound rather
than an exact multivariate frontier.

%% file: sections/05_ridge_robustness.tex
\section{Ridge changes the failure mode, not the rate}
\label{sec:ridge}

A natural response to the lower bounds is to regularize the second refit.  We
consider the standard ridge estimator in the transformed coordinates.  With
$A_t=X_{<t}D_t$ and $\lambda_t>0$, it returns
\[
z_t^{\lambda_t}
=A_t^\top(A_tA_t^\top+\lambda_t I)^{-1}y_{<t},
\qquad
w_t^{\lambda_t}=D_tz_t^{\lambda_t};
\]
at $\lambda_t=0$ we recover $z_t^0=A_t^\dagger y_{<t}$.  Thus the penalty is
on the transformed coefficient $z$, not on the transformed-back coefficient
$w$. The nuisance-interpolation mechanism has an exact ridge analogue.  For one
history, split $A=[\pi_1y,B]$ and define
\begin{equation}
q_\lambda=y^\top(BB^\top+\lambda I)^{-1}y,
\qquad
w_1^\lambda=\frac{\pi_1^2q_\lambda}{1+\pi_1^2q_\lambda}.
\label{eq:ridge-target-mass}
\end{equation}
If $Bu=y$, then $q_\lambda\le\lVert u\rVert_2^2$, and $q_\lambda$ decreases
with $\lambda$.  Ridge therefore cannot restore target mass that is already
suppressed by a cheap nuisance interpolant.

\begin{theorem}[Ridge-robust univariate frontier]
\label{thm:ridge-univariate}
Fix $\alpha\ge1$ and let $c_\alpha$ be as in
Theorem~\ref{thm:univariate-exact}.  For every $d,T\ge1$, the same fixed
rational triangular sequence satisfies, simultaneously for every
nonnegative ridge schedule $(\lambda_t)$ with $\lambda_t<\infty$,
\[
\Reg_T^{\rm clip}\ge c_\alpha\min\{T,d\}.
\]
At unit power the lower bound is
$1+\lfloor\min\{T,d\}/2\rfloor$.  Under $\lVert x_t\rVert_2\le1$ and
$\lVert e_1\rVert_2=1$, the bound remains
$(c_\alpha/4)\min\{T,d\}$.  These statements hold pointwise in the schedule,
even if $\lambda_t$ is randomized or selected after observing $x_t$.
\end{theorem}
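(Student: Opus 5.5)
The plan is to reuse the triangular sequence of Theorem~\ref{thm:univariate-exact} and show that, on every charged round, the ridge prediction is no better than the Moore--Penrose one. Labels are $y_t=1$ on the active block. The first-stage univariate primes are computed before the ridge step and do not depend on $\lambda_t$, so the transformed design $A_t=X_{<t}D_t$ is exactly the same as in the unregularized analysis. Split $A_t=[\pi_1 y,B]$ with $\pi_1=1$. On the triangular history the nuisance block $B$ interpolates $y_{<t}$; this is what the Moore--Penrose calculation of Appendix~\ref{app:univariate} uses.

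\textbf{Step 1: decompose the ridge prediction.} For the current row $x_t=(1,h_t)$ I will write
\[
\widehat y_t^\lambda=w_1^\lambda+h_t^\top D z^\lambda_{2:d},
\]
where $w_1^\lambda$ is the target mass from Equation~\eqref{eq:ridge-target-mass}. The nuisance part is computed by the same Schur-complement/Woodbury step as in the $\lambda=0$ case, with $BB^\top$ replaced by $BB^\top+\lambda I$. The current row places $-1$ on the fresh coordinate $k=t$ and $1/(k-1)$ on future coordinates; expired coordinates carry zero prime.

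\textbf{Step 2: show the ridge error is at least the $\lambda=0$ error.} The target is an inequality of the form $1-\widehat y_t^\lambda\ge 1-\widehat y_t^0$ together with $\widehat y_t^\lambda\ge-1$, or at least $|1-\clip(\widehat y_t^\lambda)|\ge$ the unregularized bound. The intuition has two parts. First, $q_\lambda\le q_0$ and $q_\lambda$ is decreasing in $\lambda$, so $w_1^\lambda\le w_1^0$ and the target mass only shrinks. Second, the nuisance contribution $h_t^\top w_{2:d}$ has the sign that lowered the prediction at $\lambda=0$ (the $-1$ on coordinate $t$ penalizes it). It should therefore be monotone in $\lambda$ as well, or at worst shrink toward zero more slowly than the target mass is lost.

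To make this rigorous, I first try to diagonalize. The triangular history restricted to future coordinates has a rank-one-plus-structure Gram matrix, since all unexpired future columns are constant $1/(k-1)$ times primes on rows $<k$. I expect $BB^\top=\gamma\one\one^\top+E$ with an explicit $E$. With that form, $(BB^\top+\lambda I)^{-1}y$ can be written in closed form, and $\widehat y_t^\lambda$ becomes a rational function of $\lambda$ that is monotone in $\lambda$. From this I derive $1-\widehat y_t^\lambda\ge \min\{1,\,1-\widehat y_t^0\}$, which is clipped-robust.

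If the monotonicity fails in the middle range, the fallback is the dichotomy described in the introduction. For small $\lambda$, bound the deviation from the $\lambda=0$ value by a perturbation argument. For large $\lambda$, use $|\widehat y_t^\lambda|\le\|x_t\|\,\|D\|^2\|y\|/\lambda$, which forces the prediction toward $0$ and the loss toward $1$. A threshold argument then gives a constant loss in both regimes.

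\textbf{Step 3: sum and reduce the other statements.} Sum the per-round bounds over the same constant fraction of final rounds used in Theorem~\ref{thm:univariate-exact}. This gives $c_\alpha m$, and $1+\lfloor m/2\rfloor$ at unit power. The Euclidean version follows by the same halving of active rows, since scaling $X$ by $1/2$ is equivalent to rescaling $\lambda$, and the bound is uniform over $\lambda$. Pointwise validity for randomized or $x_t$-dependent $\lambda_t$ is then immediate: each round's bound holds for every value $\lambda_t\in[0,\infty)$, so it holds for any selection rule.

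The main obstacle is Step 2. It requires controlling the sign and monotonicity of the nuisance term under ridge, since ridge damps the nuisance fit and the target fit at different rates.
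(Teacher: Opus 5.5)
Your outer architecture matches the paper's. You reuse the triangular sequence, bound each charged round pointwise in $\lambda_t$, sum over the final quarter, and handle the Euclidean case by rescaling $\lambda$. But Step~2 is the entire content of the theorem, and you leave it as an expectation; the obstacle you name is also not the real one.

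There is no correction term $E$. Expired coordinates have prime zero. Each live coordinate $k\ge t$ equals $1/(k-1)$ on all $s=t-1$ historical rows, so after powering its transformed column is $(k-1)^{\alpha-1}\one_s$. The whole transformed history is therefore rank one, $A_t=\one_s r_t^\top$ with $\lVert r_t\rVert_2^2=S_t$, and in fact $B_tB_t^\top=(S_t-1)\one_s\one_s^\top$ exactly. Since $\one_s$ is an eigenvector of $A_tA_t^\top$ with eigenvalue $sS_t$,
\[
z_t^\lambda=\frac{s}{\lambda+sS_t}\,r_t=\rho_t(\lambda)\,z_t^0,
\qquad
\widehat y_t^\lambda=\rho_t(\lambda)\,\widehat y_t^0,
\qquad
\rho_t(\lambda)=\frac{sS_t}{\lambda+sS_t}\in(0,1].
\]
So ridge damps the target and nuisance parts at exactly the same rate.

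The per-round bound then follows from $\widehat y_t^0=1-\delta_t\le1$. If $\widehat y_t^0\ge0$, then $0\le\rho\widehat y_t^0\le\widehat y_t^0$. If $\widehat y_t^0<0$, then $\clip(\rho\widehat y_t^0)\le0$. In both cases $(\clip(\widehat y_t^\lambda)-1)^2\ge\min\{\delta_t^2,1\}$, which is your target inequality. The same fact, $\rho\widehat y_t^0\le1$, is what lets you reuse the clipping comparison of Lemma~\ref{lem:triangular-normalization} in the half-scaled case.

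Without this exact scalar shrinkage, your heuristics do not close the step. The claim that the nuisance contribution always has the prediction-lowering sign is false. At $\alpha=1$ it equals $(m-2t+1)/S_t$, which is positive for $t<(m+1)/2$. At $t=m$ it is $-(m-1)^{2\alpha-1}/S_m<0$, against a positive target term $1/S_m$. When the two pieces have opposite signs, knowing that each shrinks separately does not tell you which way their sum moves.

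The fallback dichotomy has the same defect. A perturbation estimate near $\lambda=0$ and an $O(1/\lambda)$ decay estimate for large $\lambda$ leave an intermediate window around the true crossover $\lambda_c(t)=sS_t$. In that window nothing you have written prevents the prediction from approaching the label, so the ``threshold argument'' has no content there.

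Two smaller points. The Moore--Penrose padding lemma does not cover ridge, so you need the easy remark that ridge assigns zero coefficient to identically zero columns. And the first-round loss of one holds because the empty-history ridge coefficient is zero for every $\lambda$.
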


The phase scale is explicit.  Before triangular round $t\ge2$, let
$s=t-1$ and
$S_t=1+\sum_{j=t-1}^{m-1}j^{2\alpha-2}$, where
$m=\min\{T,d\}$.  The entire ridge prediction is
\begin{equation}
\widehat y_t^\lambda
=\rho_t(\lambda)\widehat y_t^0,
\qquad
\rho_t(\lambda)
=\frac{sS_t}{\lambda+sS_t}
=\frac{1}{1+\lambda/\lambda_c(t)},
\qquad
\lambda_c(t)=sS_t.
\label{eq:ridge-phase-scale}
\end{equation}
For $\lambda\ll\lambda_c(t)$, the nuisance-driven Moore--Penrose prediction
persists; for $\lambda\gg\lambda_c(t)$, the prediction approaches zero and
underfits the label one.  The crossover is therefore between two failure
mechanisms, not between failure and success. The proof uses the rank-one transformed triangular history to derive
Equation~\eqref{eq:ridge-phase-scale}, then observes that shrinking a raw
prediction $q\le1$ toward zero leaves clipped loss at least
$\min\{(1-q)^2,1\}$.  Appendix~\ref{app:ridge} gives the complete proof. The resolvent bound following Equation~\eqref{eq:ridge-target-mass} also
preserves the paired-Hadamard obstruction for all three rules, although the
regularized coefficient vectors no longer coincide.

\begin{theorem}[Ridge-robust separation for all three rules]
\label{thm:ridge-three-rule}
Fix $\alpha\ge1$ and $M\ge1$, choose a power of two $N$ with
$8M^{2\alpha}\le N<16M^{2\alpha}$, and set $T=2M$ and $d=N+1$.
For each powered prime rule and each deterministic past-only ridge policy,
there is a deterministic realizable paired-Hadamard sequence on which
$\Reg_T^{\rm clip}\ge32T/81$.  The sequence may depend on the rule and
policy.  Under independent Rademacher query signs, the same bound holds in
expectation for every possibly randomized past-only ridge policy and for
every pre-input switcher or convex mixture of the three ridge-regularized
rules.  The Pearson statements are uniform over every finite totalization.
\end{theorem}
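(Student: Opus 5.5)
We plan to reuse the paired-Hadamard witness of Theorem~\ref{thm:powered}, replacing the exact target-mass identity by its ridge analogue, Equation~\eqref{eq:ridge-target-mass}. Fix a rule and a ridge policy. After $s<M$ completed query--recovery pairs, the history is $X=[\tilde a,\tilde H]$, where each charged Hadamard row appears together with its negative and $\tilde a$ is the doubled label vector. As in Proposition~\ref{prop:hadamard-cost}, the base primes satisfy $p^{\UNI}=p^{\PEAR}=(1,b/s)$ and $p^{\LLS}=\frac{s}{N+s}p^{\UNI}$ with $b=H_s^\top a$. Pearson values are defined on every active coordinate of a paired history, so no finite totalization is consulted. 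Ridge does not cancel the global factor, so the three coefficient vectors now differ. The only input we need from each rule is a lower bound on the target shortfall $1-w_1^\lambda=(1+\pi_1^2q_\lambda)^{-1}$.

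\textbf{Step 1 (target mass).} The paired nuisance vector $v=N^{-1}b$ interpolates the paired labels in $\tilde H$. By the resolvent bound stated after Equation~\eqref{eq:ridge-target-mass}, $q_\lambda\le\|u\|_2^2$ for the transformed interpolant $u$, and $q_\lambda$ decreases in $\lambda$. The duplicated history doubles nothing essential: the minimum-norm interpolant is unchanged under opposite duplication. The cost computation of Proposition~\ref{prop:hadamard-cost} therefore gives $\pi_1^2q_\lambda\le s^{2\alpha}/N\le M^{2\alpha}/N\le1/8$ for every $\lambda\ge0$ and every rule. Hence $w_1^\lambda\le1/9$, and we also have $w_1^\lambda\ge0$.

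\textbf{Step 2 (two-sign charge).} On the charged query $x^\sigma=(\sigma,h)$, where $h$ is a fresh Hadamard row, the nuisance part $b'=h^\top w^\lambda_{2:d}$ is the same for both signs. This holds because the ridge coefficient is computed from the past only, and it holds for a deterministic past-only policy and also under a randomized $\lambda$ drawn before the input. Lemma~\ref{lem:two-sign} then gives average clipped loss at least $(1-w_1^\lambda)^2\ge(8/9)^2=64/81$. The deterministic statement follows by choosing the worse sign recursively; this choice depends on the rule and policy, which the theorem permits. The recovery round $-x^\sigma$ contributes nonnegative loss. Summing over $M$ charged rounds gives $64M/81=32T/81$.

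\textbf{Step 3 (distributional, selectors).} Draw the query signs as independent Rademacher variables. Conditioning on the history and on the policy's random tape, the charged coefficient is independent of the fresh sign $\sigma$. The expectation over $\sigma$ is then exactly the two-sign average, which is at least $64/81$ for whichever rule is active. The same conditioning covers a pre-input switcher, whose choice is made before $\sigma$ is revealed. For a convex mixture of coefficient vectors, the target weight is a convex combination of values in $[0,1/9]$, so it still lies in $[0,1/9]$, and the nuisance part is still sign-independent. For a mixture of individually clipped predictions, we apply convexity of the square loss in the form of Jensen's inequality to the per-rule two-sign bounds. Linearity of expectation then gives $32T/81$.

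\textbf{Main obstacle.} The main difficulty is Step 1 for ridge, because the LLS prime carries a factor $s/(N+s)$ that no longer cancels. We must check that the bound $\pi_1^2q_\lambda\le\pi_1^2\|u\|^2$ holds scale-covariantly. Rescaling all primes by $c$ multiplies $\pi_1^2$ by $c^{2\alpha}$ and $\|u\|^2$ by $c^{-2\alpha}$, so the product is invariant even though $q_\lambda$ itself is not. The bound $s^{2\alpha}/N$ therefore transfers uniformly in $\lambda$.
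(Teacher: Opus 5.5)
Your Steps 1 and 2, and the coefficient-mixture part of Step 3, follow the paper's route (Lemma~\ref{lem:three-rule-ridge-certificate}, Proposition~\ref{prop:ridge-hadamard}, Corollary~\ref{cor:ridge-random-sign}). You use an explicit nuisance interpolant $u$ with $\pi_1^2\|u\|_2^2\le s^{2\alpha}/N$, the resolvent bound $q_\lambda\le\|u\|_2^2$ to get $w_{\lambda,1}\le1/9$ uniformly in $\lambda$, and then Lemma~\ref{lem:two-sign}.

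\textbf{The gap.} The argument fails for a convex mixture of individually clipped predictions. Jensen's inequality runs the wrong way there. With $r_R^\sigma=\clip(c_R+\sigma w_R)$, convexity of the square gives
\[
\Bigl(\sum_R\lambda_R r_R^\sigma-\sigma\Bigr)^2\le\sum_R\lambda_R\bigl(r_R^\sigma-\sigma\bigr)^2 .
\]
This is an \emph{upper} bound on the mixture's loss in terms of the per-rule losses. The per-rule lower bounds $(1-w_R)^2$ therefore say nothing about the mixture. Mixing really can beat every component. If one component outputs $+1$ and another $-1$ regardless of the sign, both with $w_R=0$, each has two-sign average loss $2$. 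Their equal mixture outputs $0$ and has average loss $1$. So a loss bound cannot be transferred through the mixture; some structural property of the predictions must be transferred instead.

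\textbf{The fix.} Transfer a sign-gap constraint, which is linear and hence preserved under convex combination. Since $\clip$ is monotone and $1$-Lipschitz, each rule satisfies $0\le r_R^+-r_R^-\le2w_R\le2/9$. The mixed predictions $r^\pm=\sum_R\lambda_R r_R^\pm$ therefore also satisfy $0\le r^+-r^-\le2/9$. Put $m_0=(r^++r^-)/2$ and $\delta=(r^+-r^-)/2\in[0,1/9]$. The conditional two-sign average is then exactly $m_0^2+(1-\delta)^2\ge(8/9)^2=64/81$, which is how the paper closes this case. Equivalently, the two-sign average is bounded below by $64/81$ on the convex set $\{r^+-r^-\le2/9\}$, and every per-rule pair, hence every mixture, lies in that set. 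The work is done by convexity of this constraint set, not by Jensen applied to the loss values.
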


The deterministic compiler requires the ridge value to be fixed before the
current target sign is revealed.  By contrast,
Theorem~\ref{thm:ridge-univariate} is pointwise in the schedule and therefore
allows the univariate ridge value to depend on the current input.  The common
three-rule statement in Theorem~\ref{thm:ridge-three-rule} is distributional.

%% file: sections/05_activation_experiment.tex
\section{The obstruction in frozen language-model representations}
\label{sec:activation-experiment}

As an exploratory diagnostic, we test the geometric mechanism on post-gating
MLP activations from frozen
Qwen2.5-7B-Instruct layers 6, 13, and 20, using Alpaca instruction records
\citep{yang2024qwen25,taori2023alpaca}.  Calibration data select six
nondegenerate target neurons per layer.  For target $j$, the bounded label is
$y_t=x_{t,j}$, so $e_j$ has zero loss and nested sets of other neurons serve as
nuisance coordinates.  We repeat the calibration-only selection, criterion,
horizon, registered dimensions, and ordering protocol on frozen Qwen3.8-27B
layers 14, 30, and 46 at corresponding relative depths
\citep{qwen38modelcard}.  The diagnostic is deliberately synthetic: labels
are individual activation coordinates, and the input order is compiled
offline to stress the priming maps.

Figure~\ref{fig:qwen38-mechanism} gives the registered Qwen3.8 dimension scan.
Increasing nuisance dimension raises normalized regret, lowers target mass, and makes
cheap nuisance-only certificates common.  On six fixed targets, target-only
median normalized regret is $0.029$ for all three rules; with $2{,}047$ real
nuisance neurons ($2{,}048$ coordinates including the target) it is $1.238$,
$0.303$, and $0.587$ for univariate, multivariate, and
Pearson priming.  Independently permuting each nuisance coordinate leaves
medians $0.799$, $0.306$, and $0.555$, so the effect does not require the
model's joint neuron correlations.  The Qwen2.5 dimension sweep exhibits the
same mechanism.
\begin{figure}[t]
  \centering
  \includegraphics[width=\linewidth]{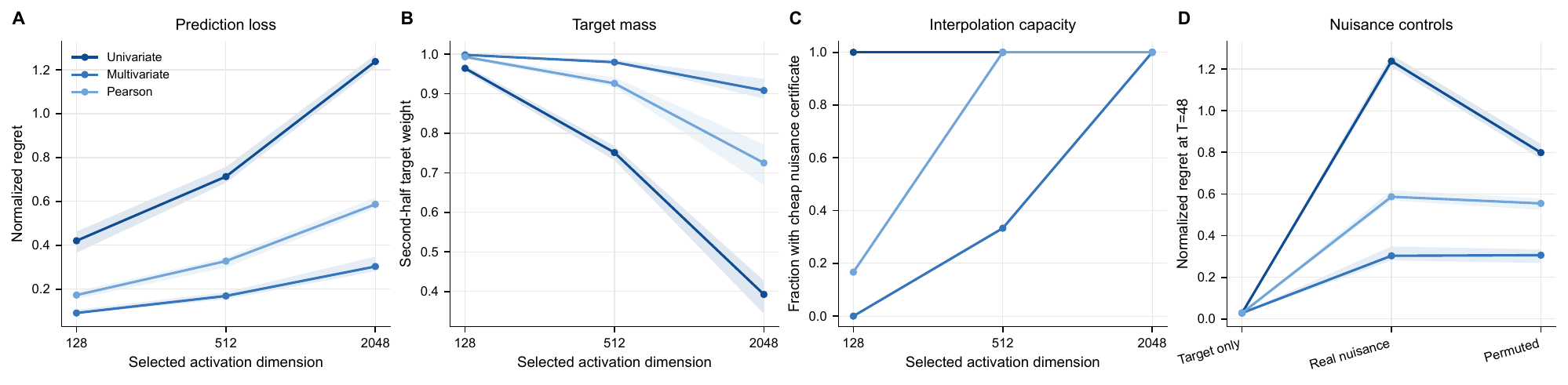}
  \caption{\textbf{Qwen3.8-27B reproduces the nuisance-capacity mechanism.}
  Across six registered targets, increasing activation dimension raises
  normalized regret (A), lowers target weight (B), and makes cheap nuisance certificates
  common (C).  Target-only, real-nuisance, and independently permuted-nuisance
  controls appear in D.  Lines and bands in A, B, and D are target-level
  medians and interquartile ranges; C reports the target fraction satisfying
  the certificate.}
  \label{fig:qwen38-mechanism}
\end{figure}
Appendix~\ref{app:activation-experiment} gives the cross-generation table,
complete Qwen2.5 diagnostic, protocol details, and numerical checks.  These are
exploratory diagnostics of the proof's geometry; they do not test whether
either model internally implements feature priming.

%% file: sections/06_related_work.tex
\section{Related work}
\label{sec:related}

\citet[Sec.~2, p.~2]{warmuth2023priming} define the prime--refit map with
univariate, Pearson, and multivariate LLS primes.  Their Open Problem~1 asks
whether any of these priming methods admits a competitive regret bound, without
fixing dimension or horizon dependence.  Because the note first compares with
$O(k\log(n/k))$ regret for averages of $k$ bounded features, we study the
corresponding uniform sparse-logarithmic interpretation and answer it
negatively.  Dimension-dependent guarantees remain possible.
Theorem~\ref{thm:powered} covers their squared-prime variant, and
Section~\ref{sec:setup} proves that signed and literal squaring give the same
second-stage prediction.

\paragraph{Sparse regret and adaptive geometry.}
The sparse-logarithmic benchmark reflects the classical contrast between
coordinatewise and Euclidean updates.  Exponentiated-gradient methods can
exploit comparators supported on few coordinates \citep{kivinen1997eg}, and
exponential weighting with truncation gives deterministic sparsity regret
bounds for online regression \citep{gerchinovitz2011sparsity}.  By contrast,
competitive ridge and least-squares methods naturally yield norm- and
dimension-dependent guarantees
\citep{cesabianchi1996quadratic,vovk1997competitive,forster1999relative,
azoury2001relative,vovk2001statistics}.  Second-order aggregation and adaptive
regularization provide other data-dependent guarantees
\citep{mcmahan2010adaptive,duchi2011adaptive,gaillard2014secondorder}.
Restricted feature observation introduces additional constraints in budgeted
statistical learning \citep{cesabianchi2011partial,hazan2012limited} and online
sparse regression \citep{foster2016online,kale2017adaptive,ito2018partial,
li2025polynomial}.  These alternatives change
the update rule, regularizer, feedback model, or structural assumptions; they
do not analyze the same closed-form prime--refit map.

In batch sparse recovery, explicit $\ell_1$ regularization, constrained
selectors, greedy recovery, and iteratively reweighted least squares provide
well-developed alternatives
\citep{tibshirani1996lasso,candes2007dantzig,needell2009cosamp,
chartrand2008reweighted,daubechies2010irls}.  Sparse solutions can also arise
from the implicit bias of a reparameterized least-squares problem under
additional design and optimization conditions \citep{vaskevicius2019implicit}.

\paragraph{Rotation-invariance and symmetry lower bounds.}
The distinction between coordinate-sensitive and rotation-invariant methods is
classical in sparse learning \citep{ng2004rotation,warmuth2005span}; more
generally, optimization geometry determines which interpolating solution an
underdetermined problem selects \citep{gunasekar2018geometry}.
The closest geometric predecessor is the Hadamard separation of
\citet{warmuth2021spindly}, which gives a sample-efficiency lower bound for
gradient-trained networks with fully connected input layers and
rotation-invariant initialization.
Feature priming is not rotation invariant: it deliberately rescales the
observed coordinates.  We retain the sparse-target Hadamard geometry but
analyze three concrete, non-invariant prime--refit maps under adversarial online
ordering and clipped square loss.  The target-mass identity and clipping lemma
provide the bridge to all three rules, while the matching rank upper bound and
rational triangular construction yield the exact powered-univariate frontier.
Related symmetry lower bounds have
also been proved for sparse logistic models with hard labels under i.i.d.
Gaussian covariates \citep{ghosh2026hardlabels}; those results concern
statistical excess risk for an invariant algorithm class rather than online
square loss for these three non-invariant algorithms.

\paragraph{Reweighting beyond one-stage priming.}
Negative results for the three one-stage rules do not preclude successful
reweighting with a different outer estimator.  Mirror-descent updates can
themselves be represented through suitable gradient-descent
reparameterizations \citep{amid2020mirror}.  In their overconstrained noisy
sparse-target model, \citet{warmuth2025rotation} establish statistical upper bounds
for non-rotation-invariant procedures including an LLS-primed estimator
followed by ridge regression at a specified regularization level.  The linear
recursive feature machine of \citet{radhakrishnan2025linrfm} instead alternates
reweighting and refitting, reduces to an iteratively reweighted least-squares
variant in the linear setting,
and has recovery guarantees for structured statistical problems; it builds on
the average-gradient-outer-product feature-learning mechanism of
\citet{radhakrishnan2024mechanism}.  These methods use iteration or
distributional assumptions absent from our adversarial online protocol.
Theorem~\ref{thm:ridge-univariate} rules out this repair on an adversarial
triangular sequence, uniformly over the regularization level; it does not
conflict with statistical guarantees under prescribed design and noise
assumptions.

%% file: sections/07_limitations.tex
\section{Scope and limitations}
\label{sec:limitations}

We study the three one-stage rules at fixed $\alpha\ge1$, together with
standard ridge regularization of their transformed second-stage coefficient.
A penalty on the transformed-back coefficient $w$, a regularized first-stage
prime, iterative or alternative-prime procedures, and selectors that add
other predictors or corrections remain outside scope.
The paired-Hadamard witness is shared across rules, not powers or horizons.
Its selector guarantee concerns only the three rules at one common fixed power;
it permits mixtures of either their coefficients or their clipped predictions.
For ridge, the deterministic Hadamard compiler requires a pre-sign
regularization level; its policy-uniform version is distributional.  The
triangular theorem is pointwise and permits current-input-dependent
regularization.

The shared witness uses $d=\Theta_\alpha(T^{2\alpha})$, which is not claimed sharp.
The triangular family closes the powered-univariate frontier at
$\Theta_\alpha(\min\{T,d\})$.  The seeded construction also closes the
coordinatewise unit-power Pearson frontier at $\Theta(\min\{T,d\})$,
but does not establish the corresponding powered or Euclidean-unit
frontiers.  The coordinatewise unit-power multivariate bounds retain a
$\sqrt d$--$d$ gap.  The rank-adaptive guarantee is
ensured for Pearson by target preservation; more generally, its exact
condition is interpolation by the transformed historical design.

The half-scaled triangular family satisfies $\|x_t\|_2,\|e_1\|_2\le1$, so its
exact frontier survives both Euclidean constraints.  The Hadamard rows have
norm $\sqrt d$, but the separate algebraic construction proves a
unit-power multivariate lower bound of
$\Omega(\min\{T,d^{1/4}\})$ with Euclidean-unit inputs.  We do not claim
a matching multivariate upper bound or extend this construction to ridge
or powers above one.  A Euclidean-unit Pearson separation that is uniform
over finite totalizations remains open; allowing a specially chosen fill
is a different question from fixing the standard zero-fill convention.

The frozen-activation experiments test the three downstream priming maps on
synthetic one-neuron labels inside two real representations.  They do not test
Qwen's generation mechanism, natural semantic labels, or whether a Transformer
implements these online updates internally.

%% file: sections/08_conclusion.tex
\section{Conclusion}

Under the natural past-only Moore--Penrose protocol, none of the three
one-stage feature-priming rules admits uniform sparse-logarithmic regret.  The
common obstruction is cheap nuisance interpolation: if $\kappa$ is the
least transformed cost of interpolating the history without the target, then
the learned target weight is exactly
$\pi_1^2\kappa/(1+\pi_1^2\kappa)$.  Hadamard features keep this quantity
small for all three prime definitions, producing
$\Omega(\min\{T,\sqrt d\})$ clipped regret against a zero-loss one-sparse
comparator.  On paired histories the three powered refits coincide, so one
sequence also defeats current-input-dependent switchers and convex mixtures
for every fixed $\alpha\ge1$.

The same analysis also identifies the limit of the failure.  Every clipped
history-interpolating predictor has regret at most
$4\operatorname{rank}(X_{1:T})$, because an error requires a new row-space
direction.  A Euclidean-normalized triangular sequence matches this
dependence for powered univariate priming, giving
$\Theta_\alpha(\min\{T,d\})$ worst-case regret.  A seeded construction
also attains the rank order for unit-power Pearson priming on coordinatewise
bounded inputs, with target preservation for the matching upper bound.
For unit-power multivariate priming, an algebraic Euclidean-unit sequence
gives $\Omega(\min\{T,d^{1/4}\})$ regret, separating its failure from
large input norms without claiming a tight frontier.  The univariate frontier survives
transformed-coordinate ridge: small regularization preserves nuisance
prediction, while large regularization underfits toward zero.  More generally,
paired Hadamard histories give linear regret for all three powered rules under
past-only ridge policies, with a common distributional obstruction for
pre-input switchers and mixtures. Thus successful empirical
feature recovery does not by itself imply sparse online adaptation: nuisance
features determine which interpolant priming selects, while rank determines
how long that choice can remain costly.  The exact multivariate frontier,
and the powered and Euclidean extensions of the Pearson frontier, remain
open.

%% file: appendix/A_protocol.tex
\section{Formal protocol and conventions}
\label{app:protocol}

The appendices follow the proof dependencies.  Appendix~\ref{app:core}
collects the common algebraic tools; Appendices~\ref{app:unit-power} and
\ref{app:powered} prove Theorems~\ref{thm:main} and \ref{thm:powered};
Appendix~\ref{app:rank} proves the rank upper bound; and
Appendix~\ref{app:univariate} proves the matching powered-univariate
frontier.  Appendices~\ref{app:pearson-exact-frontier} and
\ref{app:multivariate-euclidean} give the seeded Pearson frontier and the
Euclidean-unit multivariate lower bound.  Ridge proofs and the
frozen-activation diagnostics follow in Appendices~\ref{app:ridge} and
\ref{app:activation-experiment}.

\subsection{Quantifiers}

Fix a deterministic priming rule.  A regret upper bound of sparse logarithmic
form would assert that there is a universal constant $C$ such that for every
$d,k,T$, every support $S\subseteq[d]$ with $|S|=k$, and every possibly
adaptive sequence $x_t\in[-1,1]^d$, the past-only predictor satisfies
\[
\Reg_T\le Ck\log(ed/k)
\]
when $y_t=x_t^\top u_S$ and
$u_S=k^{-1}\sum_{j\in S}e_j$.  To refute this statement it is enough, for
each of an unbounded set of horizons, to provide a possibly horizon-dependent
dimension, support, and finite sequence whose regret grows faster than the
right-hand side.
The factor $e$ keeps the logarithm positive at $k=d$ and is immaterial to the
asymptotic benchmark recalled in the source note.

Our constructions use $S=\{1\}$.  In the rule-specific Hadamard construction,
each decisive round has two legal target signs.  Given the deterministic rule and a fixed tie-breaking
convention, we simulate its prediction on the current prefix and record the
sign with larger loss.  For any fixed horizon this finite recursion specifies
the entire sequence before the online game begins.  Thus the witness is
algorithm-dependent but oblivious after compilation: it is fixed before the
actual interaction, although it may depend on the deterministic rule being
tested.

The shared witness in Theorem~\ref{thm:powered} uses a stronger offline
compiler.  At each node of a finite paired-Hadamard sign tree, it evaluates
the two possible next query signs.  The three powered coefficient vectors are
identical at every such node, so it selects a sign under which their common
prediction incurs constant clipped loss.  Appendix~\ref{app:powered} proves the
resulting linear loss for all three and for their switchers and mixtures.
The selected sequence is therefore independent of which of the three rules is
subsequently run and remains fixed throughout online play.

\subsection{Timing}

At round $t$:
\begin{enumerate}
  \item the learner has access only to $(X_{<t},y_{<t})$;
  \item the prime $p_t$ and $w_t$ are computed from that history;
  \item $x_t$ is presented and the learner predicts $x_t^\top w_t$ (or its
  clipped value);
  \item the label $y_t=x_{t,1}$ is revealed and appended to the history.
\end{enumerate}
No displayed weight uses $X_{\le t}$ when predicting at time $t$.

\subsection{Three-rule selectors}

The selector extension in Theorem~\ref{thm:powered} concerns only the three
powered priming rules at one common fixed exponent; it does not permit an
additional predictor or correction term.  We formalize both mixture semantics
covered by the theorem.

\begin{definition}[Three-rule selector]
\label{def:three-rule-selector}
Fix $\alpha\ge1$ and let
$\mathcal R=\{\UNI,\LLS,\PEAR\}$.  At round $t$, rule $R$ has coefficient
$w_t^R$, computed from the completed history as in
Equation~\eqref{eq:primed-predictor}.  A possibly randomized selector has a
random tape $\xi$ and, after observing the completed history and current input
$x_t$, chooses
\[
\lambda_t=\lambda_t(X_{<t},y_{<t},x_t,\xi)\in
\Delta_{\mathcal R}
\defeq\left\{\lambda\in[0,1]^3:\sum_{R\in\mathcal R}\lambda_R=1\right\}.
\]
The map from $(X_{<t},y_{<t},x_t,\xi)$ to $\lambda_t$ is measurable.  The
selector uses one of the following two predictions:
\begin{align}
\widetilde y_t^{\rm coef}
&=\clip\!\left(x_t^\top\sum_{R\in\mathcal R}
                  \lambda_{t,R}w_t^R\right),
\label{eq:coefficient-selector}\\
\widetilde y_t^{\rm pred}
&=\sum_{R\in\mathcal R}\lambda_{t,R}
                  \clip(x_t^\top w_t^R).
\label{eq:prediction-selector}
\end{align}
A switcher is the special case in which $\lambda_t$ is a simplex vertex.
\end{definition}

The second output already lies in $[-1,1]$, so applying a final clipping does
not change it.  In general Equations~\eqref{eq:coefficient-selector} and
\eqref{eq:prediction-selector} are different because clipping and mixing do
not commute.  Theorem~\ref{thm:powered} covers both only because the three raw
predictions coincide on every charged query round.  The definition excludes
arbitrary current-input corrections; such a broader learner could simply use
$x_{t,1}$ on our realizable witnesses.

\subsection{Moore--Penrose and zero-prime conventions}

Every pseudoinverse is the unique Moore--Penrose pseudoinverse.  Given a
diagonal prime matrix $D$, the second stage is exactly
\[
z=(XD)^\dagger y,\qquad w=Dz.
\]
If $p_i=0$, then $w_i=0$ automatically.  In the weighted-norm formulation,
we restrict the optimization to $\supp(p)$ and fix the remaining coordinates
to zero.  None of the proofs divides by a zero prime.

For the univariate rule, a historical zero column receives prime zero.  On
nonzero columns the one-dimensional pseudoinverse formula is used exactly.

\subsection{Pearson convention}

For a history of length $n\ge2$, write
\[
\widetilde X_i=X(:,i)-\bar X_i\one,
\qquad \widetilde y=y-\bar y\one.
\]
When both centered vectors are nonzero, the Pearson prime is
\[
p_i=\frac{\widetilde X_i^\top\widetilde y}
{\|\widetilde X_i\|_2\|\widetilde y\|_2}.
\]
The original statistic is undefined for histories of length zero or one and,
at longer histories, whenever either empirical variance is zero.  We call a
rule a \emph{finite totalization} of Pearson priming if it agrees with the
displayed correlation whenever that correlation is defined and assigns every
undefined prime an arbitrary finite real value determined only by the past.
In particular, the fill cannot depend on the current input, current label, or
future data.
The second-stage predictor is then computed exactly as in
Equation~\eqref{eq:primed-predictor}.

Both Pearson witnesses are independent of the chosen totalization.  The
rule-specific construction in Section~\ref{app:pearson} starts with one
opposite pair and charges only later query rounds, so its two
  convention-dependent seed losses are discarded.  The shared powered
  construction in Appendix~\ref{app:powered} charges the empty-history query,
  whose prediction is necessarily zero, and omits every recovery loss from its
  lower-bound sum; only the first recovery can depend on the Pearson fill.
  Every later charged history has positive variance in all active features and
  labels.  Coordinate padding may add zero-variance columns, but their filled
  primes multiply identically zero features and cannot change a charged
  prediction.

\subsection{Raw and clipped predictions}

The literal predictor uses $\widehat y=x^\top w$.  Since all labels in the
lower bound are in $[-1,1]$, the clipped value
$\clip(\widehat y)$ never has larger square loss.  Theorem~\ref{thm:main}
proves its lower bound directly after clipping; it does not infer clipped
failure from an out-of-range raw prediction.

\subsection{Comparator and boundedness}

In every family, $u=e_1$ is fixed before the sequence and
$y_t=x_{t,1}$.  Hence its cumulative loss is identically zero.  The Hadamard
families are sign-valued in their base dimensions; the arbitrary-dimensional
embedding pads zero coordinates and therefore lies in $\{-1,0,1\}^d$.  The
triangular family lies in $[-1,1]^d$ and has rational entries.

  The constraint $x_t\in[-1,1]^d$ is coordinatewise, so the sign-valued
  Hadamard rows have Euclidean norm $\sqrt d$.  Uniformly rescaling inputs and
  labels changes the scale at which clipping acts, so the clipped Hadamard
  lower bounds do not automatically transfer to Euclidean-normalized inputs.
  The present theorem makes no such normalized-input claim.

%% file: appendix/B_common_tools.tex
\section{Common algebraic and geometric tools}
\label{app:core}

\subsection{Minimum-norm identities used throughout}

\begin{lemma}[Weighted minimum-norm representation]
\label{lem:weighted-min-norm}
Let $p\in\R^d$, $D=\diag(p)$, and suppose
$y\in\operatorname{col}(XD)$.  Then $z=(XD)^\dagger y$ uniquely solves
\[
\min_{z\in\R^d}\|z\|_2^2
\quad\text{subject to}\quad XDz=y.
\]
Writing $w=Dz$ gives the equivalent problem
\[
\min_w \sum_{i:p_i\ne0}\frac{w_i^2}{p_i^2}
\quad\text{subject to}\quad Xw=y,
\qquad w_i=0\ \text{if }p_i=0.
\]
This is the prime-weighted geometry used in
Lemma~\ref{lem:target-mass}; zero-prime coordinates are fixed at zero and
never appear in a denominator.
\end{lemma}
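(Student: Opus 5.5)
The statement to prove is Lemma~\ref{lem:weighted-min-norm}. The plan is to reduce both claims to the standard characterization of the Moore--Penrose solution and then apply a diagonal change of variables.

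\textbf{Step 1 (the $z$-problem).} Set $A=XD$. Since $y\in\operatorname{col}(A)$, we have $AA^\dagger y=y$, so $z^\star=A^\dagger y$ is feasible. It lies in $\operatorname{row}(A)=\ker(A)^\perp$. Every feasible $z$ can be written $z=z^\star+n$ with $n\in\ker A$. Pythagoras then gives $\|z\|_2^2=\|z^\star\|_2^2+\|n\|_2^2$. Hence $z^\star$ is a minimizer, and it is the unique one because equality forces $n=0$.

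\textbf{Step 2 (zero-prime coordinates).} Let $P=\supp(p)$. For $i\notin P$ the $i$th column of $A$ vanishes. Therefore $e_i\in\ker A$, and $z^\star_i=0$ because $z^\star\perp\ker A$.

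Consequently the $z$-problem decouples. Feasibility involves only $z_P$, through $X_{:,P}\diag(p_P)z_P=y$. Minimality forces $z_{P^c}=0$. So the problem reduces to minimizing $\|z_P\|_2^2$ under that constraint.

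\textbf{Step 3 (change of variables).} On $P$ the map $z_P\mapsto w_P=\diag(p_P)z_P$ is a bijection with inverse $z_i=w_i/p_i$. Setting $w_{P^c}=0$, the constraint becomes $Xw=X_{:,P}w_P=y$. The objective becomes $\sum_{i\in P}w_i^2/p_i^2$.

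Thus the feasible sets and objective values of the two problems correspond exactly. The unique $z$-minimizer $z^\star$ therefore maps to the unique $w$-minimizer $Dz^\star$. Note also that $Dz^\star$ automatically satisfies $w_i=0$ for $p_i=0$, consistent with the stated constraint.

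\textbf{Main obstacle.} There is no real obstacle here. The only point needing care is that the $w$-formulation excludes division by zero. This is handled in Step 2, by showing that $z_{P^c}=0$ at the optimum before changing variables, rather than dividing by $p_i$ on all coordinates.
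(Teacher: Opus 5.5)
Your proposal is correct and follows essentially the same route as the paper: you restrict to $\supp(p)$, then use the diagonal bijection $z_i=w_i/p_i$ to identify the two problems and their objective values. The differences are cosmetic. You prove the Moore--Penrose minimum-norm characterization via $z=z^\star+n$ with $n\in\ker A$, which the paper simply cites, and you get $z^\star_i=0$ off the support from $z^\star\perp\ker A$ rather than from the paper's observation that zeroing those coordinates preserves feasibility and cannot increase the norm.
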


\begin{proof}
Because $y\in\operatorname{col}(XD)$, the Moore--Penrose vector
$(XD)^\dagger y$ is the unique minimum-Euclidean-norm solution of the
consistent system $XDz=y$.  If $z$ is feasible, setting $z_i=0$ off
$\supp(p)$ does not change $XDz$ and can only decrease its norm.  Hence every
minimum-norm feasible vector is supported on $\supp(p)$.  Restricted to this
subspace, the map $w=Dz$ satisfies $Xw=y$, $w_i=0$ whenever $p_i=0$, and
$z_i=w_i/p_i$ on $\supp(p)$.  Conversely, if $Xw=y$ and $w_i=0$ off
$\supp(p)$, the vector defined by $z_i=w_i/p_i$ on $\supp(p)$ and zero
elsewhere is feasible for $XDz=y$.  On these restricted spaces the two maps
are inverse to one another.  Moreover,
\[
\|z\|_2^2
=\sum_{i:p_i\ne0}z_i^2
=\sum_{i:p_i\ne0}\frac{w_i^2}{p_i^2}.
\]
Thus the bijection preserves the objective and identifies the two
optimization problems and their unique minimum-norm representative.
\end{proof}

The next elementary facts are used when a construction is paired, padded, or
globally rescaled.  We state them explicitly because all three operations
involve rank-deficient matrices in some histories.

\begin{lemma}[Pseudoinverse invariances]
\label{lem:pseudoinverse-invariances}
Let $A$ be any real matrix.
\begin{enumerate}
  \item For every nonzero scalar $c$, $(cA)^\dagger=c^{-1}A^\dagger$.
  \item If $Q$ and $R$ are orthogonal matrices of compatible sizes, then
  \[
  (QA)^\dagger=A^\dagger Q^\top,
  \qquad
  (AR)^\dagger=R^\top A^\dagger.
  \]
  In particular, a simultaneous row permutation of a design and its label
  vector does not change the minimum-norm coefficient.
  \item If $\bar A=[A; -A]$ and $\bar y=[y;-y]$, then
  $\bar A^\dagger\bar y=A^\dagger y$.
  \item For every diagonal $D$ and nonzero scalar $c$,
  \[
  cD\bigl(X(cD)\bigr)^\dagger y=D(XD)^\dagger y.
  \]
\end{enumerate}
\end{lemma}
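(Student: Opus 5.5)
The four items are independent, and I will prove each either from the Penrose characterization of the Moore--Penrose inverse or from the minimum-norm description in Lemma~\ref{lem:weighted-min-norm}. For item~1, I check that $c^{-1}A^\dagger$ satisfies the four Penrose equations for $cA$. The scalars cancel in $AXA=A$ and $XAX=X$, and the two symmetry conditions are unchanged. Uniqueness of the pseudoinverse then gives $(cA)^\dagger=c^{-1}A^\dagger$.

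For item~2, I do the same with the candidate $A^\dagger Q^\top$ for $QA$. Since $Q^\top Q=I$, I get $QA\,A^\dagger Q^\top QA=QAA^\dagger A=QA$. The product $A^\dagger Q^\top QA=A^\dagger A$ is symmetric. The product $QAA^\dagger Q^\top$ is an orthogonal conjugate of a symmetric matrix, hence symmetric. The right multiplication case $(AR)^\dagger=R^\top A^\dagger$ is symmetric in the argument. The permutation remark follows by taking $Q$ to be a permutation matrix, since then $(QA)^\dagger(Qy)=A^\dagger Q^\top Qy=A^\dagger y$.

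For item~3, I avoid computing $\bar A^\dagger$ and use the variational description instead. Recall that $A^\dagger y$ is the minimum-norm minimizer of $\|Az-y\|_2^2$; this formulation does not need $y\in\operatorname{col}(A)$. For the stacked system, $\|\bar Az-\bar y\|_2^2=2\|Az-y\|_2^2$. The two least-squares problems therefore have the same minimizer set, and hence the same minimum-norm element. One could also argue through $\bar A^\top\bar A=2A^\top A$ and $\bar A^\top\bar y=2A^\top y$ together with $A^\dagger=(A^\top A)^\dagger A^\top$. The variational route is shorter and covers rank-deficient $A$ directly.

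For item~4, I apply item~1 to $A=XD$, giving $(X(cD))^\dagger=(cXD)^\dagger=c^{-1}(XD)^\dagger$. Multiplying on the left by $cD$ yields $D(XD)^\dagger y$.

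None of these steps is a real obstacle. The only point needing care is item~3 when $y\notin\operatorname{col}(A)$, and the least-squares characterization handles that case without a consistency assumption.
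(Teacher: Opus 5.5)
Your proposal is correct and matches the paper's proof item by item: the Penrose equations plus uniqueness for the scalar identity, the doubled least-squares objective $2\|Az-y\|_2^2$ for the paired duplication, and item~1 applied to $XD$ for the last identity. The one difference is item~2, where the paper reads the identities off a full SVD and you instead verify the Penrose equations directly; that route works equally well, but you should also record the omitted second equation $A^\dagger Q^\top(QA)A^\dagger Q^\top=A^\dagger AA^\dagger Q^\top=A^\dagger Q^\top$.
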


\begin{proof}
  We verify the identities without a rank assumption.  For a nonzero scalar
  $c$, put $B=c^{-1}A^\dagger$.  The four Moore--Penrose equations for the pair
  $(cA,B)$ are
  \begin{align*}
  (cA)B(cA)
    &=cAA^\dagger A
     =cA,\\
  B(cA)B
    &=c^{-1}A^\dagger AA^\dagger
     =B,\\
  \bigl((cA)B\bigr)^\top
    &=(AA^\dagger)^\top
     =AA^\dagger
     =(cA)B,\\
  \bigl(B(cA)\bigr)^\top
    &=(A^\dagger A)^\top
     =A^\dagger A
     =B(cA).
  \end{align*}
  Uniqueness of the Moore--Penrose inverse gives
  $(cA)^\dagger=c^{-1}A^\dagger$, including when $c<0$.

  Next take a full singular-value decomposition $A=U\Sigma V^\top$.  Then
  $A^\dagger=V\Sigma^\dagger U^\top$.  Orthogonality of $Q$ and $R$ gives
  \[
  QA=(QU)\Sigma V^\top,
  \qquad
  AR=U\Sigma(R^\top V)^\top,
  \]
  and hence
  \[
  (QA)^\dagger=V\Sigma^\dagger U^\top Q^\top=A^\dagger Q^\top,
  \qquad
  (AR)^\dagger=R^\top V\Sigma^\dagger U^\top=R^\top A^\dagger.
  \]
  Taking $Q$ to be a permutation matrix therefore yields
  \[
  (QA)^\dagger(Qy)
  =A^\dagger Q^\top Qy
  =A^\dagger y.
  \]

For the duplication identity, the least-squares objective for
$\bar A z\approx\bar y$ is
\[
\|Az-y\|_2^2+\|-Az+y\|_2^2=2\|Az-y\|_2^2.
\]
Thus the two problems have the same least-squares minimizers, and their unique
minimum-norm minimizers coincide.  The final identity is the first one with
$A=XD$:
\[
cD(cXD)^\dagger y
=cD\bigl(c^{-1}(XD)^\dagger\bigr)y
=D(XD)^\dagger y.
\]
\end{proof}

\begin{lemma}[Zero-coordinate embedding]
\label{lem:zero-coordinate-padding}
Append $q\ge0$ identically zero feature columns to every historical and
current input.  For the univariate and multivariate prime rules, at every
fixed power $\alpha>0$, the prediction on the original coordinates is
unchanged and all appended second-stage coefficients are zero.  The same is
true for a finitely totalized Pearson rule whenever its primes on the original
coordinates agree before and after padding; in particular, this holds when
all those original-coordinate correlations are defined.
\end{lemma}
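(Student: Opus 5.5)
The plan is to write the padded design as a block matrix and compute every prime and every pseudoinverse blockwise. After padding, the history is $X'=[X,0]$ with $q$ zero columns appended, and each current input is $x'=(x,0)$.

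For the univariate rule, each appended column is identically zero, so its prime is zero by the zero-column convention. Each original column is untouched, so its prime $X(:,i)^\top y/\|X(:,i)\|_2^2$ is unchanged. For the multivariate rule, we use the block identity $[X,0]^\dagger=\begin{bmatrix}X^\dagger\\ 0\end{bmatrix}$. This follows from uniqueness of the Moore--Penrose inverse: the four Penrose equations for the candidate reduce to those of $X^\dagger$. Equivalently, it follows from Lemma~\ref{lem:weighted-min-norm}-style reasoning, since zero columns never help feasibility and only add norm. Hence $p'=(p,0)$. In both cases the powered map $\operatorname{sign}(p_i)|p_i|^\alpha$ sends zero to zero for every $\alpha>0$, so the powered multiplier is $\pi'=(\pi,0)$.

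For the second stage, write $D'=\diag(\pi,c)$. Here $c=0$ for the two LLS rules; for Pearson, $c$ is an arbitrary finite fill, because a zero column has zero empirical variance. Then
\[
X'D'=[XD,\;0\cdot\diag(c)]=[XD,0].
\]
The appended columns of $X'$ vanish, so the same block identity gives $(X'D')^\dagger y=((XD)^\dagger y,0)$. Multiplying by $D'$ gives $w'=(D(XD)^\dagger y,0)=(w,0)$. Thus the appended coefficients are zero. The prediction is $x'^\top w'=x^\top w$, so it is unchanged, and so is its clipped value.

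For Pearson, the hypothesis that the original-coordinate primes agree before and after padding gives $\pi'_{1:d}=\pi$. This holds automatically when those correlations are defined, because each correlation depends only on its own column and $y$. The argument above never used the value of $c$, so the conclusion holds for every finite totalization. The only real care point is the block pseudoinverse identity without any rank assumption, which I would verify directly through the four Penrose equations.
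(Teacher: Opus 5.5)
Your proposal is correct and follows essentially the same route as the paper: in every case you reduce to a transformed design of the form $[XD,0]$ and apply the blockwise identity $[XD,0]^\dagger y=((XD)^\dagger y,0)$. For Pearson, you also observe that an arbitrary finite fill multiplies an identically zero column. The only differences are cosmetic: you verify the block identity through the four Penrose equations, whereas the paper cites an SVD or the minimum-norm characterization, and you state explicitly that the powered map sends zero primes to zero for every $\alpha>0$, which the paper leaves implicit.
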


\begin{proof}
For univariate priming, each appended historical column has prime zero by
definition.  For multivariate priming,
\[
[X,0]^\dagger y=\begin{bmatrix}X^\dagger y\\0_q\end{bmatrix},
\]
as follows either from an SVD or from the minimum-norm characterization: the
new coordinates do not affect the residual and are therefore zero in the
minimum-norm solution.  Thus the powered transformed design has the form
$[XD,0]$ in both cases.

For Pearson, assume the original primes are unchanged.  Each new prime may be
an arbitrary finite fill, but it multiplies an identically zero column, so the
powered transformed design again has the form $[XD,0]$.  This assumption is
automatic when the original correlations are defined, because appending
feature coordinates does not change their empirical means, variances, or
covariances.  In every stated case, the Moore--Penrose solution for
$[XD,0]$ is
\[
[XD,0]^\dagger y
=\begin{bmatrix}(XD)^\dagger y\\0_q\end{bmatrix}.
\]
Thus transforming back leaves the original coefficient and prediction
unchanged.  Without the agreement condition, a
dimension-dependent Pearson totalization may change its undefined fills on
old coordinates, and no general padding invariance is claimed.
\end{proof}

\begin{lemma}[Zero-tail extension]
\label{lem:zero-tail-extension}
After any fixed finite active sequence, appending rounds with $x_t=0$ and
$y_t=0$ leaves all earlier losses unchanged and contributes zero loss on every
new round, for both raw and clipped predictions.
\end{lemma}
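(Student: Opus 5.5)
The argument has two parts. First, each appended zero round leaves the learner's coefficient unchanged, so later active predictions are unaffected. Second, every appended round contributes zero loss.

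\emph{Earlier losses.} The losses on the active rounds are computed from histories that precede any appended round. Under the past-only protocol of Appendix~\ref{app:protocol}, a prediction at round $t$ depends only on $(X_{<t},y_{<t})$ and $x_t$. If every appended round comes after the active sequence, no active prediction changes, and this part is immediate. If appended rounds are interleaved or followed by further active rounds (the reading relevant for padding to a horizon $T$), I will show that appending a row $(0,0)$ leaves the coefficient unchanged, and then induct on the number of appended rows.

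\emph{Invariance of the coefficient.} Let the history be $(X,y)$ and the augmented history $\bar X=[X;0]$, $\bar y=[y;0]$.
\begin{enumerate}
\item \textbf{Univariate rule.} A zero entry changes neither $X(:,i)^\top y$ nor $\|X(:,i)\|_2^2$, so every prime is unchanged, including those of zero columns.
\item \textbf{Multivariate rule.} The least-squares objective satisfies $\|\bar Xw-\bar y\|^2=\|Xw-y\|^2$, so the minimum-norm minimizer is the same: $\bar X^\dagger\bar y=X^\dagger y$.
\item \textbf{Second stage.} With the same diagonal $D$, we have $\bar XD=[XD;0]$, and the same objective argument gives $(\bar XD)^\dagger\bar y=(XD)^\dagger y$. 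Hence $w$ is unchanged.
\end{enumerate}

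\emph{Pearson primes.} This is the main subtlety. A zero row does shift empirical means and variances, so Pearson primes and the totalization fill can in principle change. I would handle it in one of two ways.
\begin{itemize}
\item State and apply the lemma only with the zero tail placed after all active rounds. This is how the constructions use it, and then nothing earlier changes.
\item Alternatively, invoke the convention that the compiled sequence is fixed and the tail is appended at the end, so the first reading above covers every use.
\end{itemize}
Under either choice, the claim about earlier losses holds for all three rules and every finite totalization.

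\emph{Zero loss on new rounds.} On an appended round $x_t=0$, so the raw prediction is $x_t^\top w_t=0$ whatever $w_t$ is, even for an arbitrary Pearson fill. The clipped prediction is also $\clip(0)=0$. Since $y_t=0$, both losses vanish. Summing over rounds gives the claim for raw and clipped regret alike.
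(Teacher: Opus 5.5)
Your proof is correct and matches the paper's argument. Earlier predictions depend only on their completed prefixes, and on each appended round $x_t^\top w_t-y_t=0^\top w_t-0=0$ for any finite $w_t$, so both the raw and clipped losses vanish. Your invariance analysis for interleaved zero rows is unnecessary: padding to horizon $T$ is exactly the trailing-tail case the lemma states, not the interleaved reading you attribute it to. Your univariate and multivariate remarks there are nonetheless correct, and your caution about Pearson means shifting is well placed.
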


\begin{proof}
Earlier predictions depend only on their completed prefixes and hence cannot
be changed by later data.  On every appended round, linearity gives
\[
x_t^\top w_t-y_t=0^\top w_t-0=0
\]
for any finite coefficient $w_t$.
\end{proof}

\subsection{Hadamard interpolation}

\begin{lemma}[Hadamard interpolation and dyadic rounding]
\label{lem:hadamard-rounding}
For every power of two $N$, the Sylvester construction provides
$H\in\{\pm1\}^{N\times N}$ with $HH^\top=NI_N$.  Let $H_r$ contain its first
$r$ rows.  Then $H_rH_r^\top=NI_r$, and for every $a\in\R^r$,
\begin{equation}
v=\frac1N H_r^\top a
\label{eq:nuisance-interpolant}
\end{equation}
satisfies $H_rv=a$.  This explicit nuisance-only interpolant drives the
univariate and Pearson target-mass bounds.

For every real $q\ge1$, some power of two $N$ satisfies $q\le N<2q$.
Taking $q=4(T-1)^2$ or $q=4M^2$ gives the dimensions below.
\end{lemma}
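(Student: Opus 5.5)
The plan has two parts, one for each half of Lemma~\ref{lem:hadamard-rounding}.

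For the Hadamard part, I will use the Sylvester recursion. Set $H_1=[1]$ and $H_{2n}=\begin{bmatrix}H_n&H_n\\H_n&-H_n\end{bmatrix}$, and prove $H_NH_N^\top=NI_N$ by induction on $N$. Assuming $H_nH_n^\top=nI_n$, the block product gives
\[
H_{2n}H_{2n}^\top=\begin{bmatrix}2H_nH_n^\top&0\\0&2H_nH_n^\top\end{bmatrix}=2nI_{2n}.
\]
The entries stay in $\{\pm1\}$ because the recursion only copies and negates blocks.

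Next I restrict to the first $r$ rows. Writing $H_r=E_rH$ with $E_r=[I_r,0]$, I get $H_rH_r^\top=E_r(NI_N)E_r^\top=NI_r$. Then for $v=N^{-1}H_r^\top a$,
\[
H_rv=N^{-1}H_rH_r^\top a=a,
\]
which gives \eqref{eq:nuisance-interpolant} for every real $a\in\R^r$. Nothing here depends on the signs of $a$.

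For dyadic rounding, given real $q\ge1$, I take $N=2^{\lceil\log_2 q\rceil}$. Since $\lceil\log_2 q\rceil\ge0$, this is a power of two, and it satisfies $\log_2 q\le\lceil\log_2 q\rceil<\log_2 q+1$. Exponentiating gives $q\le N<2q$.

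The two named choices $q=4(T-1)^2$ and $q=4M^2$ satisfy $q\ge1$ whenever $T\ge2$ and $M\ge1$, which is the regime where they are used.

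None of these steps is a real obstacle. The only care point is the range restriction $q\ge1$, so that the exponent is nonnegative and $N$ is an integer power of two; the case $T=1$ is handled separately by the compiler.
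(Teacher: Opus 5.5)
Your proposal is correct and follows the paper's proof essentially step for step: the Sylvester induction via the block product, restriction to the first $r$ rows to get $H_rH_r^\top=NI_r$ and hence $H_rv=a$, and the choice $N=2^{\lceil\log_2 q\rceil}$ for the rounding. Your remark that $q=4(T-1)^2\ge1$ needs $T\ge2$ matches the paper, which treats $T=1$ separately in the proof of Theorem~\ref{thm:main}.
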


\begin{proof}
Starting from $H_1=[1]$, use the Sylvester recursion
\[
H_{2n}
=\begin{bmatrix}H_n&H_n\\H_n&-H_n\end{bmatrix}.
\]
If $H_nH_n^\top=nI_n$, block multiplication gives
\[
H_{2n}H_{2n}^\top
=\begin{bmatrix}
2H_nH_n^\top&0\\
0&2H_nH_n^\top
\end{bmatrix}
=2nI_{2n}.
\]
Restricting this identity to the first $r$ rows yields
$H_rH_r^\top=NI_r$.  Consequently, the vector in
Equation~\eqref{eq:nuisance-interpolant} satisfies
\[
H_rv
=\frac1N H_rH_r^\top a
=a.
\]
For the rounding statement take
$N=2^{\lceil\log_2q\rceil}$; then $q\le N<2q$, including when $q$ itself is
a power of two.
\end{proof}

\begin{lemma}[Minimum norm for collinear transformed columns]
\label{lem:collinear-min-norm}
Let the nonzero columns of a consistent design be $c_i u$, where
$u\ne0$ and the desired label vector is $u$.  Among all coefficients $z$
satisfying $\sum_i c_i z_i=1$, the unique minimum-norm vector is
\[
z_i=\frac{c_i}{\sum_jc_j^2},
\qquad
\|z\|_2^2=\frac1{\sum_jc_j^2}.
\]
\end{lemma}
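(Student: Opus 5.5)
The claim reduces to a one-constraint minimum-norm problem, which I will solve directly with Cauchy--Schwarz. Let $I$ be the index set of nonzero columns $c_iu$ with $u\ne0$. Zero columns do not affect $Az$, and a nonzero coordinate there only increases $\lVert z\rVert_2$, so every minimum-norm solution vanishes there. The formula $z_i=c_i/\sum_jc_j^2$ gives zero on those indices anyway, since $c_i=0$ there. On $I$ we have $Az=\bigl(\sum_i c_iz_i\bigr)u$. Because $u\ne0$, the constraint $Az=u$ is equivalent to the scalar equation $\sum_ic_iz_i=1$. Consistency forces some $c_i\ne0$, so $\sum_jc_j^2>0$.

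First I solve the scalar problem. Cauchy--Schwarz gives $1=\bigl(\sum_ic_iz_i\bigr)^2\le\bigl(\sum_ic_i^2\bigr)\lVert z\rVert_2^2$, so every feasible $z$ has $\lVert z\rVert_2^2\ge1/\sum_jc_j^2$. Equality holds iff $z$ is parallel to $c$. The only multiple of $c$ that satisfies the constraint is $z=c/\sum_jc_j^2$. This $z$ is feasible, attains the bound, and is the unique minimizer.

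Next I tie this to the Moore--Penrose vector, because later sections use the lemma as $A^\dagger u$. The system $Az=u$ is consistent, so Lemma~\ref{lem:weighted-min-norm} (or the standard characterization) identifies $A^\dagger u$ with this unique minimum-norm solution. A quick cross-check is that the solution lies in the row space of $A=uc^\top$, which is $\mathrm{span}(c)$, as required.

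No step is a real obstacle. The only care needed is making sure the zero columns and the reduction to a single scalar constraint are justified before applying Cauchy--Schwarz.
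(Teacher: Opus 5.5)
Your proposal is correct and follows the paper's proof: both apply Cauchy--Schwarz to the scalar constraint $\sum_i c_iz_i=1$ to obtain $\|z\|_2^2\ge 1/\sum_jc_j^2$, then use the equality case $z=\lambda c$ to force $\lambda=1/\sum_jc_j^2$. Your extra steps are sound and simply make explicit what the paper leaves implicit: handling zero columns, reducing $Az=u$ to the scalar equation, and identifying the minimizer with the Moore--Penrose vector.
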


\begin{proof}
  Cauchy--Schwarz gives
  \[
  1
  =\left(\sum_i c_i z_i\right)^2
  \le\left(\sum_i c_i^2\right)\left(\sum_i z_i^2\right).
  \]
  Equality holds exactly when $z=\lambda c$ for some scalar $\lambda$.
  Substituting this form into the constraint gives
  \[
  1=\lambda\sum_i c_i^2,
  \qquad
  \lambda=\frac1{\sum_i c_i^2}.
  \]
  This proves both the displayed coefficient and its squared norm.
\end{proof}

\begin{lemma}[Exact interpolation criterion for a prime refit]
\label{lem:prime-refit-interpolation}
Let $D$ be any diagonal matrix, set $A=XD$, and let
$w=DA^\dagger y$.  Then
\[
Xw=AA^\dagger y=P_{\operatorname{col}(A)}y.
\]
Consequently the second-stage coefficient interpolates the history if and
only if $y\in\operatorname{col}(XD)$.
\end{lemma}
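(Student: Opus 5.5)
The plan is to reduce everything to the orthogonal projection onto the column space of $A=XD$ and to the defining properties of the Moore--Penrose inverse; no rank assumption is needed. First, since $D$ is diagonal and $w=DA^\dagger y$, associativity gives
\[
Xw=X D A^\dagger y=(XD)A^\dagger y=AA^\dagger y .
\]
This step uses only the definitions and holds for every diagonal $D$, including one with zero entries.

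Second, we identify $AA^\dagger$ with the orthogonal projector onto $\operatorname{col}(A)$. From the Moore--Penrose equations, $AA^\dagger$ is symmetric, and it is idempotent because $AA^\dagger AA^\dagger=(AA^\dagger A)A^\dagger=AA^\dagger$. Its range lies in $\operatorname{col}(A)$. It fixes $\operatorname{col}(A)$, since $AA^\dagger(Av)=Av$. A symmetric idempotent matrix with these two range properties is the orthogonal projector, so $AA^\dagger=P_{\operatorname{col}(A)}$. Alternatively, we can take a compact SVD $A=U_r\Sigma_rV_r^\top$, in which case $AA^\dagger=U_rU_r^\top$ directly.

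Third, the equivalence follows immediately. The identity $Xw=y$ holds if and only if $P_{\operatorname{col}(A)}y=y$, and that holds if and only if $y\in\operatorname{col}(A)=\operatorname{col}(XD)$.

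There is no real obstacle in this argument. The only point requiring care is to justify the projector identity without assuming full rank. Citing the Moore--Penrose axioms handles this cleanly, in the same style as the proof of Lemma~\ref{lem:pseudoinverse-invariances}.
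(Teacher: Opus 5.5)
Your proposal is correct and follows the paper's proof. You substitute $w=DA^\dagger y$ to obtain $Xw=AA^\dagger y$, then identify $AA^\dagger$ as the orthogonal projector onto $\operatorname{col}(A)$; the paper does this second step with the compact SVD $AA^\dagger=U_rU_r^\top$, which is your stated alternative, and your primary route through the Moore--Penrose axioms (symmetric, idempotent, fixing $\operatorname{col}(A)$) is equally valid and likewise needs no rank assumption.
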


\begin{proof}
Substituting $w=DA^\dagger y$ and $A=XD$ gives
\[
Xw
=XD A^\dagger y
=AA^\dagger y.
\]
Take a compact singular-value
decomposition $A=U_r\Sigma_rV_r^\top$.  Then
$A^\dagger=V_r\Sigma_r^{-1}U_r^\top$, so
\[
AA^\dagger
=U_r\Sigma_rV_r^\top V_r\Sigma_r^{-1}U_r^\top
=U_rU_r^\top,
\]
the orthogonal projector onto
$\operatorname{span}(U_r)=\operatorname{col}(A)$.  It fixes $y$ exactly if
and only if $y\in\operatorname{col}(A)$.
\end{proof}

%% file: appendix/C_unit_power_intro.tex
\section{Proof of the unit-power sparse-regret separation}
\label{app:unit-power}

This appendix proves Theorem~\ref{thm:main} in the order in which its
construction is assembled.  We first state finite-block bounds for the three
rules, then prove them from one Hadamard geometry, and finally convert the
finite blocks to arbitrary horizons and dimensions.

\subsection{Rule-specific finite-block bounds}

\begin{proposition}[Clipping-robust rule-specific bounds]
\label{prop:unit-power-constants}
Consider the past-only protocol in Section~\ref{sec:setup} with comparator
$u=e_1$.
\begin{enumerate}
  \item For every $T\ge2$, there exists a power of two $N$ such that
  \[
  4(T-1)^2\le N<8(T-1)^2,
  \qquad d=N+1.
  \]
  For each of the clipped multivariate LLS-prime and univariate-prime
  predictors, there is a rule-dependent offline-constructed sequence, fixed
  before play, $x_1,\ldots,x_T\in\{\pm1\}^d$ with $y_t=x_{t,1}$ such that
  \[
  \Reg_T^{\mathrm{clip},\LLS}\ge \frac{T}{4},
  \qquad
  \Reg_T^{\mathrm{clip},\UNI}\ge \frac{9T}{16},
  \]
  respectively.
  \item For every $M\ge1$, there exists a power of two $N$ satisfying
  $4M^2\le N<8M^2$ and set $d=N+1$, $T=2+2M$.  Under every finite Pearson
  totalization from Section~\ref{sec:setup}, one offline-constructed
  sign-valued sequence, independent of the totalization and satisfying
  $y_t=x_{t,1}$, obeys
  \[
  \Reg_T^{\mathrm{clip},\PEAR}\ge\frac{9M}{16}
  =\frac9{32}(T-2).
  \]
  Every charged round has positive empirical variance in every feature and
  in the labels.
\end{enumerate}
In all three cases the one-sparse comparator has zero cumulative loss.
\end{proposition}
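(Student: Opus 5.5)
The plan is to build each sequence greedily from rows of a Sylvester Hadamard matrix and to charge every decisive round through Theorem~\ref{thm:nuisance-obstruction}, using Proposition~\ref{prop:hadamard-cost} at $\alpha=1$ to control the scalar $\pi_1^2\kappa$. The dimension $N$ comes from Lemma~\ref{lem:hadamard-rounding} with $q=4(T-1)^2$ or $q=4M^2$. In both cases $N\ge T$ (respectively $N\ge M+1$), so enough distinct Hadamard rows are available.

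\textbf{Univariate and multivariate rules.} Round $t$ uses the nuisance row $h_t$, the $t$-th row of $H$. Its input is $x_t=(\sigma_t,h_t)$ with $y_t=\sigma_t$, where the sign $\sigma_t$ is chosen offline. At round $1$ the history is empty, the prediction is $0$, and the loss is $1$.

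At round $t\ge2$ the history is $X=[a,H_r]$ with $r=t-1$. I first verify the hypotheses of Theorem~\ref{thm:nuisance-obstruction}, namely that $a\in\operatorname{col}(B)$ for $B=H_r\diag(\pi_{2:d})$. For the multivariate prime, $XX^\top=aa^\top+NI_r$ gives $\lambda=a/(N+r)$, so $p=(r,b)/(N+r)$. In both rules the interpolant $v=N^{-1}b$ of Lemma~\ref{lem:hadamard-rounding} vanishes exactly where the nuisance prime vanishes, so $a\in\operatorname{col}(B)$ and $\kappa\le\|B^\dagger a\|^2$ is bounded by the feasible certificate in Proposition~\ref{prop:hadamard-cost}. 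This yields
\[
\pi_1^2\kappa\le \frac{r^2}{N}\le\frac{(T-1)^2}{4(T-1)^2}=\frac14 .
\]
Theorem~\ref{thm:nuisance-obstruction} then shows that one of the two signs $\sigma\in\{\pm1\}$ gives clipped loss at least $(1+1/4)^{-2}=16/25\ge 9/16$. The compiler simulates the deterministic rule on the current prefix and fixes that sign; the recursion is finite, so the whole sequence is determined before play. Summing gives at least $1+(T-1)\cdot 9/16\ge 9T/16$, and in particular at least $T/4$ for the multivariate rule. The comparator $e_1$ has zero loss by construction.

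\textbf{Pearson.} The first two rounds are a seed opposite pair $(1,h_1)$ and $(-1,-h_1)$, which are not charged; their losses may depend on the fill. Afterwards the sequence alternates a charged query $(\sigma,h_{j+1})$ with its recovery $(-\sigma,-h_{j+1})$, for $j=1,\dots,M$, which uses $T=2+2M$ rounds.

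Before the $j$-th query the history consists of exactly $s=j$ opposite pairs. All column and label means are zero, and every active column and the label vector are nonzero $\pm1$ vectors, so every correlation is defined and no totalization value enters. The remaining step is the reduction already sketched in Proposition~\ref{prop:hadamard-cost}. Each correlation equals $X_i^\top y/(2s)$, which coincides with the univariate prime of the $s$-row representative history. Lemma~\ref{lem:pseudoinverse-invariances}(3) then shows that the duplicated refit equals the refit on the representatives. Consequently $\pi_1^2\kappa\le s^2/N\le M^2/(4M^2)=1/4$, and the compiler again picks the adverse sign, so each charged query costs at least $9/16$. Ignoring recovery and seed losses gives at least $9M/16=\tfrac{9}{32}(T-2)$, and the same sign sequence serves every finite totalization.

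\textbf{Main obstacle.} I expect the Pearson reduction to be the delicate step. It requires checking that the variance conditions hold on every charged history, including that no active column becomes constant. It also requires that the fill-dependent seed losses can influence neither the construction nor the charged predictions. The sign choices depend only on predictions made from fully defined correlations, and the seed losses are excluded from the lower-bound sum, which is what makes the construction independent of the totalization.
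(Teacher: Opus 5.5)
Your proof is correct and follows the paper's construction. Both use greedy offline sign compilation over Sylvester--Hadamard rows and the two-sign clipping lemma; for Pearson, both discard a seed pair, run query--recovery cycles, and reduce to the univariate representative history via the opposite-duplication identity. The one difference is how the target weight is bounded: you use the exact formula $w_1=\pi_1^2\kappa/(1+\pi_1^2\kappa)$ from Theorem~\ref{thm:nuisance-obstruction} with the Hadamard nuisance certificate for all three rules, giving $w_1\le 1/5$ and per-round loss $16/25$, whereas the appendix uses $w_1=\pi_1^2J\le r^2/N\le 1/4$ for univariate and Pearson and, for the multivariate rule, the cruder feasible vector $\one_{\supp(p)}$ (so $J\le N+1$, $w_1\le 1/2$, constant $1/4$); either suffices for the stated bounds.
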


%% file: appendix/C1_lls_hadamard.tex
\subsection{Multivariate LLS prime}
\label{app:lls}

This section gives the complete multivariate part of
Proposition~\ref{prop:unit-power-constants}.  The proof first computes the
prime on an arbitrary Hadamard prefix, then turns the resulting target-mass
bound into a recursively fixed sign sequence.

\subsubsection{The prime and its target mass}

Let $N$ be a power of two, $d=N+1$, and let
$h_1,\ldots,h_N$ be distinct rows of an $N\times N$ Hadamard matrix.  After
$r\ge1$ rounds, write
\[
X=[a,H_r],
\qquad
a=(a_1,\ldots,a_r)^\top\in\{\pm1\}^r,
\qquad
b=H_r^\top a.
\]
The target column is $a$, and the nuisance block is $H_r$.

\begin{lemma}[Multivariate prime on a Hadamard prefix]
\label{lem:lls-hadamard-prefix}
For the history above, the multivariate first-stage prime is
\begin{equation}
p=X^\dagger a
=\left(\frac r{N+r},\frac b{N+r}\right).
\label{eq:lls-full-prime}
\end{equation}
If $w=\diag(p)(X\diag(p))^\dagger a$ is the second-stage coefficient, then
\begin{equation}
0\le w_1\le (N+1)\frac{r^2}{(N+r)^2}.
\label{eq:lls-prefix-target-bound}
\end{equation}
In particular, $w_1\le1/2$ whenever $r\le\sqrt N/2$.
\end{lemma}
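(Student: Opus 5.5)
The plan has two parts: first compute the multivariate prime exactly, then feed it into the target-mass machinery of Section~\ref{sec:mechanism}.

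\textbf{The prime.} The design is $X=[a,H_r]$ with Gram matrix
\[
XX^\top=aa^\top+H_rH_r^\top=aa^\top+NI_r .
\]
This is invertible, so $X$ has full row rank $r$ and $X^\dagger a=X^\top(XX^\top)^{-1}a$. By Sherman--Morrison, $a$ is an eigenvector of $aa^\top+NI_r$ with eigenvalue $\lVert a\rVert_2^2+N=r+N$. Hence $(XX^\top)^{-1}a=a/(N+r)$, and
\[
X^\dagger a=\frac{X^\top a}{N+r}=\frac{(r,\,H_r^\top a)}{N+r}=\frac{(r,\,b)}{N+r},
\]
which is Equation~\eqref{eq:lls-full-prime}. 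The unit-power multiplier is $\pi=p$, so $\pi_1=r/(N+r)>0$.

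\textbf{The target mass.} The transformed design has full row rank, because the multivariate rule interpolates (Theorem~\ref{thm:rank-adaptive}); alternatively, the column $\pi_1a$ already spans $a$. Therefore $a\in\operatorname{col}(X\Pi)$, and Lemma~\ref{lem:target-mass} gives $w_1=\pi_1^2J\ge0$, where $J$ is the minimum transformed squared norm of an interpolant. Rather than computing $J$ exactly, I will upper bound it by exhibiting a feasible transformed coefficient vector and invoking Lemma~\ref{lem:weighted-min-norm}. Two routes are available.

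The first route uses the nuisance interpolant $v=b/N$ of Lemma~\ref{lem:hadamard-rounding}. Its weighted cost is $\sum_{j:b_j\ne0}(b_j/N)^2(N+r)^2/b_j^2\le N(N+r)^2/N^2$, giving $w_1\le r^2/N$, which is even stronger than the claim. The one thing that must be checked is that $v_j=0$ whenever the prime $p_{j+1}=b_j/(N+r)$ vanishes; this holds because both are proportional to $b_j$.

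The second route uses the target-only coefficient $z=(1/\pi_1,0,\dots)$, giving $J\le\pi_1^{-2}$ and hence $w_1\le1$. I do not need it for the displayed bound.

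Either way, the stated bound $(N+1)r^2/(N+r)^2$ should follow from a crude version of the first route: each $|b_j|\ge1$ is an integer, so each of the at most $N$ nonzero nuisance coordinates contributes transformed coefficient $|v_j|/|p_{j+1}|=(N+r)/N\le$ something. Combining this with $\pi_1^2=r^2/(N+r)^2$ should give a bound of the form $(\text{count})\cdot\pi_1^2\cdot O(1)$.

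I expect the main obstacle to be exactly this bookkeeping: reconciling the cost of the natural interpolant with the displayed $(N+1)$ factor. The cleaner $r^2/N$ bound implies the displayed one, since $r^2/N\le(N+1)r^2/(N+r)^2$ is equivalent to $(N+r)^2\le N(N+1)$, and that fails. So I would instead present the bound via the $\pi_1^2$ factor alone: $J\le N+1$, because at most $N$ nuisance coordinates each contribute at most $(N+r)^2/N^2\cdot1\le$ a bounded amount. To avoid fragility, I would state the final inequality as $w_1\le\min\{r^2/N,\,(N+1)r^2/(N+r)^2\}$ after checking the per-coordinate cost.

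\textbf{The ``in particular'' clause.} When $r\le\sqrt N/2$, we have $r^2/N\le1/4\le1/2$. Alternatively, $(N+1)r^2/(N+r)^2\le(N+1)(N/4)/N^2\le1/2$ for $N\ge1$.
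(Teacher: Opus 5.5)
You compute $p$ exactly as the paper does, and your first route is a valid certificate. The nuisance-only coefficient built from $v=b/N$ has transformed squared norm at most $(N+r)^2/N$, so $w_1=\pi_1^2J\le r^2/N$. That already gives the ``in particular'' clause (indeed $w_1\le 1/4$).

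\emph{Gap.} This does not prove the displayed inequality~\eqref{eq:lls-prefix-target-bound}.
\begin{itemize}
\item For $r\ge1$ one has $(N+r)^2\ge(N+1)^2>N(N+1)$, so $r^2/N$ is strictly larger than $(N+1)r^2/(N+r)^2$. Your remark that it is ``even stronger than the claim'' is backwards, as your own later check shows.
\item Your proposed repair fails arithmetically. Summing the per-coordinate cost $(N+r)^2/N^2$ over up to $N$ nuisance coordinates gives $(N+r)^2/N=N+2r+r^2/N>N+1$, not $J\le N+1$.
\item Stating the result as $\min\{r^2/N,(N+1)r^2/(N+r)^2\}$ presupposes the very bound that has not been established.
\end{itemize}

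\emph{Missing idea.} The paper uses a certificate that keeps the target coordinate. Take $z=\mathbf{1}_{\supp(p)}$, the transformed coefficient whose back-transform is $\diag(p)z=p$ itself. Because $X$ has full row rank, $X\diag(p)z=Xp=XX^\dagger a=a$, so $z$ is feasible. Lemma~\ref{lem:weighted-min-norm} then gives $J\le|\supp(p)|\le N+1$, hence $w_1=p_1^2J\le(N+1)r^2/(N+r)^2$. You invoke exactly this interpolation property through Theorem~\ref{thm:rank-adaptive}, but only to get $a\in\operatorname{col}(X\Pi)$.

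\emph{Alternative fix.} Your nuisance certificate also suffices if you keep the denominator in the exact formula of Theorem~\ref{thm:nuisance-obstruction}. The certificate shows $a\in\operatorname{col}(B)$ and $\pi_1^2\kappa\le r^2/N$. Since $x\mapsto x/(1+x)$ is increasing, $w_1\le r^2/(N+r^2)$. The identity $(N+1)(N+r^2)-(N+r)^2=N(r-1)^2\ge0$ shows this is at most the displayed bound.

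\emph{Minor point.} The transformed design need not have full row rank. For $r=2$, $a=(1,1)^\top$, and nuisance rows equal to the all-ones row and a balanced row, every nonzero column of $X\diag(p)$ is a multiple of $(1,1)^\top$. Your alternative justification, via the column $\pi_1a$ with $\pi_1\neq0$, is the correct one.
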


\begin{proof}
Hadamard orthogonality gives
\[
XX^\top=aa^\top+H_rH_r^\top=aa^\top+NI_r.
\]
This matrix is positive definite, so $X$ has full row rank and
$X^\dagger=X^\top(XX^\top)^{-1}$.  Since $a^\top a=r$,
\[
(NI_r+aa^\top)a=(N+r)a,
\qquad
(NI_r+aa^\top)^{-1}a=\frac a{N+r}.
\]
Multiplying by $X^\top=[a,H_r]^\top$ proves
Equation~\eqref{eq:lls-full-prime}; in particular,
\[
p_1=\frac r{N+r}.
\]

Because $X$ has full row rank,
\[
Xp=XX^\dagger a=a.
\]
Let $q_i=1$ on $\supp(p)$ and $q_i=0$ otherwise.  For $D=\diag(p)$,
\[
Dq=p,
\qquad
XDq=Xp=a.
\]
The vector $q$ is therefore feasible for the transformed
interpolation problem.  If
$J=\|(XD)^\dagger a\|_2^2$, Lemma~\ref{lem:weighted-min-norm} gives
\[
J\le\|q\|_2^2=|\supp(p)|\le d=N+1.
\]
The target-mass identity now yields
\[
0\le w_1=p_1^2J
\le(N+1)\frac{r^2}{(N+r)^2},
\]
which is Equation~\eqref{eq:lls-prefix-target-bound}.  If
$r\le\sqrt N/2$, then
\[
w_1\le\frac{(N+1)r^2}{N^2}
\le\frac{N+1}{4N}\le\frac12,
\]
where the last inequality uses $N\ge1$.
\end{proof}

\subsubsection{Offline sign compilation}

\begin{proof}[Proof of Proposition~\ref{prop:unit-power-constants},
multivariate case]
For a completed prefix $a_1,\ldots,a_r$, let $w^{(r)}$ denote the
coefficient computed from that history.  The online protocol fixes
$w^{(r)}$ before the next input is shown.  For each candidate sign
$\sigma\in\{\pm1\}$, consider
\[
x_{r+1}^{\sigma}=(\sigma,h_{r+1}),
\qquad y_{r+1}^{\sigma}=\sigma.
\]
Writing $c_r=h_{r+1}^\top w^{(r)}_{2:d}$, the two raw predictions are
\[
\widehat y_{r+1}^{\sigma}
=c_r+\sigma w_1^{(r)},
\qquad \sigma\in\{\pm1\}.
\]
If
$1\le r\le\sqrt N/2$, Lemma~\ref{lem:lls-hadamard-prefix} gives
$0\le w^{(r)}_1\le1/2$, and Lemma~\ref{lem:two-sign} gives
\[
\frac12\sum_{\sigma\in\{\pm1\}}
\left(\clip\!\left(c_r+\sigma w^{(r)}_1\right)-\sigma\right)^2
\ge(1-w^{(r)}_1)^2\ge\frac14.
\]
  Since the maximum of two numbers is at least their average,
  \[
  \max_{\sigma\in\{\pm1\}}
  \left(\clip(c_r+\sigma w_1^{(r)})-\sigma\right)^2\ge\frac14.
  \]
  Append a maximizing sign, choosing $+1$ when the two losses tie.  At $r=0$
  the coefficient is zero, so both signs incur loss one and the same tie rule
  applies.

This recursion is finite and deterministic once the multivariate rule,
$N$, the Hadamard rows, and a tie-breaking convention are fixed.  It therefore
defines the full sign sequence before the online interaction is replayed; the
sequence does not react to runtime predictions.  Every label is the first
coordinate, so the fixed comparator $e_1$ has zero loss.

Let
$L=\lfloor\sqrt N/2\rfloor+1$.  Before each of the first $L$ rounds the
number of historical rows satisfies
$r\le L-1\le\sqrt N/2$.  The preceding argument therefore gives
\[
\Reg_L^{\mathrm{clip},\LLS}\ge\frac L4.
\]
The construction has enough distinct Hadamard rows: in the regime used by the
proposition $N\ge4$, and then $L\le N$.
If $N\ge4(T-1)^2$, then $T-1\le\sqrt N/2$, hence $T\le L$; truncating the
compiled sequence after $T$ rounds proves
\[
\Reg_T^{\mathrm{clip},\LLS}\ge \frac T4.
\]
\end{proof}

%% file: appendix/C2_univariate_hadamard.tex
\subsection{Univariate prime}
\label{app:univariate-hadamard}

\begin{lemma}[Univariate target mass on a Hadamard prefix]
\label{lem:univariate-hadamard-prefix}
Let $X=[a,H_r]$ with $a\in\{\pm1\}^r$ and $r\ge1$.  The univariate primes
satisfy
\begin{equation}
p_1=1,
\qquad
p_{j+1}=\frac{b_j}{r}\quad(j\in[N]),
\qquad b=H_r^\top a.
\label{eq:uni-prime}
\end{equation}
The corresponding second-stage target coefficient obeys
\[
0\le w_1\le\frac{r^2}{N}.
\]
\end{lemma}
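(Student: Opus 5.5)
The proof has two steps: compute the univariate primes directly from the definition, then bound the target mass using the target-mass identity (Lemma~\ref{lem:target-mass}) with an explicit nuisance-only interpolant.

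\textbf{Step 1: the primes.} The target column is $a\in\{\pm1\}^r$ with $\|a\|_2^2=r$, and the label vector equals $a$. Hence $p_1=a^\top a/r=1$. Each nuisance column is the $j$th column of $H_r$, which has entries $\pm1$ and so is never zero. Its squared norm is $r$ and its inner product with $a$ is $b_j$. This gives $p_{j+1}=b_j/r$, which is Equation~\eqref{eq:uni-prime}.

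\textbf{Step 2: a feasible transformed coefficient vector.} Let $D=\diag(p)$ and write the transformed design as $XD=[a,H_r\diag(b/r)]$. The target column alone interpolates $a$, so $a\in\operatorname{col}(XD)$. Lemma~\ref{lem:target-mass} therefore applies and gives $w_1=p_1^2J=J$, where $J=\|(XD)^\dagger a\|_2^2$. It remains to upper-bound $J$ by exhibiting any feasible transformed vector, since the minimum norm is at most the norm of any feasible vector (Lemma~\ref{lem:weighted-min-norm}).

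The candidate comes from the nuisance interpolant $v=b/N$ of Lemma~\ref{lem:hadamard-rounding}, which satisfies $H_rv=a$. Define $z_1=0$. For $b_j\ne0$, set $z_{j+1}=v_j/p_{j+1}=r/N$. For $b_j=0$, set $z_{j+1}=0$; this is consistent because then $v_j=0$ too. With these choices $XDz=H_rv=a$. The squared norm is
\[
\|z\|_2^2=\frac{r^2}{N^2}\,\bigl|\{j: b_j\ne0\}\bigr|\le\frac{r^2}{N^2}\cdot N=\frac{r^2}{N}.
\]
Hence $J\le r^2/N$. Nonnegativity is immediate because $w_1=J\ge0$.

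\textbf{Main obstacle.} The argument is routine; the only care needed is the zero-prime case $b_j=0$. There $v_j=0$, so the interpolant never asks for weight on a coordinate whose prime vanishes. This is the convention fixed in Lemma~\ref{lem:weighted-min-norm}, and it is what keeps the feasible $z$ well defined.
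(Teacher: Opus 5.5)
Your proposal is correct and follows the paper's proof essentially step for step. You compute the primes directly, build the nuisance-only certificate from $v=b/N$ with $z_{j+1}=r/N$ on $\{j:b_j\ne0\}$ and zero elsewhere (the same support convention the paper uses), bound $J\le r^2/N$ by minimum-norm optimality, and conclude via the target-mass identity with $p_1=1$.
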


\begin{proof}
The target historical column is exactly $a$, hence its one-dimensional
regression coefficient is one.  Each nuisance column has squared norm $r$,
which proves Equation~\eqref{eq:uni-prime}.  Let
$v=N^{-1}H_r^\top a=b/N$.  By
Equation~\eqref{eq:nuisance-interpolant}, $H_rv=a$.

We construct a feasible transformed coefficient with zero target coordinate.
Set $z_1^{\rm nui}=0$.  For each $j\in[N]$, when $b_j\ne0$, set
\[
z_{j+1}^{\rm nui}=\frac{v_j}{p_{j+1}}
=\frac{b_j/N}{b_j/r}=\frac rN;
\]
when $b_j=0$, both $v_j$ and $p_{j+1}$ are zero, so set
$z_{j+1}^{\rm nui}=0$.  This support convention is essential: no division by a
zero prime occurs.  Feasibility follows coordinate by coordinate:
\[
D z^{\rm nui}
=\begin{bmatrix}0\\v\end{bmatrix},
\qquad
XDz^{\rm nui}
=H_rv
=a.
\]
Moreover,
\[
\|z^{\rm nui}\|_2^2
=|\{j:b_j\ne0\}|\frac{r^2}{N^2}
\le\frac{r^2}{N}.
\]
By minimum-norm optimality the actual transformed objective $J$ is no larger.
Since $p_1=1$, Lemma~\ref{lem:target-mass} gives $w_1=J$, proving the claim.
\end{proof}

\begin{proof}[Proof of Proposition~\ref{prop:unit-power-constants},
univariate case]
After a completed prefix of $r$ labels, let $w^{(r)}$ be the univariate-primed
coefficient computed from that prefix.  It is fixed before the next input is
revealed.  Hold the next nuisance row $h_{r+1}$ fixed and write
$c_r=h_{r+1}^\top w^{(r)}_{2:d}$.  The two legal candidate examples are
\[
x_{r+1}^{\sigma}=(\sigma,h_{r+1}),
\qquad y_{r+1}^{\sigma}=\sigma,
\qquad \sigma\in\{\pm1\},
\]
Their raw predictions can be written uniformly as
\[
\widehat y_{r+1}^{\sigma}
=c_r+\sigma w_1^{(r)},
\qquad \sigma\in\{\pm1\}.
\]
When $1\le r\le\sqrt N/2$, Lemma
\ref{lem:univariate-hadamard-prefix} gives
$0\le w_1^{(r)}\le1/4$.  Lemma~\ref{lem:two-sign} therefore gives
\[
\frac12\sum_{\sigma\in\{\pm1\}}
\left(\clip(c_r+\sigma w_1^{(r)})-\sigma\right)^2
\ge(1-w_1^{(r)})^2\ge\frac9{16}.
\]
  The maximum of the two losses is at least their displayed average.  Append a
  maximizing sign, choosing $+1$ in a tie.  At $r=0$ the empty-history
  coefficient is zero, so both signs lose one and the same tie rule applies.
  Induction therefore compiles a finite deterministic
sign string before it is replayed online; no sign is selected from a runtime
prediction.

For $L=\lfloor\sqrt N/2\rfloor+1$, all $L$ rounds are therefore valid and
\[
\Reg_L^{\mathrm{clip},\UNI}
\ge1+\frac9{16}(L-1)
\ge\frac{9L}{16}.
\]
For $N\ge4$, one has $L\le N$, so all required Hadamard rows are available.
If $N\ge4(T-1)^2$, then $T\le L$; truncation after $T$ rounds gives
\[
\Reg_T^{\mathrm{clip},\UNI}\ge\frac{9T}{16}.
\]
\end{proof}

%% file: appendix/C3_pearson_hadamard.tex
\subsection{Pearson prime}
\label{app:pearson}

Pearson correlation is undefined on an empty or constant history.  We isolate
that issue in a two-round seed and thereafter maintain a paired state in which
every active correlation is defined.  This yields one sign sequence that is
simultaneously valid for every finite past-only totalization.

\subsubsection{The paired state}

For $s\ge1$, a \emph{completed paired history} consists, for each
$\ell\in[s]$, of the two examples
\[
((a_\ell,h_\ell),a_\ell)
\quad\text{and}\quad
((-a_\ell,-h_\ell),-a_\ell),
\qquad a_\ell\in\{\pm1\}.
\]
Let $a=(a_1,\ldots,a_s)^\top$, let $H_s$ contain the corresponding Hadamard
rows, and write $b=H_s^\top a$.

\begin{lemma}[Pearson primes and target mass in a paired state]
\label{lem:pearson-paired-state}
On a completed paired history, every feature column and the label vector has
positive empirical variance.  The Pearson primes are uniquely determined by
the usual correlation formula and satisfy
\begin{equation}
p_1=1,
\qquad
p_{j+1}=\frac{b_j}{s}\quad(j\in[N]).
\label{eq:pearson-prime}
\end{equation}
The second-stage target coefficient obeys
\[
0\le w_1\le\frac{s^2}{N}.
\]
Both conclusions are independent of the finite totalization.
\end{lemma}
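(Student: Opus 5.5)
The plan is to compute the empirical means and variances of the paired history directly, identify the Pearson primes with the univariate primes of the unpaired representative history, and then reuse Lemma~\ref{lem:univariate-hadamard-prefix} through the duplication invariance of Lemma~\ref{lem:pseudoinverse-invariances}.

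\emph{Means and variances.} The paired history has $n=2s$ rows. Each column, including the label column, is a sum of terms $c$ and $-c$, so every empirical mean is zero. Centering therefore does nothing, and $\widetilde X_i=X(:,i)$, $\widetilde y=y$. The target column has squared norm $2s>0$. Each nuisance column consists of $\pm1$ entries and has squared norm $2s>0$. The label vector equals the target column, so its variance is also positive. Hence every correlation is defined, and no totalization value is ever consulted. This already gives the independence from the finite totalization.

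\emph{The primes.} For the target, $p_1=y^\top y/(\|y\|\|y\|)=1$. For nuisance $j$, the inner product is $\widetilde X_{j+1}^\top\widetilde y=2\sum_\ell h_{\ell,j}a_\ell=2b_j$, and the denominator is $\sqrt{2s}\sqrt{2s}=2s$. This gives $p_{j+1}=b_j/s$, which is Equation~\eqref{eq:pearson-prime}. These coincide with the univariate primes of Lemma~\ref{lem:univariate-hadamard-prefix} computed on the $s$-row representative history $X_s=[a,H_s]$.

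\emph{Target mass.} With $D=\diag(p)$, the paired transformed design is $[X_sD;-X_sD]$ with labels $[a;-a]$. By item~3 of Lemma~\ref{lem:pseudoinverse-invariances}, its minimum-norm coefficient equals $(X_sD)^\dagger a$. Thus $w$ is exactly the univariate second-stage coefficient on $X_s$ with $r=s$, and Lemma~\ref{lem:univariate-hadamard-prefix} yields $0\le w_1\le s^2/N$. Alternatively, the explicit nuisance interpolant $z_{j+1}=s/N$ on $\{b_j\ne0\}$ is feasible for the paired system as well, which gives $J\le s^2/N$; the target-mass identity with $p_1=1$ then gives $w_1=J$.

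I expect no real obstacle. The only care point is verifying that the totalization never enters, which holds because all $N+1$ feature columns and the labels have nonzero centered vectors.
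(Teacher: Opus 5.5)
Your proposal is correct and follows the paper's proof. The mean, variance and prime computation is the same. Your alternative target-mass argument is exactly the paper's: the explicit certificate $z_{j+1}=s/N$ is feasible for the full paired system, and the target-mass identity with $p_1=1$ then gives $w_1=J$. Your primary route, reducing to Lemma~\ref{lem:univariate-hadamard-prefix} by opposite-duplication invariance, is the reduction the paper itself uses for powered Pearson in Lemma~\ref{lem:powered-individual-certificates}.

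The one detail to add is that the actual history interleaves each query with its negative. So writing the transformed design as $[X_sD;-X_sD]$ requires a simultaneous row permutation of design and labels, which item~2 of Lemma~\ref{lem:pseudoinverse-invariances} covers.
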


\begin{proof}
Each paired scalar column has the form $(q_1,-q_1,\ldots,q_s,-q_s)$ with
$q_\ell\in\{\pm1\}$.  Its empirical mean is zero and its squared centered
norm is $2s>0$; the label vector has the same properties.  The target column
equals the label vector, so its correlation is one.  For nuisance coordinate
$j$, the centered covariance numerator is
\[
\sum_{\ell=1}^s
\bigl(h_{\ell j}a_\ell+(-h_{\ell j})(-a_\ell)\bigr)
=2H_s(:,j)^\top a=2b_j,
\]
while both centered norms equal $\sqrt{2s}$.  This proves
Equation~\eqref{eq:pearson-prime}.  Because every variance is positive, a
totalization has no freedom on these coordinates.

For completeness, we repeat rather than merely cite the nuisance certificate.
Let
\[
v=\frac1N H_s^\top a=\frac bN,
\qquad
H_sv=a.
\]
Give the transformed target coordinate value zero.  If $b_j\ne0$, set
\[
z_{j+1}^{\rm nui}=\frac{v_j}{p_{j+1}}
=\frac{b_j/N}{b_j/s}=\frac sN;
\]
if $b_j=0$, then both $v_j$ and $p_{j+1}$ vanish and we set
$z_{j+1}^{\rm nui}=0$.  On one representative from each pair the transformed
nuisance prediction is $H_sv=a$; by sign symmetry it also fits every opposite
  row because
\[
(-H_s)v=-a.
\]
Thus this vector is feasible for the full paired transformed system and
\[
\|z^{\rm nui}\|_2^2
=|\{j:b_j\ne0\}|\frac{s^2}{N^2}
\le\frac{s^2}{N}.
\]
Minimum-norm optimality and Lemma~\ref{lem:target-mass}, with $p_1=1$, give
$0\le w_1=J\le s^2/N$.
\end{proof}

\begin{lemma}[Query--recovery invariant]
\label{lem:pearson-query-recovery}
Assume the history consists of $s\ge1$ completed pairs and
$s\le\sqrt N/2$.  There is a query sign
$a_{s+1}\in\{\pm1\}$, independent of the finite Pearson totalization, whose
clipped loss is at least $9/16$.  After its label is revealed, presenting the
exact opposite example gives raw and clipped loss zero and restores a
completed paired history with $s+1$ pairs.
\end{lemma}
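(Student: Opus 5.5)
The plan is to combine Lemma~\ref{lem:pearson-paired-state} with the two-sign argument for the query round, and the rank/interpolation argument of Theorem~\ref{thm:rank-adaptive} for the recovery round.

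\emph{Query round.} Fix the current paired history with $s$ pairs, and let $w$ be the Pearson-primed coefficient computed from it. By Lemma~\ref{lem:pearson-paired-state}, every empirical variance is positive. Hence $w$ is uniquely determined, independently of the totalization, and satisfies $0\le w_1\le s^2/N\le 1/4$ whenever $s\le\sqrt N/2$. Take the next unused Hadamard row $h_{s+1}$; at most $\sqrt N/2+1\le N$ rows are needed when $N\ge4$. Set $c=h_{s+1}^\top w_{2:d}$. The two candidate queries $(\sigma,h_{s+1})$ with label $\sigma$ have raw predictions $c+\sigma w_1$. Lemma~\ref{lem:two-sign} shows their average clipped loss is at least $(1-w_1)^2\ge 9/16$. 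We then choose $a_{s+1}$ as a maximizing sign, breaking ties toward $+1$. Since $w$ does not depend on the fill, this choice does not either.

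\emph{Recovery round.} After the label $a_{s+1}$ is revealed, the history consists of $s$ pairs plus the single row $(a_{s+1},h_{s+1})$. The next example is $x=(-a_{s+1},-h_{s+1})$ with label $-a_{s+1}$. This $x$ is the negative of a historical row, so $x^\top$ lies in the row span of the current history. By the proof of Theorem~\ref{thm:rank-adaptive}, the prediction is then exact provided the second-stage coefficient interpolates the history.

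\emph{Interpolation.} By Lemma~\ref{lem:prime-refit-interpolation}, interpolation holds iff $y\in\operatorname{col}(X\Pi)$. This is where I expect the main obstacle: the history now has odd length, so means are no longer zero, and we must check that the target prime is nonzero without consulting the totalization. The target column equals the label vector. That vector is nonconstant, because it contains both $a_\ell$ and $-a_\ell$ with $s\ge1$, so its variance is positive. The target correlation is therefore defined and equals $1$, giving $\pi_1=1$. The transformed target column $\pi_1y=y$ then interpolates alone, and no filled nuisance value matters, since any finite fill only rescales nuisance columns. Consequently $x^\top w=-a_{s+1}$, and both the raw and the clipped loss are zero.

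\emph{Conclusion.} Appending the opposite example restores a completed paired history with $s+1$ pairs, whose Hadamard rows are $h_1,\dots,h_{s+1}$ and whose sign vector is $(a,a_{s+1})$. This matches the form required by Lemma~\ref{lem:pearson-paired-state} at the next step, which closes the invariant.
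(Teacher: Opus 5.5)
Your proposal is correct and follows the paper's proof essentially step for step. The query sign comes from Lemma~\ref{lem:pearson-paired-state} together with the two-sign lemma and a $+1$ tie-break. For the recovery, the target column equals the nonconstant label vector, so $\pi_1=1$, $y\in\operatorname{col}(XD)$, and the refit interpolates. The only cosmetic difference is that you use the row-span argument of Theorem~\ref{thm:rank-adaptive} for the recovery round; the paper instead notes that the refit reproduces the just-appended query and applies linearity to $-x$, which is the same fact specialized.
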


\begin{proof}
Before the query, Lemma~\ref{lem:pearson-paired-state} gives
$0\le w_1\le s^2/N\le1/4$.  The coefficient is computed before the current
input.  Holding the nuisance row $h_{s+1}$ fixed, the two legal inputs
$(+1,h_{s+1})$ and $(-1,h_{s+1})$ therefore satisfy the hypotheses of
  Lemma~\ref{lem:two-sign}.  The larger of the two losses is at least their
  average, so choose a maximizing sign, taking $+1$ in a tie.  Its loss is at
  least $(1-w_1)^2\ge9/16$.  Every prime used for this query is
uniquely defined by the paired history, so the chosen sign is the same for all
finite totalizations.

Now append the query and reveal its label.  In this enlarged history the
  target feature column still equals the complete label vector and is
  nonconstant, hence its Pearson prime is one.  If $D$ is the diagonal prime
  matrix, then
\[
XD e_1=Xe_1=y.
\]
Thus the transformed coefficient $e_1$ is feasible for $XDz=y$, so
$y\in\operatorname{col}(XD)$, and
  Lemma~\ref{lem:prime-refit-interpolation} shows that the Moore--Penrose refit
  interpolates every stored row.  In particular, its raw prediction on the just
  appended query $x$ equals its label $y$.  The next round uses this refitted
coefficient and input $-x$ with label $-y$; linearity gives
\[
(-x)^\top w=-x^\top w=-y.
\]
Thus the recovery loss is zero.  Appending the recovery
row restores zero means and completes the new opposite pair.
\end{proof}

\subsubsection{The totalization-uniform construction}

\begin{proof}[Proof of Proposition~\ref{prop:unit-power-constants}, Pearson
case]
Start with the fixed seed pair
\[
(x_1,y_1)=((1,h_1),1),
\qquad
(x_2,y_2)=(-(1,h_1),-1).
\]
The empty-history coefficient is zero.  The prediction on the second seed
round may depend on the arbitrary length-one totalization, but it is finite;
neither seed loss is used in the lower bound.  After the pair is stored, the
history is in the state of Lemma~\ref{lem:pearson-paired-state} with $s=1$.

For $s=1,\ldots,M$, apply Lemma~\ref{lem:pearson-query-recovery}, using a
fixed tie-breaking convention, and append the selected query and its recovery.
If $N\ge4M^2$, every charged stage satisfies $s\le\sqrt N/2$.  The construction
also has enough distinct nuisance rows: $M+1\le N$ follows from
$N\ge4M^2$ for $M\ge1$.  Because the paired-state prediction and the selected
sign are independent of the totalization at every stage, this single finite
recursion produces one sign sequence that works simultaneously for all finite
past-only Pearson totalizations.  The entire sequence is fixed before online
replay.

There are $M$ charged queries, each with loss at least $9/16$; all recovery
losses are zero and the two seed losses are nonnegative.  Since every label is
the first coordinate, comparator $e_1$ has zero loss.  With $T=2+2M$,
\[
\Reg_T^{\mathrm{clip},\PEAR}
\ge\frac{9M}{16}=\frac9{32}(T-2).
\]
This proves the Pearson part of Proposition
\ref{prop:unit-power-constants}.
\end{proof}

%% file: appendix/C4_unit_power_assembly.tex
\subsection{From finite blocks to arbitrary horizons and dimensions}

The rule-specific proofs in Sections~\ref{app:lls}--\ref{app:pearson}
establish a slightly more flexible statement than
Proposition~\ref{prop:unit-power-constants}.  For any power of two
$N\ge4$, the multivariate and univariate compilers are valid for every length
at most
\[
L_N=\left\lfloor\frac{\sqrt N}{2}\right\rfloor+1,
\]
with per-round lower bounds $1/4$ and $9/16$, respectively.  For $N\ge4$,
$L_N\le N$, so all rows $h_1,\ldots,h_{L_N}$ required by either compiler are
available.  The Pearson
compiler is valid for any number
$q\le\lfloor\sqrt N/2\rfloor$ of charged query--recovery cycles after its
seed pair.  Indeed, these are precisely the prefix conditions checked in
Lemmas~\ref{lem:lls-hadamard-prefix},
\ref{lem:univariate-hadamard-prefix}, and
\ref{lem:pearson-query-recovery}.  We now choose the largest available
Hadamard block and account explicitly for all rounding.

\begin{proof}[Proof of Theorem~\ref{thm:main} using the admissible-prefix
constructions]
  For the multivariate and univariate rules, take
  $x_1=(1,\ldots,1)\in\{\pm1\}^d$ and $y_1=1$ when $T=1$.  The empty-history
  transformed design has zero rows, so its Moore--Penrose coefficient and the
  prediction are zero; the clipped loss is therefore $(0-1)^2=1$.
Hence assume $T\ge2$ for those rules; the Pearson statement already assumes
$T\ge4$.

Let $N$ be the largest power of two not exceeding $d-1$ and set $d'=N+1$.
Then $N\le d-1<2N$.  Since $d\ge5$, we have $N\ge4$, and
\[
d'=N+1>\frac d2,
\qquad
d'=N+1\le\frac{5N}{4}.
\]
Run the construction in the first $d'$ coordinates and pad the remaining
$d-d'$ coordinates with zeros.  Lemma~\ref{lem:zero-coordinate-padding}
shows that this embedding preserves every active prediction for the
multivariate and univariate rules.  For Pearson, every charged query begins
from a completed paired history.  Lemma~\ref{lem:pearson-paired-state} shows
that all original-coordinate correlations are then defined and unchanged by
padding, while every padded prime multiplies an identically zero column.  The
same is true after a query when the recovery coefficient is fitted.  Thus the
charged queries and exact recoveries remain totalization-independent; only
the two discarded seed losses may change with the dimension-dependent fill.

For the first two rules, compile $L=L_N$ active rounds.  Truncate if
$T<L$, and if $T>L$ use the zero-tail extension in
Lemma~\ref{lem:zero-tail-extension}.  The smaller per-round constant is
$1/4$, so the regret is at least $\frac14\min\{T,L\}$.  Since
$L=\lfloor\sqrt N/2\rfloor+1\ge\sqrt N/2$ and
$d'\le5N/4$,
\[
L\ge\frac{\sqrt{d'}}{\sqrt5}.
\]
For every $0<c\le1$, $\min\{A,cB\}\ge c\min\{A,B\}$.  Applying this fact
twice, first with $c=1/\sqrt5$ and then using
$\sqrt{d'}>\sqrt d/\sqrt2$, gives
\begin{align*}
\Reg_T^{\rm clip}
&\ge\frac14\min\{T,L\}
\ge\frac1{4\sqrt5}\min\{T,\sqrt{d'}\}\\
&\ge\frac1{4\sqrt{10}}\min\{T,\sqrt d\}.
\end{align*}

For Pearson, use
\[
q=\min\!\left\{\left\lfloor\frac{T-2}{2}\right\rfloor,
                \left\lfloor\frac{\sqrt N}{2}\right\rfloor\right\}
\]
query--opposite cycles after the seed pair.  This choice satisfies
$2+2q\le T$ and the Pearson admissibility condition
$q\le\lfloor\sqrt N/2\rfloor$.  It also leaves enough distinct Hadamard
rows for the seed and all queries: since $N\ge4$,
$q+1\le\lfloor\sqrt N/2\rfloor+1\le N$.  For $T\ge4$,
\[
\left\lfloor\frac{T-2}{2}\right\rfloor\ge\frac T5:
\]
the case $T=4$ holds directly because $1\ge4/5$.  For $T\ge5$,
\[
\left\lfloor\frac{T-2}{2}\right\rfloor
\ge\frac{T-3}{2}
\ge\frac T5.
\]
Since $N\ge4$,
\[
\left\lfloor\frac{\sqrt N}{2}\right\rfloor
\ge\frac{\sqrt N}{4}
\ge\frac{\sqrt{d'}}{2\sqrt5}
\ge\frac{\sqrt{d'}}5.
\]
It follows that
$q\ge\frac15\min\{T,\sqrt{d'}\}$.  The $q$ query losses sum to at least
\[
\frac{9q}{16}
\ge\frac9{80}\min\{T,\sqrt{d'}\}
\ge\frac9{80\sqrt2}\min\{T,\sqrt d\}.
\]
If $2+2q<T$, append the remaining zero-input, zero-label tail; by
Lemma~\ref{lem:zero-tail-extension} it contributes exactly zero loss.

Finally,
\[
\frac1{4\sqrt{10}}>\frac1{15},
\qquad
\frac9{80\sqrt2}>\frac1{15},
\]
which proves the displayed constant in Theorem~\ref{thm:main}.  To prove its
sparse-logarithmic consequence, fix any of the three rules and suppose that
some finite constant $C$ gave the uniform bound
\[
\Reg_T\le Ck\log\!\left(\frac{ed}{k}\right).
\]
Take $k=1$ and dimensions of order $d=T^2$, rounding to the next admissible
integer if needed.  Along this sequence of horizons,
\[
\Reg_T\ge \frac{T}{15}
\qquad\text{but}\qquad
C\log(ed)=O(C\log T).
\]
This is impossible as $T\to\infty$.  Because clipping
cannot increase square loss, the same sequence also refutes the corresponding
raw-prediction guarantee.
\end{proof}

%% file: appendix/D_powered_shared.tex
\section{Proof of the powered shared-witness theorem}
\label{app:powered}

Fix \(\alpha\ge1\) and define
\[
f_\alpha(q)=\operatorname{sign}(q)|q|^\alpha,
\qquad
D_\alpha(p)=\diag\bigl(f_\alpha(p_1),\ldots,f_\alpha(p_d)\bigr).
\]
The powered rule replaces the base-prime matrix by \(D_\alpha(p)\) in the
second Moore--Penrose fit.  We first record the individual target-mass bounds,
then prove the stronger fact used by Theorem~\ref{thm:powered}: on completed
paired histories, all three powered refits are exactly the same vector.

\subsection{Individual powered target-mass certificates}

\begin{lemma}[Powered Hadamard certificates]
\label{lem:powered-individual-certificates}
Let \(H_r\) contain \(r\ge1\) rows of an \(N\times N\) Hadamard matrix,
let \(a\in\{\pm1\}^r\), and put \(X=[a,H_r]\).  Before the next query, the
target coefficient of the powered-univariate refit satisfies
\[
0\le w_1^{\UNI}\le \frac{r^{2\alpha}}N,
\]
whereas the powered-multivariate refit satisfies
\[
0\le w_1^{\LLS}\le \frac{r^{2\alpha}}N.
\]
After \(s\ge1\) completed opposite pairs, powered Pearson priming satisfies
the univariate bound with \(r=s\), independently of every finite Pearson
totalization.
\end{lemma}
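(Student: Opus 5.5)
The plan is to follow the unit-power certificates of Lemmas~\ref{lem:univariate-hadamard-prefix}, \ref{lem:lls-hadamard-prefix}, and \ref{lem:pearson-paired-state}, replacing each prime by its powered version $f_\alpha(p)$. In each case I will exhibit a feasible transformed coefficient whose target coordinate is zero. This bounds $J=\|(XD_\alpha)^\dagger a\|_2^2$, and the target-mass identity, Lemma~\ref{lem:target-mass}, then gives $w_1=\pi_1^2J\ge0$. Feasibility is enough to ensure $a\in\operatorname{col}(XD_\alpha)$, so Lemma~\ref{lem:weighted-min-norm} applies and $J$ is at most the squared norm of the certificate.

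\emph{Univariate case.} By Equation~\eqref{eq:uni-prime}, $\pi_1=f_\alpha(1)=1$ and $\pi_{j+1}=f_\alpha(b_j/r)$. Use the Hadamard interpolant $v=b/N$ from Equation~\eqref{eq:nuisance-interpolant}. For $b_j\ne0$, set
\[
z_{j+1}=\frac{v_j}{\pi_{j+1}}=\frac{\operatorname{sign}(b_j)\,r^\alpha}{N|b_j|^{\alpha-1}},
\]
and for $b_j=0$ set $z_{j+1}=0$. Then $XD_\alpha z=H_rv=a$. Since $b_j$ is a nonzero integer, $|b_j|\ge1$ and $\alpha\ge1$, so $|z_{j+1}|\le r^\alpha/N$. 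Summing over at most $N$ coordinates gives $J\le r^{2\alpha}/N$, and hence $w_1=J\le r^{2\alpha}/N$.

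\emph{Multivariate case.} Equation~\eqref{eq:lls-full-prime} gives $p=(r,b)/(N+r)$. Because $f_\alpha(cq)=c^\alpha f_\alpha(q)$ for $c>0$, we have $D_\alpha(p)=(N+r)^{-\alpha}D_\alpha((r,b))$. Lemma~\ref{lem:pseudoinverse-invariances}(4) says a positive global factor does not change the refit $w$. So $w$ coincides with the refit for the unnormalized prime vector $(r,b)$, and that refit equals the univariate refit multiplied by $r^\alpha$ as a global scalar. It is therefore literally the same coefficient $w$, and $w_1^{\LLS}=w_1^{\UNI}\le r^{2\alpha}/N$.

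The only point needing care is this reduction. I will check that the univariate prime vector is $(1,b/r)$, which equals $(r,b)$ divided by the positive scalar $r$. Alternatively, I can bound directly with the certificate $v_j/\pi_{j+1}$, of magnitude at most $(N+r)^\alpha/N$, multiplied by $\pi_1^2=(r/(N+r))^{2\alpha}$.

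\emph{Pearson case.} Lemma~\ref{lem:pearson-paired-state} shows that on a completed paired history every variance is positive and the base primes are exactly $(1,b/s)$. No fill value is used, so the powered primes are those of the univariate rule with $r=s$, independently of the totalization. The univariate certificate $z$ above fits each representative row, since $H_sv=a$, and by sign symmetry it fits each opposite row as well, since $(-H_s)v=-a$. It is therefore feasible for the full paired transformed system with the same norm bound. Equivalently, Lemma~\ref{lem:pseudoinverse-invariances}(3) says opposite duplication leaves the minimum-norm solution unchanged. This gives $0\le w_1^{\PEAR}\le s^{2\alpha}/N$.

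I expect no step to be a real obstacle. The one delicate point is the zero-prime convention, where $b_j=0$ forces both $v_j=0$ and $\pi_{j+1}=0$, together with the integrality bound $|b_j|\ge1$ used to control $|b_j|^{1-\alpha}$.
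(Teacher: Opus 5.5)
Your proposal is correct. The univariate and Pearson parts match the paper's proof. The paper uses the same nuisance-only certificate $z_{j+1}=v_j/f_\alpha(b_j/r)$, bounds it by $r^\alpha/N$ using integrality of $b_j$, and applies the target-mass identity with $\pi_1=1$. For Pearson it likewise relies on positive paired variances, and it extends the certificate to the opposite rows via the opposite-duplication identity rather than your direct feasibility check; the two are equivalent.

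The genuine difference is the multivariate case. The paper builds a separate certificate in the multivariate transformed design, with squared norm at most $(N+r)^{2\alpha}/N$, and lets the factor $\pi_1^2=(r/(N+r))^{2\alpha}$ cancel it. You instead observe that already on the unpaired prefix $p^{\LLS}=(r,b)/(N+r)=\frac{r}{N+r}\,p^{\UNI}$. Positive homogeneity of $f_\alpha$ and item~4 of Lemma~\ref{lem:pseudoinverse-invariances} then give $w^{\LLS}=w^{\UNI}$ exactly.

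This is the mechanism the paper reserves for Lemma~\ref{lem:paired-powered-equivalence} on paired histories. Your version shows it holds before any pairing, so it proves more than the target bound: the whole coefficient vectors coincide on every unpaired Hadamard prefix. Consequently one unpaired sign sequence already serves both the univariate and multivariate rules, and pairing is needed only to bring Pearson into line. The paper's direct certificate route needs only norm bounds, so it would survive perturbations of the primes that destroy exact proportionality.

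One wording fix: when passing from $(1,b/r)$ to $(r,b)$, it is the prime matrix, not the refit, that is rescaled by $r^\alpha$. The refit is invariant, as your following sentence correctly says.
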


\begin{proof}
Write \(b=H_r^\top a\).  For univariate priming, the target base prime is
one and the nuisance base primes are
\[
p_1=1,
\qquad
p_{j+1}=\frac{b_j}{r},\quad j\in[N].
\]
The nuisance vector \(v=N^{-1}H_r^\top a=b/N\) satisfies \(H_rv=a\) by
Lemma~\ref{lem:hadamard-rounding}.  For \(b_j\ne0\), set
\[
z_{j+1}=\frac{v_j}{f_\alpha(b_j/r)},
\]
and set \(z_{j+1}=0\) when \(b_j=0\); also set \(z_1=0\).  If \(b_j=0\),
then \(v_j=0\), so no interpolation mass is lost.  If \(b_j\ne0\), its
integer value has magnitude at least one and
\[
|z_{j+1}|
=\frac{|b_j|/N}{(|b_j|/r)^\alpha}
=\frac{r^\alpha}{N|b_j|^{\alpha-1}}
\le \frac{r^\alpha}{N}.
\]
Coordinatewise, \(D_\alpha(p)z=(0,v)\), and hence
\[
XD_\alpha(p)z=H_rv=a,
\qquad
\|z\|_2^2
\le N\left(\frac{r^\alpha}{N}\right)^2
=\frac{r^{2\alpha}}N.
\]
The
target-mass identity, Lemma~\ref{lem:target-mass}, now gives
\(0\le w_1^{\UNI}\le r^{2\alpha}/N\).

For multivariate priming, Lemma~\ref{lem:lls-hadamard-prefix} gives the full
base-prime vector
\[
p_1=\frac r{N+r},
\qquad
p_{j+1}=\frac{b_j}{N+r},\quad j\in[N],
\qquad Xp=a.
\]
Use the nuisance interpolant \(v=b/N\) from the univariate calculation.
For \(b_j\ne0\), represent \(v_j\) in the transformed nuisance design by
\[
z_{j+1}=\frac{v_j}{f_\alpha(b_j/(N+r))},
\]
and set \(z_{j+1}=0\) when \(b_j=0\), as well as \(z_1=0\).  Since every
nonzero \(b_j\) is an integer,
\[
|z_{j+1}|
=\frac{(N+r)^\alpha}{N|b_j|^{\alpha-1}}
\le\frac{(N+r)^\alpha}{N}.
\]
This nuisance-only vector interpolates \(a\) and has squared norm at most
\((N+r)^{2\alpha}/N\).  If \(J\) is the minimum squared transformed norm,
Lemma~\ref{lem:target-mass} and feasibility give
\[
w_1^{\LLS}
=f_\alpha(p_1)^2J
\le \left(\frac{r}{N+r}\right)^{2\alpha}
     \frac{(N+r)^{2\alpha}}{N}
=\frac{r^{2\alpha}}N.
\]

Finally, after \(s\) completed opposite pairs, every active column and the
label vector have zero mean and squared centered norm \(2s\).  The Pearson
base primes are therefore
\[
p_1=1,
\qquad
p_{j+1}=\frac{H_s(:,j)^\top a}{s},
\quad j\in[N],
\]
exactly as in
  Lemma~\ref{lem:pearson-paired-state}.  On one representative from each pair,
  the powered transformed system is therefore the powered-univariate system
  with \(r=s\).  The full paired design and label vector stack this system with
  its negative.  By the opposite-duplication identity in
  Lemma~\ref{lem:pseudoinverse-invariances}, stacking leaves the
  Moore--Penrose coefficient unchanged.  The univariate certificate thus gives
  \(w_1^{\PEAR}\le s^{2\alpha}/N\).  Since all correlations used here are
  defined, the calculation is independent of the Pearson totalization.
\end{proof}

The individual bounds already allow rule-dependent adverse signs.  The shared
result is stronger because it identifies the entire coefficient vector, not
only its first coordinate.

\subsection{Exact equivalence on paired histories}

For \(s\ge1\), let \(H_s\) be the first \(s\) Hadamard rows and let
\(a=(\sigma_1,\ldots,\sigma_s)^\top\in\{\pm1\}^s\).  One representative
from each completed pair forms
\[
X_s=[a,H_s],\qquad y_s=a.
\]
Up to a row permutation, the full paired history is
\begin{equation}
\bar X_s=\begin{bmatrix}X_s\\-X_s\end{bmatrix},
\qquad
\bar y_s=\begin{bmatrix}a\\-a\end{bmatrix}.
\label{eq:paired-history}
\end{equation}

\begin{lemma}[Base primes on a paired history]
\label{lem:paired-base-primes}
Let \(b=H_s^\top a\).  On the history in
Equation~\eqref{eq:paired-history},
\[
p^{\UNI}=p^{\PEAR}=\left(1,\frac bs\right),
\qquad
p^{\LLS}=\left(\frac{s}{N+s},\frac b{N+s}\right)
=\frac{s}{N+s}p^{\UNI}.
\]
Every Pearson correlation in this display is defined.
\end{lemma}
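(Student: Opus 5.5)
The plan is to compute each base prime directly on the paired history $\bar X_s,\bar y_s$ of Equation~\eqref{eq:paired-history}. By Lemma~\ref{lem:pseudoinverse-invariances}, row permutations change nothing, so we may work with the stacked form. We then reduce every computation to the single-representative design $X_s=[a,H_s]$.

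For the univariate prime, each historical column of $\bar X_s$ is a column $c$ of $X_s$ stacked with $-c$. Hence its squared norm is $2\|c\|_2^2$, and its inner product with $\bar y_s$ is $2c^\top a$. The factor two cancels, so $p^{\UNI}_i=c_i^\top a/\|c_i\|_2^2$ computed on $X_s$. The target column is $a$, with $\|a\|_2^2=s$, which gives $p_1=1$. Each Hadamard column has squared norm $s$, since its entries are $\pm1$, which gives $p_{j+1}=b_j/s$. No column is zero, so no zero-column convention enters.

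For Pearson, I will reuse the first part of the proof of Lemma~\ref{lem:pearson-paired-state} verbatim. Every column of $\bar X_s$ and the label vector have entries $(q_\ell,-q_\ell)$, so their empirical means vanish and their centered squared norms equal $2s>0$. All correlations are therefore defined, and no totalization enters. The centered covariance with the labels is $2c^\top a$, and dividing by $\sqrt{2s}\cdot\sqrt{2s}=2s$ gives $p_1=1$ and $p_{j+1}=b_j/s$. This coincides with $p^{\UNI}$.

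For the multivariate prime, the duplication identity in Lemma~\ref{lem:pseudoinverse-invariances}, item 3, gives $\bar X_s^\dagger\bar y_s=X_s^\dagger a$. This quantity was computed in Equation~\eqref{eq:lls-full-prime} of Lemma~\ref{lem:lls-hadamard-prefix}, with $r=s$, as $(s,b)/(N+s)$. Factoring out $s/(N+s)$ yields $\frac{s}{N+s}(1,b/s)=\frac{s}{N+s}p^{\UNI}$.

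There is no real obstacle here. The only point needing care is the remark that Pearson is defined on every coordinate, including Hadamard columns with $b_j=0$. In that case the variance is still positive, and the correlation is simply zero rather than undefined, which the centered-norm computation already covers.
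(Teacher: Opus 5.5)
Your proposal is correct and follows essentially the same route as the paper. Both arguments reduce to the stacked order by row-permutation invariance, observe that duplication with negatives doubles each univariate numerator and denominator, compute the Pearson primes from zero means and centered norms $2s$ (including the $b_j=0$ case), and obtain the multivariate prime from the duplication identity together with Lemma~\ref{lem:lls-hadamard-prefix} at $r=s$.
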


\begin{proof}
All three first-stage statistics, and the second-stage Moore--Penrose refit,
are invariant under a common permutation of the historical rows.  For both
least-squares fits this follows from the orthogonal row-factor identity in
Lemma~\ref{lem:pseudoinverse-invariances}; for the univariate and Pearson
primes it also follows directly because the relevant inner products, means,
and norms are unchanged.  We may therefore work with the stacked order in
Equation~\eqref{eq:paired-history} throughout this subsection.

Duplicating every row with its negative doubles each univariate numerator and
denominator, so it leaves the univariate prime unchanged.  This gives
\(p_1^{\UNI}=1\) and \(p_{j+1}^{\UNI}=b_j/s\).

Every paired feature column and the paired label vector have mean zero and
squared norm \(2s>0\).  Their Pearson correlation therefore has denominator
\(2s\).  The target numerator is \(2s\), whereas nuisance coordinate \(j\)
has numerator \(2b_j\).  Hence \(p^{\PEAR}=p^{\UNI}\), including when
\(b_j=0\).

By Lemma~\ref{lem:pseudoinverse-invariances}, the first-stage multivariate
minimum-norm coefficient on \((\bar X_s,\bar y_s)\) equals that on
\((X_s,a)\).  Since \(X_sX_s^\top=NI_s+aa^\top\), the computation in
Lemma~\ref{lem:lls-hadamard-prefix} gives
\(p^{\LLS}=(s,b)/(N+s)\), proving the final identity.
\end{proof}

\begin{lemma}[Exact powered-refit equivalence]
\label{lem:paired-powered-equivalence}
Fix one common \(\alpha\ge1\).  On every nonempty completed paired history,
the three powered second-stage coefficient vectors are identical.  At empty
history, all three coefficient vectors are zero for every finite Pearson
totalization.
\end{lemma}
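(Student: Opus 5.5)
The plan is to reduce the three refits to a single computation by combining Lemma~\ref{lem:paired-base-primes} with the scaling invariance in Lemma~\ref{lem:pseudoinverse-invariances}.

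On a nonempty completed paired history, Lemma~\ref{lem:paired-base-primes} gives $p^{\PEAR}=p^{\UNI}$ exactly. Every Pearson correlation there is defined, so no totalization enters, and the powered univariate and Pearson multipliers $D_\alpha(p^{\UNI})$ and $D_\alpha(p^{\PEAR})$ coincide. Their second-stage coefficients $D_\alpha(p)(\bar X_sD_\alpha(p))^\dagger\bar y_s$ are then literally the same expression.

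For the multivariate rule, I will use $p^{\LLS}=c\,p^{\UNI}$ with $c=s/(N+s)>0$. Since $f_\alpha$ is positively homogeneous of degree $\alpha$ on each sign, $f_\alpha(cq)=c^\alpha f_\alpha(q)$ for $c>0$. Hence $D_\alpha(p^{\LLS})=c^\alpha D_\alpha(p^{\UNI})$ with $c^\alpha\neq0$. Item~4 of Lemma~\ref{lem:pseudoinverse-invariances}, applied with $D=D_\alpha(p^{\UNI})$ and scalar $c^\alpha$, gives
\[
c^\alpha D\bigl(\bar X_s c^\alpha D\bigr)^\dagger\bar y_s=D(\bar X_sD)^\dagger\bar y_s .
\]
So the multivariate refit equals the univariate one. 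The history in Equation~\eqref{eq:paired-history} is written only up to a row permutation, so I will first invoke the row-permutation invariance from the same lemma. It applies to the base primes as argued in Lemma~\ref{lem:paired-base-primes}, and to the second fit via $(QA)^\dagger Qy=A^\dagger y$. This lets me work in the stacked order throughout.

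For the empty history there are no rows, so the transformed design is the $0\times d$ matrix. Its pseudoinverse is the $d\times0$ zero map, so $z=0$ and $w=D\cdot0=0$ for any finite multipliers. In particular this holds whatever finite values a Pearson totalization assigns, since finiteness is all that is needed.

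The only point I expect to need care is the homogeneity step. The common factor must be strictly positive so that the signs of the multipliers are unchanged, and zero primes must stay zero; both hold because $c>0$ and $f_\alpha(0)=0$. Everything else is direct substitution.
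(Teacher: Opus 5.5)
Your proposal is correct and matches the paper's proof: the univariate and Pearson primes coincide by Lemma~\ref{lem:paired-base-primes}, positive homogeneity gives $D_{\LLS}=c_s^\alpha D_{\UNI}$ with $c_s=s/(N+s)>0$, the global scalar cancels by Lemma~\ref{lem:pseudoinverse-invariances}, and at empty history the zero-row design forces a zero coefficient for any finite fill. Your separate remark on row-permutation invariance is handled in the paper inside the proof of Lemma~\ref{lem:paired-base-primes}, so nothing is missing.
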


\begin{proof}
Let \(c_s=s/(N+s)>0\).  Positive homogeneity of \(f_\alpha\) and
Lemma~\ref{lem:paired-base-primes} give
\[
D_{\PEAR}=D_{\UNI},
\qquad
D_{\LLS}=c_s^\alpha D_{\UNI}.
\]
This remains true at zero-prime coordinates.  Global diagonal scaling cancels
from the refit by Lemma~\ref{lem:pseudoinverse-invariances}, so
\begin{equation}
D_{\LLS}(\bar X_sD_{\LLS})^\dagger\bar y_s
=D_{\UNI}(\bar X_sD_{\UNI})^\dagger\bar y_s
=D_{\PEAR}(\bar X_sD_{\PEAR})^\dagger\bar y_s.
\label{eq:paired-weight-equivalence}
\end{equation}
At empty history the transformed design has zero rows.  Its pseudoinverse
maps the empty label vector to zero, whatever finite primes a totalization
assigns, so all three returned coefficients are zero.
\end{proof}

\begin{lemma}[Shared target-mass and two-sign loss]
\label{lem:shared-powered-query}
Let \(w_s\) denote the common coefficient before the next query after \(s\)
completed pairs.  If \(0\le s<M\) and \(N\ge8M^{2\alpha}\), then
\[
0\le w_{s,1}\le\frac18.
\]
For the fixed next nuisance row \(h_{s+1}\), let \(L_s(\sigma)\) be the
clipped loss on current input \((\sigma,h_{s+1})\) and label \(\sigma\).
Then
\begin{equation}
\frac{L_s(+1)+L_s(-1)}2
\ge (1-w_{s,1})^2
\ge\frac{49}{64}.
\label{eq:shared-query-loss}
\end{equation}
\end{lemma}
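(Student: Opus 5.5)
We handle the empty history and nonempty paired histories separately, using the individual powered certificates for the latter. When $s=0$ the history is empty, and Lemma~\ref{lem:paired-powered-equivalence} makes the common coefficient zero. So $w_{0,1}=0\le1/8$ trivially. Moreover, both predictions are $0$, so $L_0(\pm1)=1$ and the average is $1\ge(1-0)^2$.

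For $1\le s<M$, Lemma~\ref{lem:paired-powered-equivalence} says the common coefficient $w_s$ equals the powered-univariate (equivalently Pearson) refit on the paired history. Lemma~\ref{lem:powered-individual-certificates}, in its Pearson clause with $r=s$, then gives
\[
0\le w_{s,1}\le \frac{s^{2\alpha}}{N}\le\frac{M^{2\alpha}}{N}\le\frac18 ,
\]
using $s<M$ and $N\ge8M^{2\alpha}$. The same bound follows via the opposite-duplication identity of Lemma~\ref{lem:pseudoinverse-invariances} together with the univariate certificate. Since the bound $s^{2\alpha}/N$ is monotone in $s$, no integrality issue arises.

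For the loss, the coefficient $w_s$ is computed before the current input is revealed. Hence the raw prediction on $(\sigma,h_{s+1})$ is $c_s+\sigma w_{s,1}$ with $c_s=h_{s+1}^\top w_{s,2:d}$ independent of $\sigma$. Because $0\le w_{s,1}\le1$, Lemma~\ref{lem:two-sign} with $b=c_s$ and $w=w_{s,1}$ gives the first inequality in Equation~\eqref{eq:shared-query-loss}. The second follows from $(1-w_{s,1})^2\ge(7/8)^2=49/64$.

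The only delicate point is confirming that the hypotheses of the certificate hold on the paired history. We need all Pearson correlations to be defined, which Lemma~\ref{lem:paired-base-primes} provides. We also need the Hadamard rows $h_1,\dots,h_{s+1}$ to be distinct and available, which holds since $M\le N$. Neither step requires new computation.
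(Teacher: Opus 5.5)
Your proof is correct and follows essentially the paper's route. The empty history gives the zero coefficient. For $s\ge1$ you bound $w_{s,1}\le s^{2\alpha}/N\le M^{2\alpha}/N\le 1/8$ by carrying the powered Hadamard certificate over to the paired history, and the two-sign clipping lemma then gives $(1-w_{s,1})^2\ge 49/64$. The only difference is that you cite the Pearson clause of the certificate lemma, whereas the paper applies the univariate clause to one representative per pair together with opposite duplication; that Pearson clause is itself proved by the same duplication argument, which you also note as an alternative.
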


\begin{proof}
For \(s=0\), \(w_s=0\).  For \(s\ge1\), apply the univariate part of
Lemma~\ref{lem:powered-individual-certificates} to one representative from
each pair.  Lemma~\ref{lem:pseudoinverse-invariances} shows that duplicating
the transformed equations with their negatives does not change the refit,
and therefore
\[
0\le w_{s,1}\le \frac{s^{2\alpha}}N
\le\frac{M^{2\alpha}}N\le\frac18.
\]
The common coefficient is computed before the current input is revealed.
Writing \(b_s=h_{s+1}^\top w_{s,2:d}\), the two raw predictions are
\[
\widehat y_s^{\sigma}
=b_s+\sigma w_{s,1},
\qquad \sigma\in\{\pm1\}.
\]
Lemma~\ref{lem:two-sign} gives the
first inequality in Equation~\eqref{eq:shared-query-loss}; the target-mass
bound gives the second.
\end{proof}

\subsection{Oblivious compilation and selector robustness}

\begin{proposition}[One compiled string for all three-rule selectors]
\label{prop:shared-powered-compiler}
Fix \(\alpha\ge1\) and \(M\ge1\), choose a power of two
\(N\in[8M^{2\alpha},16M^{2\alpha})\), and fix an \(N\times N\) Hadamard
matrix.  There exists a sign string
\(\sigma^*=(\sigma_1^*,\ldots,\sigma_M^*)\), depending only on these fixed
objects, such that the length-\(2M\) sequence
\begin{align*}
x_{2s+1}&=(\sigma_{s+1}^*,h_{s+1}),
&y_{2s+1}&=\sigma_{s+1}^*,\\
x_{2s+2}&=(-\sigma_{s+1}^*,-h_{s+1}),
&y_{2s+2}&=-\sigma_{s+1}^*,
\end{align*}
for \(s=0,\ldots,M-1\), gives clipped regret at least \(49M/64\) to
every three-rule selector in Definition~\ref{def:three-rule-selector}.  The
bound holds for every realization of the selector's random tape and every
finite Pearson totalization, under both mixture semantics
\eqref{eq:coefficient-selector} and \eqref{eq:prediction-selector}.
\end{proposition}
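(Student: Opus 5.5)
The plan is to build $\sigma^*$ by a finite offline recursion over the paired sign tree, exploiting Lemma~\ref{lem:paired-powered-equivalence}. Before each charged query there is then a single common coefficient vector $w_s$, shared by the three rules and by every finite Pearson totalization. Given a prefix $\sigma_1^*,\ldots,\sigma_s^*$ with $0\le s<M$, I would form the completed paired history of Equation~\eqref{eq:paired-history} and compute $w_s$. At $s=0$ this is $w_0=0$ for every totalization. For $s\ge1$ it is the common refit, in which every Pearson correlation is defined by Lemma~\ref{lem:paired-base-primes}. I would then choose $\sigma_{s+1}^*\in\{\pm1\}$ maximizing $L_s(\sigma)$, breaking ties toward $+1$. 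By Lemma~\ref{lem:shared-powered-query} the maximum is at least the average, hence at least $49/64$. The recursion depends only on $\alpha$, $N$, $H$ and the tie rule, so the whole string is fixed before play.

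Next I would check selector robustness on each charged odd round $2s+1$. The history is exactly the completed paired state, so each rule's coefficient is $w_s$. Any $\lambda_t\in\Delta_{\mathcal R}$, however it depends on the current input and the tape $\xi$, gives $\sum_R\lambda_{t,R}w_t^R=w_s$, since the weights sum to one. Under the coefficient semantics \eqref{eq:coefficient-selector}, the prediction is therefore $\clip(x_t^\top w_s)$. Under the prediction semantics \eqref{eq:prediction-selector}, it is $\sum_R\lambda_{t,R}\clip(x_t^\top w_s)=\clip(x_t^\top w_s)$. Either way the loss equals $L_s(\sigma_{s+1}^*)\ge49/64$, pathwise in $\xi$. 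Recovery rounds are nonnegative and are simply discarded, so I do not need to show they are zero, and the first recovery may depend on the Pearson fill. Summing over the $M$ charged rounds gives $\Reg_{2M}^{\rm clip}\ge49M/64$. The comparator $e_1$ has zero loss because $y_t=x_{t,1}$.

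The one point I would verify carefully is that the charged histories really are the paired state the lemmas require. After the recovery at round $2s+2$, the history consists of $s+1$ complete opposite pairs, so the state used for the next query matches Equation~\eqref{eq:paired-history} up to row permutation, which Lemma~\ref{lem:pseudoinverse-invariances} absorbs. I would also confirm that the rows $h_1,\ldots,h_M$ are distinct, since $M\le N$ holds because $N\ge8M^{2\alpha}$. Finally, the condition $N\ge8M^{2\alpha}$ is exactly what Lemma~\ref{lem:shared-powered-query} requires for $s<M$. No other step seems delicate: the main content is that exact coefficient equality makes every mixture collapse, so the selector's dependence on $x_t$ is irrelevant.
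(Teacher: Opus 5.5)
Your proposal is correct and follows essentially the same route as the paper: a greedy offline recursion over completed paired histories using Lemma~\ref{lem:paired-powered-equivalence} and Lemma~\ref{lem:shared-powered-query}, then the observation that exact coefficient equality collapses both mixture semantics pathwise in the tape, with the recovery losses (including the fill-dependent first one) discarded. Your extra checks on row permutation and on $M\le N$ correspond to what the paper verifies in Lemma~\ref{lem:paired-base-primes} and in the proof of Theorem~\ref{thm:powered}.
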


\begin{proof}
We construct the string by a finite greedy recursion.  Suppose the first
\(s\) signs have been fixed and their query--opposite pairs have been
appended.  This completed prefix determines the common vector \(w_s\) in
Lemma~\ref{lem:paired-powered-equivalence}.  Evaluate the two legal extensions
\(\sigma=+1\) and \(\sigma=-1\) on the fixed next nuisance row
  \(h_{s+1}\).  By Lemma~\ref{lem:shared-powered-query},
\[
\max_{\sigma\in\{\pm1\}}L_s(\sigma)
\ge\frac{L_s(+1)+L_s(-1)}2
\ge\frac{49}{64}.
\]
Select a
  maximizing sign, choosing \(+1\) in a tie, and
then append its exact opposite.  Induction for \(s=0,\ldots,M-1\) produces a
single finite string before online play.

On every charged query, Equation~\eqref{eq:paired-weight-equivalence} makes
the three powered second-stage coefficient vectors identical.  Hence any
convex coefficient
mixture equals \(w_s\), even if its weights depend on the current input and
the random tape.  The three raw scalar predictions, and therefore their
individually clipped versions, are also identical, so a convex mixture under
the second semantics gives the same charged prediction.  Switchers are
simplex vertices and are covered as well.  Thus every selector and every
random-tape realization incurs at least \(49/64\) on each of the \(M\)
queries.  We discard the nonnegative recovery losses.

At \(s=0\), the query uses the totalization-independent zero vector; the
first recovery may depend on the Pearson fill, but its loss is discarded.
For every \(s\ge1\), Lemma~\ref{lem:paired-base-primes} shows that all active
Pearson correlations are defined and independent of the fill.  The greedy
choice is therefore uniform over all finite Pearson totalizations.
\end{proof}

\begin{corollary}[Oblivious random-sign companion]
\label{cor:shared-random-signs}
Under the same fixed \((\alpha,M,N,H)\), let the query signs be independent
uniform Rademacher variables, independent also of the selector's random tape
\(\xi\), and append the exact opposite after every query.
For every three-rule selector and every fixed random-tape realization \(\xi\),
\[
\mathbb E_\sigma\!\left[\Reg_{2M}^{\rm clip}\mid\xi\right]
\ge\frac{49M}{64}.
\]
Consequently the same inequality holds after also averaging over any law of
\(\xi\).
\end{corollary}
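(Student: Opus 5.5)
The plan is to show that, on every charged query, the conditional expectation of the clipped loss over the current query sign is at least $49/64$, and then sum over the $M$ queries. Fix a random-tape realization $\xi$ and a query index $s\in\{0,\ldots,M-1\}$. Condition on the first $s$ signs $\sigma_1,\ldots,\sigma_s$. Since every query has been followed by its exact opposite, the completed history is a paired history as in Equation~\eqref{eq:paired-history}.

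By Lemma~\ref{lem:paired-powered-equivalence}, the three powered coefficient vectors coincide with a common vector $w_s$. This holds for $s=0$, where all three are zero for every finite totalization, and for $s\ge1$, where the Pearson correlations are defined by Lemma~\ref{lem:paired-base-primes}. The common vector $w_s$ depends only on the completed prefix, not on $\sigma_{s+1}$ or $\xi$.

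Next I would deal with the selector. Its weights $\lambda_{s}$ may depend on the current input $(\sigma_{s+1},h_{s+1})$ and on $\xi$, but this does not matter. Under the coefficient semantics \eqref{eq:coefficient-selector}, the mixture $\sum_R\lambda_R w_s^R$ equals $w_s$. Under the prediction semantics \eqref{eq:prediction-selector}, every term equals $\clip(x^\top w_s)$. In both cases the charged prediction is $\clip(b_s+\sigma_{s+1}w_{s,1})$ with $b_s=h_{s+1}^\top w_{s,2:d}$, which does not depend on the selector. Its loss is therefore exactly $L_s(\sigma_{s+1})$ from Lemma~\ref{lem:shared-powered-query}.

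Because $\sigma_{s+1}$ is uniform and independent of the prefix and of $\xi$, the conditional expectation of this loss is $\tfrac12[L_s(+1)+L_s(-1)]$. Equation~\eqref{eq:shared-query-loss} bounds this by at least $(1-w_{s,1})^2\ge 49/64$. The hypothesis $N\ge 8M^{2\alpha}$ with $s<M$ is exactly what that lemma requires.

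To finish, discard the nonnegative recovery losses and apply the tower property over $s=0,\ldots,M-1$. This gives $\mathbb E_\sigma[\Reg^{\rm clip}_{2M}\mid\xi]\ge 49M/64$. Averaging over any law of $\xi$ then yields the final claim. The comparator $e_1$ has zero loss because $y_t=x_{t,1}$ on every round, so regret equals learner loss.

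The main point needing care is measurability and independence. The selector's dependence on the current sign must not break the averaging. It does not, because the charged prediction is a deterministic function of $(\text{prefix},\sigma_{s+1})$ alone, and the independence of $\sigma_{s+1}$ from $(\text{prefix},\xi)$ makes the conditional average over the two signs legitimate.
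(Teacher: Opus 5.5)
Your proposal is correct and takes essentially the same route as the paper. You condition on the completed paired prefix and on $\xi$, use Lemma~\ref{lem:paired-powered-equivalence} to make every selector output equal the common clipped prediction for either sign, apply the two-sign average in Equation~\eqref{eq:shared-query-loss}, and sum over the $M$ queries while discarding recovery losses; your explicit handling of both mixture semantics and of the selector's dependence on the current input spells out what the paper's short proof leaves implicit.
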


\begin{proof}
Condition on the completed paired history before query \(s+1\) and on
\(\xi\).  Although the selector may see the current sign through the current
input, Lemma~\ref{lem:paired-powered-equivalence} makes all three available
outputs identical for either sign.  Its conditional expected query loss is
therefore the two-sign average in
Equation~\eqref{eq:shared-query-loss}, at least \(49/64\).  Summing these
conditional bounds over the \(M\) queries and discarding recovery losses proves
the first statement.  Averaging over \(\xi\) proves the second.
\end{proof}

\begin{proof}[Proof of Theorem~\ref{thm:powered}]
Apply Lemma~\ref{lem:hadamard-rounding} with
\(q=8M^{2\alpha}\) to obtain a power of two
\(N\in[8M^{2\alpha},16M^{2\alpha})\).  Proposition
\ref{prop:shared-powered-compiler} supplies one deterministic sequence for
all three rules and selectors.  The required Hadamard rows exist because
\(N\ge8M^{2\alpha}\ge M\) for \(M\ge1\) and \(\alpha\ge1\).  Every row is
sign-valued, every label equals
its first coordinate, and hence the fixed one-sparse comparator \(e_1\) has
zero loss.  Learner loss is therefore regret, and the proposition yields
\(\Reg_{2M}^{\rm clip}\ge49M/64\) pathwise.

With \(T=2M\) and \(d=N+1\),
\[
d=\Theta_\alpha(T^{2\alpha}),
\qquad
\log d=O_\alpha(\log T).
\]
Thus linear regret contradicts a uniform
\(O(k\log(ed/k))\) guarantee already at \(k=1\).  The unsigned and signed
power conventions yield the same second-stage prediction by the sign-matrix
identity in Section~\ref{sec:setup}, so the statement also covers literal
squaring at \(\alpha=2\).
\end{proof}

%% file: appendix/E_rank_bound.tex
\section{Proof of the rank upper bound}
\label{app:rank}

This appendix proves the structural upper bound in
Theorem~\ref{thm:rank-adaptive}.  The argument first characterizes exact
second-stage interpolation and then charges each positive-loss round to a new
row-space direction.

\subsection{Rank spending and interpolation}

\begin{lemma}[Realizable interpolation spends one rank per mistake]
\label{lem:rank-per-mistake}
Suppose $y_t=x_t^\top u\in[-1,1]$ for one fixed $u$, and a coefficient
$w_t$ satisfies $X_{<t}w_t=y_{<t}$ before round $t$.  Then a positive clipped
loss at round $t$ implies
$x_t\notin\operatorname{rowspan}(X_{<t})$.  Consequently,
\[
\sum_{t=1}^T\bigl(\clip(x_t^\top w_t)-y_t\bigr)^2
\le4\operatorname{rank}(X_{1:T}).
\]
\end{lemma}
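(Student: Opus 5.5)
The argument has two parts: a per-round implication, then a counting step. Fix a round $t$ and suppose $x_t\in\operatorname{rowspan}(X_{<t})$, so that $x_t^\top=c^\top X_{<t}$ for some vector $c$. By the interpolation hypothesis $X_{<t}w_t=y_{<t}$, and by realizability $y_{<t}=X_{<t}u$. Hence
\[
x_t^\top w_t=c^\top X_{<t}w_t=c^\top y_{<t}=c^\top X_{<t}u=x_t^\top u=y_t .
\]
Since $y_t\in[-1,1]$, clipping leaves this value unchanged, and the clipped loss at round $t$ is zero. Taking the contrapositive gives the first claim: a positive clipped loss forces $x_t\notin\operatorname{rowspan}(X_{<t})$. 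When $t=1$ the history is empty, its row span is $\{0\}$, and the same reasoning applies, because $x_1=0$ gives $y_1=0$ and prediction $0$.

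For the counting step, let $\mathcal T$ be the set of rounds with positive loss. On each $t\in\mathcal T$ we have $\operatorname{rank}(X_{\le t})=\operatorname{rank}(X_{<t})+1$, and the rank never decreases along the prefixes. The rank therefore telescopes from $0$ to $\operatorname{rank}(X_{1:T})$ with at least one unit of increase on each round in $\mathcal T$, which gives $|\mathcal T|\le\operatorname{rank}(X_{1:T})$.

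Each individual loss is bounded because both $\clip(x_t^\top w_t)$ and $y_t$ lie in $[-1,1]$, so $(\clip(x_t^\top w_t)-y_t)^2\le 4$. Summing over $\mathcal T$ yields the bound $4\operatorname{rank}(X_{1:T})$.

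I do not expect a real obstacle here. The only points requiring care are the empty-history convention at $t=1$ and the observation that clipping preserves exactness only because the label lies in $[-1,1]$; both are handled above. The harder work, namely checking the interpolation hypothesis for the specific priming rules, belongs to Theorem~\ref{thm:rank-adaptive} through Lemma~\ref{lem:prime-refit-interpolation} and is not needed for this lemma.
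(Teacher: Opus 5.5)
Your proposal is correct and follows essentially the same route as the paper's proof. You use the row-span representation $x_t^\top=c^\top X_{<t}$ together with interpolation and realizability to show exactness, then telescope the prefix ranks to count positive-loss rounds and bound each such loss by four.
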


\begin{proof}
If $x_t^\top=c^\top X_{<t}$ for some $c$, then interpolation and
realizability give
\[
x_t^\top w_t=c^\top X_{<t}w_t=c^\top y_{<t}
=c^\top X_{<t}u=x_t^\top u=y_t.
\]
  Thus the raw, and hence clipped, prediction is exact.  Let
  $r_t=\operatorname{rank}(X_{1:t})$ and set $r_0=0$.  A positive-loss round
  has $x_t\notin\operatorname{rowspan}(X_{<t})$, so adjoining that row gives
  $r_t-r_{t-1}=1$.  Therefore the number of positive-loss rounds is at most
  \[
  \sum_{t=1}^T(r_t-r_{t-1})=r_T
  =\operatorname{rank}(X_{1:T}).
  \]
  Both the clipped prediction and label lie in $[-1,1]$, so each of those
  rounds contributes squared loss at most four.
\end{proof}

\begin{lemma}[History interpolation by the powered rules]
\label{lem:powered-history-interpolation}
On any history $X,y$ with $y=Xe_1$, every powered-univariate and
powered-multivariate second-stage coefficient interpolates $y$.  The same is
true for a powered Pearson rule whenever its target prime is nonzero if
$y\ne0$.
\end{lemma}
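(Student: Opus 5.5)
The plan is to reduce everything to Lemma~\ref{lem:prime-refit-interpolation}: the second-stage coefficient $w=D(XD)^\dagger y$ interpolates the history exactly when $y\in\operatorname{col}(XD)$. For each rule we will therefore exhibit one transformed coefficient $z$ with $XDz=y$, where $D=D_\alpha(p)$ is the powered prime matrix. First we dispose of the case $y=0$. Then $(XD)^\dagger y=0$, so $w=0$ and $Xw=0=y$ for every rule and every fill. From now on we assume $y\ne0$.

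\textbf{Univariate rule.} Since $y=Xe_1$, the first column $X(:,1)$ equals $y$, which is nonzero. Its one-dimensional regression coefficient is therefore $p_1=y^\top y/\|y\|_2^2=1$, and so $f_\alpha(p_1)=1$. The transformed target column is then $y$ itself, and $z=e_1$ gives $XDz=y$.

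\textbf{Pearson rule.} We argue in the same way. Under the stated hypothesis the target prime $p_1$ is nonzero, so $f_\alpha(p_1)\ne0$. Taking $z=f_\alpha(p_1)^{-1}e_1$ gives $XDz=X(:,1)=y$. This holds whether $p_1$ comes from the correlation formula or from the fill, so no case split on definedness is needed.

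\textbf{Multivariate rule.} Let $p=X^\dagger y$. Because $y=Xe_1\in\operatorname{col}(X)$, Lemma~\ref{lem:prime-refit-interpolation} with $D=I$ (or directly $XX^\dagger$ being the projector onto $\operatorname{col}(X)$) gives $Xp=y$. Define $z_i=p_i/f_\alpha(p_i)$ on $\supp(p)$ and $z_i=0$ elsewhere. The map $f_\alpha$ vanishes only at zero, so no division by zero occurs. Then $Dz=p$ coordinatewise, including on zero-prime coordinates where both sides are zero. Hence $XDz=Xp=y$.

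In every case $y\in\operatorname{col}(XD)$, and Lemma~\ref{lem:prime-refit-interpolation} gives $Xw=y$. No step looks like a real obstacle. The points needing care are bookkeeping: zero-prime coordinates must be handled without division (via the support restriction), and $y=0$ must be separated out so that the univariate prime $p_1=1$ is well defined and the Pearson hypothesis is applied only when it is needed.
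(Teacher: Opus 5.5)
Your proposal is correct and follows the paper's proof essentially step for step. Like the paper, you split off $y=0$, exhibit an explicit transformed interpolant in each case ($e_1$ for univariate, $\pi_1^{-1}e_1$ for Pearson, and $z_i=p_i/\pi_i$ on $\supp(p)$ for multivariate), and then conclude via Lemma~\ref{lem:prime-refit-interpolation}.
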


\begin{proof}
  If $y=0$, then $A^\dagger y=0$ for every transformed design $A$, so the
  second-stage coefficient and fitted vector are both zero.  Suppose $y\ne0$.
  For univariate priming, the target historical
column is $y$, so its base and powered primes are both one; the transformed
design therefore contains $y$ as a column.  For multivariate priming, let
$p=X^\dagger y$.  Since $y\in\operatorname{col}(X)$,
\[
Xp=XX^\dagger y=y.
\]
Let $\pi_i=\operatorname{sign}(p_i)|p_i|^\alpha$ and define
\[
z_i=
\begin{cases}
p_i/\pi_i,&p_i\ne0,\\
0,&p_i=0.
\end{cases}
\]
Then
\[
\diag(\pi)z=p,
\qquad
X\diag(\pi)z=Xp=y.
\]
Finally, if the Pearson target prime $\pi_1$ is
nonzero, the first transformed column is $\pi_1y$, so the coefficient
$z=(1/\pi_1)e_1$ satisfies
\[
X\diag(\pi)z=y.
\]
In every case $y$ lies in the transformed column space, and
Lemma~\ref{lem:prime-refit-interpolation} shows that the Moore--Penrose refit
interpolates it exactly.
\end{proof}

\begin{proof}[Detailed proof of Theorem~\ref{thm:rank-adaptive}]
The generic inequality is Lemma~\ref{lem:rank-per-mistake}.  Lemma
\ref{lem:powered-history-interpolation} verifies its interpolation hypothesis
for the powered univariate and multivariate rules.  For Pearson priming, the
target-preserving convention in Theorem~\ref{thm:rank-adaptive} is the
nonzero-target-prime sufficient condition in that lemma.  More generally,
Lemma~\ref{lem:prime-refit-interpolation} gives the exact condition
$y_{<t}\in\operatorname{col}(X_{<t}D_t)$.  Finally,
\[
\operatorname{rank}(X_{1:T})\le\min\{T,d\}.
\]
\end{proof}

%% file: appendix/F_univariate_frontier.tex
\section{Proof of the powered-univariate exact frontier}
\label{app:univariate}

\subsection{A linear-dimensional triangular construction}

Let $m=\min\{d,T\}$.  We first give an active construction with dimension
and horizon $m$; padding with zero coordinates and then zero rows embeds it
into arbitrary $d$ and $T$.  Set
\[
x_{t,1}=y_t=1,
\qquad t=1,\ldots,m,
\]
and, for $k=2,\ldots,m$, define
\begin{equation}
x_{t,k}=\begin{cases}
1/(k-1),&t<k,\\
-1,&t=k,\\
0,&t>k.
\end{cases}
\label{eq:triangular-gadget}
\end{equation}
Every entry is rational and belongs to $[-1,1]$, and the comparator $e_1$
has zero loss.

\begin{lemma}[Exact rank of the triangular design]
\label{lem:triangular-rank}
The active $m\times m$ design in Equation~\eqref{eq:triangular-gadget} is
nonsingular.  More precisely,
\[
\det(X_{1:m})=(-1)^{m-1}m.
\]
Consequently its first $t$ rows are linearly independent for every
$t\le m$, so each active round introduces one new row-space direction.
\end{lemma}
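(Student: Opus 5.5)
The plan is to compute the determinant by elementary row operations that exploit the triangular structure. Index the rows by $t$ and the columns by $k$. Row $t$ has a $1$ in column $1$, the entry $1/(k-1)$ in every column $k>t$, the entry $-1$ in column $k=t$ when $t\ge2$, and zeros in columns $2\le k<t$. Thus every row other than the first is supported on column $1$, its own diagonal position, and the columns to its right, so the matrix is lower-triangular except for the first column.

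\textbf{Step 1: eliminate the first column.} Replace row $t$ by $R_t-R_{t-1}$ for $t=m,m-1,\ldots,2$, always using the original rows. The determinant is unchanged, and the first column becomes $e_1$. Row $t\ge2$ of the new matrix then has:
\begin{itemize}
\item entry $-1-1/(t-1)=-t/(t-1)$ in column $t$, because the old row $t-1$ had $1/(t-1)$ there;
\item entry $1$ in column $t-1$ when $t-1\ge2$, because the old row $t-1$ had $-1$ there and row $t$ had $0$;
\item zeros in all columns $k>t$, because both rows carry the same $1/(k-1)$.
\end{itemize}
The resulting matrix is block form $\begin{bmatrix}1&*\\0&L\end{bmatrix}$ with $L$ lower bidiagonal. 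Expanding along the first column gives
\[
\det X_{1:m}=\det L=\prod_{t=2}^m\Bigl(-\frac{t}{t-1}\Bigr)=(-1)^{m-1}m,
\]
since the product of the magnitudes telescopes to $m$. For $m=1$ the matrix is $[1]$, which agrees with the formula.

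\textbf{Step 2: linear independence of every prefix.} A nonzero determinant makes $X_{1:m}$ nonsingular, so all $m$ of its rows are linearly independent. Any subset of the rows, and in particular the first $t$ rows, is therefore independent for every $t\le m$. Consequently $\operatorname{rank}(X_{1:t})=t$ and $x_t\notin\operatorname{rowspan}(X_{<t})$, so each active round contributes one new row-space direction.

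The only delicate point is bookkeeping in Step 1. The differences must be taken with the original rows, which the downward order $t=m,\ldots,2$ guarantees, and the boundary case $t=2$ has no subdiagonal entry. Once these are handled the argument is a routine telescoping product.
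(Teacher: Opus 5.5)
Your proof is correct and is essentially the paper's argument. The paper clears the first row with the column operations $C_k\leftarrow C_k-\frac{1}{k-1}C_1$ and reads off a lower-triangular block with the same diagonal entries $-k/(k-1)$. You instead clear the first column by consecutive row differences, and you get prefix independence directly from nonsingularity rather than from the leading minors $(-1)^{t-1}t$. One cosmetic slip: before elimination, the matrix with its first column deleted is upper triangular, not lower triangular, since row $t$ is supported on column $t$ and the columns to its right.
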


\begin{proof}
For every $k=2,\ldots,m$, perform the column operation
\[
C_k\leftarrow C_k-\frac1{k-1}C_1.
\]
This operation does not change the determinant.  The
first row is then $(1,0,\ldots,0)$.  After deleting that row and the first
column, the remaining matrix is lower triangular: column $k$ is zero above
its trap row and has diagonal entry
$-1-(k-1)^{-1}=-k/(k-1)$.  Hence
\[
\det(X_{1:m})
=\prod_{k=2}^m\left(-\frac{k}{k-1}\right)
=(-1)^{m-1}m\ne0.
\]
The same elimination applied to the leading $t\times t$ submatrix gives
determinant $(-1)^{t-1}t$, proving independence of every active prefix.
\end{proof}

\begin{lemma}[Exact triangular prediction]
\label{lem:triangular-exact-prediction}
Fix $\alpha\ge1$, put $\beta=2\alpha-2$, and define
\[
S_t=1+\sum_{j=t-1}^{m-1}j^\beta.
\]
At every prediction round $2\le t\le m$, the powered-univariate primes and
second-stage coefficients satisfy
\begin{align}
p_{t,1}&=1,
&p_{t,k}&=0 &&(k<t),
&p_{t,k}&=k-1 &&(k\ge t),
\label{eq:triangular-primes}\\
w_{t,1}&=\frac1{S_t},
&w_{t,k}&=0 &&(k<t),
&w_{t,k}&=\frac{(k-1)^{2\alpha-1}}{S_t} &&(k\ge t).
\label{eq:triangular-weights}
\end{align}
Consequently,
\begin{align}
\widehat y_t
&=\frac{1-(t-1)^{2\alpha-1}
       +\sum_{j=t}^{m-1}j^\beta}{S_t},
\label{eq:triangular-prediction}\\
1-\widehat y_t
&=\frac{t(t-1)^\beta}{S_t},
\label{eq:triangular-powered-error}
\end{align}
and the clipped loss is
\begin{equation}
\ell_t^{\rm clip}
=\min\left\{\left(\frac{t(t-1)^\beta}{S_t}\right)^2,4\right\}.
\label{eq:triangular-clipped-loss}
\end{equation}
\end{lemma}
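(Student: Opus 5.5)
The plan has three steps: compute the primes directly from the history, reduce the second stage to a one-constraint minimum-norm problem, and then read off the prediction.

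\textbf{Step 1: the primes.} Before round $t$ the history consists of rows $1,\ldots,t-1$, with labels $y_{<t}=\one_{t-1}$. The target column equals $\one_{t-1}$, so $p_{t,1}=1$.

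For a coordinate $k<t$, the column has entries $1/(k-1)$ in rows $i<k$, the entry $-1$ in row $k$, and zeros afterwards. Its inner product with $\one$ is $(k-1)\cdot\frac1{k-1}-1=0$, so $p_{t,k}=0$.

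For a coordinate $k\ge t$, every historical entry equals $1/(k-1)$. Hence the column is $\frac1{k-1}\one_{t-1}$ and $p_{t,k}=(1/(k-1))^{-1}=k-1$.

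This proves Equation~\eqref{eq:triangular-primes}. The powered primes are then $\pi_{t,1}=1$, $\pi_{t,k}=0$ for $k<t$, and $\pi_{t,k}=(k-1)^\alpha$ for $k\ge t$.

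\textbf{Step 2: the second-stage coefficients.} Zero-prime coordinates are fixed at zero by Lemma~\ref{lem:weighted-min-norm}. The transformed design therefore has nonzero columns $c_i\one_{t-1}$ with
\[
c_1=1,\qquad c_k=(k-1)^{\alpha}\cdot\frac1{k-1}=(k-1)^{\alpha-1}\quad(k\ge t),
\]
and the label vector is $\one_{t-1}$. Lemma~\ref{lem:collinear-min-norm} applies with $u=\one_{t-1}$. It gives $z_i=c_i/\sum_jc_j^2$, where
\[
\sum_jc_j^2=1+\sum_{k=t}^m(k-1)^{2\alpha-2}=1+\sum_{j=t-1}^{m-1}j^\beta=S_t.
\]
Transforming back with $w_i=\pi_iz_i$ yields $w_{t,1}=1/S_t$ and $w_{t,k}=(k-1)^\alpha(k-1)^{\alpha-1}/S_t=(k-1)^{2\alpha-1}/S_t$ for $k\ge t$. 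This is Equation~\eqref{eq:triangular-weights}.

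\textbf{Step 3: the prediction and the loss.} The current input $x_t$ has first coordinate $1$, coordinates $k<t$ equal to zero, coordinate $t$ equal to $-1$, and coordinates $k>t$ equal to $1/(k-1)$. Therefore
\[
\widehat y_t=\frac{1-(t-1)^{2\alpha-1}+\sum_{k=t+1}^m(k-1)^{2\alpha-2}}{S_t}.
\]
Reindexing the sum with $j=k-1$ gives Equation~\eqref{eq:triangular-prediction}.

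To obtain the error, subtract from $S_t$. The numerator of $1-\widehat y_t$ is
\[
S_t-1+(t-1)^{2\alpha-1}-\sum_{j=t}^{m-1}j^\beta=(t-1)^\beta+(t-1)^{\beta+1}=t(t-1)^\beta,
\]
which proves Equation~\eqref{eq:triangular-powered-error}.

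For the clipped loss, note that $1-\widehat y_t>0$, so $\widehat y_t<1$. If $\widehat y_t\ge-1$, clipping does nothing and the loss is $(1-\widehat y_t)^2$, which is then at most $4$. If $\widehat y_t<-1$, the prediction clips to $-1$, the loss is $4$, and $(1-\widehat y_t)^2>4$. In both cases the loss is the minimum in Equation~\eqref{eq:triangular-clipped-loss}.

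\textbf{Expected difficulty.} The only delicate point is bookkeeping: the zero-prime convention for expired coordinates, and the index shift between $k$ and $j=k-1$ in $S_t$. I do not expect a genuine obstacle.
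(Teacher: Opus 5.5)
Your proposal is correct and follows the paper's proof essentially step for step. It uses the same direct computation of the primes, the same reduction through Lemma~\ref{lem:collinear-min-norm} with transformed multipliers $c_k=(k-1)^{\alpha-1}$, and the same substitution and clipping case analysis. The paper also notes that each expired column has positive squared norm $k/(k-1)$, but a zero numerator gives prime zero under either branch of the univariate definition, so your argument loses nothing.
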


\begin{proof}
The target historical column equals the all-one label vector, so
$p_{t,1}=1$.  If $k<t$, nuisance coordinate $k$ has already passed its trap
round.  Its historical numerator and denominator are
\begin{align*}
X_{<t}(:,k)^\top y_{<t}
&=(k-1)\frac1{k-1}-1=0,\\
\|X_{<t}(:,k)\|_2^2
&=\frac{k-1}{(k-1)^2}+1
=\frac{k}{k-1}>0.
\end{align*}
Hence its univariate prime is zero.  If $k\ge t$,
all $t-1$ historical
entries in that column equal $1/(k-1)$, so
\[
p_{t,k}
=\frac{(t-1)/(k-1)}{(t-1)/(k-1)^2}=k-1.
\]
This proves Equation~\eqref{eq:triangular-primes}.

After powering, the nonzero transformed historical columns are multiples of
the all-one vector.  The target multiplier is one, while active nuisance
column $k$ has multiplier
\[
(k-1)^\alpha\frac1{k-1}=(k-1)^{\alpha-1}.
\]
Lemma
\ref{lem:collinear-min-norm} assigns transformed coefficient
$1/S_t$ to the target and
$(k-1)^{\alpha-1}/S_t$ to column $k$.  Multiplication by its powered prime
$(k-1)^\alpha$ gives Equation~\eqref{eq:triangular-weights}.

On the current row, coordinate $t$ equals $-1$, while every later nuisance
coordinate $k>t$ equals $1/(k-1)$.  Substitution of
Equation~\eqref{eq:triangular-weights} gives
Equation~\eqref{eq:triangular-prediction}.  Since
\[
S_t-\left(1-(t-1)^{2\alpha-1}
       +\sum_{j=t}^{m-1}j^\beta\right)
=(t-1)^\beta+(t-1)^{2\alpha-1}
=t(t-1)^\beta,
\]
Equation~\eqref{eq:triangular-powered-error} follows.  Its right-hand side is
nonnegative, so $\widehat y_t\le1$.  For a label equal to one, clipping a
raw prediction $q\le1$ yields loss $\min\{(1-q)^2,4\}$, which proves
Equation~\eqref{eq:triangular-clipped-loss}.
\end{proof}

\begin{lemma}[Linear loss and the unit-power count]
\label{lem:triangular-count}
For the active $m$-round construction and every fixed $\alpha\ge1$,
\[
\Reg_m^{\mathrm{clip},\UNI,\alpha}
\ge c_\alpha m,
\qquad
c_\alpha=\min\left\{\frac18,
\frac9{16}\left(\frac58\right)^{4\alpha-4}\right\}.
\]
At unit power the sharper bound
$\Reg_m^{\mathrm{clip},\UNI,1}\ge1+\lfloor m/2\rfloor$ holds.
\end{lemma}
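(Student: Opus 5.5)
The plan is to sum the exact clipped losses from Lemma~\ref{lem:triangular-exact-prediction}, adding the first round separately. On round $t=1$ the history is empty, so the prediction is zero while $y_1=1$; this round contributes loss exactly one. For $2\le t\le m$, Equation~\eqref{eq:triangular-clipped-loss} gives the loss as $\min\{E_t^2,4\}$, where
\[
E_t=\frac{t(t-1)^\beta}{S_t}.
\]
The lower bound then reduces to counting the rounds on which $E_t$ is bounded below by a constant.

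\emph{Unit power.} Here $\beta=0$, so $S_t=1+(m-t+1)=m-t+2$ and $E_t=t/(m-t+2)$. The condition $E_t\ge1$ holds exactly when $t\ge(m+2)/2$, and each such round has clipped loss at least one. The integers $t\in[2,m]$ with $t\ge(m+2)/2$ number $m-\lceil(m+2)/2\rceil+1=\lfloor m/2\rfloor$. Adding round one gives $1+\lfloor m/2\rfloor$, which also covers $m=1$.

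\emph{General $\alpha$.} Bound each summand of $S_t$ by its largest term:
\[
S_t\le 1+(m-t+1)(m-1)^\beta\le(m-t+2)(m-1)^\beta .
\]
For late rounds with $t-1\ge\tfrac58(m-1)$, this gives
\[
E_t\ge\Bigl(\tfrac58\Bigr)^\beta\frac{t}{m-t+2}.
\]
I would then restrict to a final window of rounds on which also $t/(m-t+2)\ge3/4$ (or larger). On that window the clipped loss is at least $(9/16)(5/8)^{2\beta}=(9/16)(5/8)^{4\alpha-4}$. Since the window contains a constant fraction of $[1,m]$, this yields $\Omega_\alpha(m)$.

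Small $m$ is handled by round one alone: since $1\ge m/8$ for $m\le8$, the first-round loss already gives the $1/8$ branch of $c_\alpha$. This explains the minimum in the definition of $c_\alpha$.

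The main obstacle is matching the stated constant $c_\alpha$ exactly rather than up to a factor. With a window of length about $3m/8$, the crude count loses a factor of roughly $3/8$. I would first sharpen the window choice: on the last rounds $E_t$ is large (for example $E_m\ge m(m-1)^\beta/(1+2(m-1)^\beta)$, which is of order $m/2$), so the cap $4$ is attained there. I would then split the late rounds into a capped part, with loss $4$, and a moderate part, with loss at least $(9/16)(5/8)^{2\beta}$, and check that the combined count times the per-round loss beats $c_\alpha m$ for all $m>8$. A direct check would cover the few intermediate values of $m$ where the floor effects matter.
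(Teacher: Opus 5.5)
Your unit-power count and the bound $S_t\le(m-t+2)(m-1)^\beta$ agree with the paper. The general-$\alpha$ part, however, does not establish the stated constant $c_\alpha$, as you acknowledge. The lost factor $3/8$ comes from your threshold $t/(m-t+2)\ge3/4$, not from the window. With that threshold the per-round loss is $\frac9{16}(5/8)^{2\beta}$, which is already the entire coefficient in $c_\alpha$, so you would need all $m$ rounds to reach $c_\alpha m$. Yet late rounds actually have this ratio bounded below by a constant larger than one. The proposed repair (a capped/moderate split plus a ``direct check'' of intermediate $m$) is only a plan. You never count the capped rounds or verify the resulting inequality. Since the lemma is uniform over all real $\alpha\ge1$, there is no finite list of intermediate cases to check without a further argument.

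The missing step is a better trade-off between window length and the ratio. For $m\ge8$, restrict to the last quarter $t\in\{\lceil3m/4\rceil,\ldots,m\}$. There $m-t+2\le m/4+2\le m/2$ and $t\ge3m/4$, so $t/(m-t+2)\ge3/2$. Also $t-1\ge3m/4-1\ge5m/8$, so $(t-1)^\beta/(m-1)^\beta\ge(5/8)^\beta$. Hence $E_t\ge\frac32(5/8)^\beta$. Since this lower bound is below $2$, the cap does not bind and $\min\{E_t^2,4\}\ge\frac94(5/8)^{2\beta}$ on each such round. There are at least $m/4$ of them, giving exactly $\frac14\cdot\frac94(5/8)^{2\beta}m=\frac9{16}(5/8)^{4\alpha-4}m$. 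For $m<8$, the first-round loss $1\ge m/8$ covers the other branch of the minimum. This is the paper's argument; no capped/moderate split or case check is needed.
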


\begin{proof}
First suppose $m\ge8$ and
$t\in\{\lceil3m/4\rceil,\ldots,m\}$.  Because $\beta\ge0$, every term in
$S_t$ is at most $(m-1)^\beta$, including the leading one.  Hence
\[
S_t\le(m-t+2)(m-1)^\beta
\le\left(\frac m4+2\right)(m-1)^\beta
\le\frac m2(m-1)^\beta.
\]
Moreover, $t\ge3m/4$ and, since $m\ge8$,
$t-1\ge3m/4-1\ge5m/8$.  Equation
\eqref{eq:triangular-powered-error} therefore gives
\[
1-\widehat y_t
\ge
\frac{(3m/4)(5m/8)^\beta}{(m/2)(m-1)^\beta}
\ge\frac32\left(\frac58\right)^\beta.
\]
The final lower bound is at most $3/2<2$, so
Equation~\eqref{eq:triangular-clipped-loss} implies loss at least
$\frac94(5/8)^{2\beta}$ on every such round.  Their number is
$m-\lceil3m/4\rceil+1\ge m/4$.  Summing them gives
\[
\Reg_m^{\mathrm{clip},\UNI,\alpha}
\ge\frac{9m}{16}\left(\frac58\right)^{2\beta}
=\frac{9m}{16}\left(\frac58\right)^{4\alpha-4}.
\]
If $m<8$, the empty-history prediction on the first round is zero and loses
one, which is at least $m/8$.  The two cases prove the definition of
$c_\alpha$.

When $\alpha=1$, $\beta=0$ and
Equation~\eqref{eq:triangular-powered-error} becomes
\[
1-\widehat y_t=\frac{t}{m-t+2}.
\]
This quantity is at least one exactly when
\[
t\ge\frac{m+2}{2}.
\]
There are $\lfloor m/2\rfloor$ integer rounds
$t\in\{2,\ldots,m\}$ satisfying this condition, and each loses at least one.
Adding the first-round loss proves
$1+\lfloor m/2\rfloor$; the same formula also covers $m=1$.
\end{proof}

\begin{proof}[Lower bound in Theorem~\ref{thm:univariate-exact}]
Take $m=\min\{d,T\}$ and run the active triangular construction in its first
$m$ coordinates and rounds.  If $d>m$, append zero feature coordinates;
Lemma~\ref{lem:zero-coordinate-padding} shows that every prediction in the
active prefix is unchanged.  If $T>m$, append $T-m$ zero inputs with zero
labels.  By Lemma~\ref{lem:zero-tail-extension}, these rounds preserve
realizability and contribute exactly zero loss, so
Lemma~\ref{lem:triangular-count} gives
\[
\Reg_T^{\mathrm{clip},\UNI,\alpha}
\ge c_\alpha m
=c_\alpha\min\{T,d\}.
\]
The unit-power bound is
embedded in the same way.  The matching upper order follows from
Theorem~\ref{thm:rank-adaptive}, since
$\operatorname{rank}(X_{1:T})\le\min\{T,d\}=m$.
\end{proof}

\begin{lemma}[Uniform scaling and clipping]
\label{lem:triangular-normalization}
Every row of the triangular construction has squared Euclidean norm at most
three.  After multiplying every feature row and label by $c=1/2$, the
powered-univariate coefficient is unchanged, every input has norm at most
one, and its clipped loss is at least one quarter of the corresponding
unscaled clipped loss.
\end{lemma}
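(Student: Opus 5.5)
The lemma has three claims, and I would verify them in order: the row-norm bound, invariance of the coefficient under the common scaling, and the quarter-loss comparison.

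For the norm bound, I would bound row $t$ of Equation~\eqref{eq:triangular-gadget} coordinatewise. The target entry is $1$. The trap entry at coordinate $t$ is $-1$, present only for $t\ge2$. The future coordinates $k>t$ contribute $1/(k-1)^2$. Hence
\[
\|x_t\|_2^2\le 2+\sum_{j\ge t}j^{-2}\le 2+\sum_{j\ge1}j^{-2}-1=1+\pi^2/6<3
\]
for $t\ge2$, since $j$ runs from $t\ge2$ and the omitted $j=1$ term equals $1$. For $t=1$ there is no trap entry, and the row norm is at most $1+\pi^2/6<3$. Every row therefore has squared norm below three. After halving, the squared norm is at most $3/4\le1$.

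For invariance, I would scale all historical rows and labels by $c=1/2$. Each univariate prime $X(:,i)^\top y/\|X(:,i)\|^2$ picks up a factor $c^2$ in both numerator and denominator, so the primes are unchanged. Zero columns stay zero, so the zero-prime convention is also preserved. The second stage then computes $D(cXD)^\dagger(cy)$. Item~1 of Lemma~\ref{lem:pseudoinverse-invariances} gives $(cXD)^\dagger=c^{-1}(XD)^\dagger$, so the coefficient equals $D(XD)^\dagger y$ and is unchanged. The empty-history coefficient is zero in both cases.

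For the loss comparison, the common coefficient gives a scaled raw prediction $c\widehat y_t$ and a scaled label $c y_t=1/2$. Lemma~\ref{lem:triangular-exact-prediction} gives $\widehat y_t\le1$ for $t\ge2$, and the first-round prediction is $0$. In both cases $q:=\widehat y_t\le1$, so $cq\le1/2\le1$ and clipping can only act from below. When $cq\ge-1$ the scaled loss is $c^2(1-q)^2=(1-q)^2/4$. The unscaled loss is $\min\{(1-q)^2,4\}\le(1-q)^2$, so the scaled loss is at least a quarter of it. When $cq<-1$, the scaled loss is $(1/2+1)^2=9/4$. This exceeds $4/4=1$, which is a quarter of the largest possible unscaled clipped loss.

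The main obstacle is this last case split: clipping does not commute with scaling, and the comparison has to be made separately in the clipped regime, using that the unscaled clipped loss never exceeds $4$. Combined with Lemma~\ref{lem:triangular-count}, this yields the $c_\alpha m/4$ Euclidean-unit bound.
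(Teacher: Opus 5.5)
Your proof is correct and follows the paper's three steps. You bound the row norms by a reciprocal-square sum, invoke scale invariance of the univariate primes together with $(cXD)^\dagger=c^{-1}(XD)^\dagger$, and split the clipping comparison into cases using $\widehat y_t\le1$. The only differences are cosmetic: you bound $\sum_j j^{-2}$ by $\pi^2/6$ where the paper uses an integral comparison, and you merge the paper's cases $q\in[-1,1]$ and $-2\le q<-1$ into one via $\min\{(1-q)^2,4\}\le(1-q)^2$.
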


\begin{proof}
On the first row,
\[
\|x_1\|_2^2
=1+\sum_{j=1}^{m-1}\frac1{j^2}
\le1+1+\int_1^\infty x^{-2}\,dx=3.
\]
For $t\ge2$, the target and trap coordinates contribute two, while the
remaining nonzero nuisance coordinates give
\[
\sum_{j=t}^{m-1}\frac1{j^2}
\le\int_{t-1}^\infty x^{-2}\,dx
=\frac1{t-1}\le1.
\]
Thus half-scaling makes every row norm at most $\sqrt3/2<1$ and leaves
$\|e_1\|_2=1$.

Let $X'=cX$ and $y'=cy$.  For each nonzero historical feature column,
\[
\frac{(cX_i)^\top(cy)}{\|cX_i\|_2^2}
=\frac{X_i^\top y}{\|X_i\|_2^2},
\]
  A zero historical feature column remains zero after scaling and receives
  prime zero both before and after scaling.  Thus every base, and hence
  powered, univariate prime is unchanged.  If $D$ is
the powered diagonal matrix, Lemma~\ref{lem:pseudoinverse-invariances} gives
\[
(cXD)^\dagger(cy)=c^{-1}(XD)^\dagger(cy)=(XD)^\dagger y.
\]
The second-stage coefficient is therefore unchanged, and each raw prediction
is multiplied by $c$.

It remains to compare clipping.  Lemma~\ref{lem:triangular-exact-prediction}
shows that every unscaled raw prediction $q$ satisfies $q\le1$.  For
$q\in[-1,1]$, neither $q$ nor $cq$ is clipped and
\[
(\clip(cq)-c)^2=c^2(q-1)^2
=c^2(\clip(q)-1)^2.
\]
If $-2\le q<-1$, then $cq\in[-1,-1/2)$ and the scaled loss is
$c^2(1-q)^2\ge4c^2=1$, equal to the right side
$c^2(-1-1)^2$.  If $q<-2$, the scaled prediction clips to $-1$ and its loss
is $(1+c)^2>1=4c^2$.  Hence in all cases
\[
(\clip(cq)-c)^2\ge c^2(\clip(q)-1)^2.
\]
With $c=1/2$, this is the claimed factor $1/4$.
\end{proof}

\begin{proof}[Euclidean-normalized part of
Theorem~\ref{thm:univariate-exact}]
  Let $m=\min\{T,d\}$ and half-scale every feature vector and label in the
  active $m$-round triangular construction.  Lemma
  \ref{lem:triangular-normalization} gives input norm at most one, keeps the
  powered-univariate coefficient unchanged, and retains at least one quarter
  of each active clipped loss.  If $d>m$, append $d-m$ identically zero feature
  coordinates; Lemma~\ref{lem:zero-coordinate-padding} preserves every active
  prediction.  If $T>m$, append $T-m$ rounds with zero input and zero label;
  Lemma~\ref{lem:zero-tail-extension} gives zero additional loss.  Throughout,
  the comparator remains $e_1$ and the scaled labels satisfy $y_t=x_{t,1}$.
  Lemma~\ref{lem:triangular-count} therefore yields
\[
\Reg_T^{\mathrm{clip},\UNI,\alpha}
\ge\frac{c_\alpha}{4}\min\{T,d\}.
\]
The rank upper bound in
  Theorem~\ref{thm:rank-adaptive} is unchanged, so the normalized worst-case
  order is $\Theta_\alpha(\min\{T,d\})$.
\end{proof}

%% file: appendix/G_pearson_exact.tex
\section{The exact frontier for Pearson priming}
\label{app:pearson-exact-frontier}

We prove Theorem~\ref{thm:pearson-exact-frontier} on the class
$x_t\in[-1,1]^d$, $y_t=x_{t,1}$.  Throughout this appendix the power is
one.  Target preservation has the meaning in Section~\ref{sec:setup}:
the target prime is nonzero whenever the historical label vector is
nonzero.  It need not equal one on a history with undefined correlation.
The lower-bound sequence will not depend on the finite past-only
totalization; target preservation is needed only for the matching upper
bound.

\subsection{A seeded affine--Hadamard construction}

Let $H\in\{\pm1\}^{L\times L}$ be a Sylvester Hadamard matrix of order
$L\ge64$, and write its rows as
$h_0=\boldsymbol 1,h_1,\ldots,h_{L-1}$.  We use that these rows are
characters: they are orthogonal, every nonconstant row is balanced, and
their entrywise products are again rows of $H$.

Set
\[
c_0=\frac12,
\qquad A=\frac23,
\qquad K=128,
\qquad B_0=KL,
\qquad R=\frac{L}{64}.
\]
The active dimension is $1+2L$.  Index the nuisance coordinates by pairs
$(j,+),(j,-)$ for $j\in[L]$.  A block, specified by
$q_b\in\{\pm1\}$ and $s_b\in\{\pm1\}^L$, contains two examples with labels
$+1$ and $-1$.  Their target coordinate equals the label, and
\begin{align}
x^{+}_{b,(j,+)}&=A(q_b+c_0s_{b,j}),
&x^{+}_{b,(j,-)}&=A(-q_b+c_0s_{b,j}),
\label{eq:pearson-block-plus}\\
x^{-}_{b,(j,+)}&=A(q_b-c_0s_{b,j}),
&x^{-}_{b,(j,-)}&=A(-q_b-c_0s_{b,j}).
\label{eq:pearson-block-minus}
\end{align}
Thus every input entry belongs to
$\{-1,-1/3,1/3,1\}$.  Take $q_b=(-1)^b$.  The first $B_0$ blocks use
$s_b=h_0$; the next $R$ blocks use $h_1,\ldots,h_R$.  We charge only the
first, positive-label example in each of the latter blocks.  The whole
sequence is fixed before play, and $e_1$ realizes every label.

\begin{lemma}[Exact primes and compression]
\label{lem:pearson-seeded-compression}
Suppose the history consists of $B=B_0+r$ completed blocks, where
$0\le r<R$, and put
\[
z_{r,j}=\sum_{i=1}^r h_{i,j}.
\]
If $V_B=\sum_{b<B}(q_b-\bar q_B)^2$, then the two nuisance coordinates in
pair $j$ have the same Pearson prime
\begin{equation}
p_{r,j}
=\frac{c_0(B_0+z_{r,j})}
{\sqrt{B(V_B+Bc_0^2)}}.
\label{eq:pearson-seeded-prime}
\end{equation}
All target and nuisance empirical variances are positive.  Moreover, the
second minimum-norm fit and the next charged prediction are exactly those of
minimum-norm interpolation of the all-one response by the compressed rows
\begin{equation}
v_r(s)=
\bigl(1,\gamma_{r,1}s_1,\ldots,\gamma_{r,L}s_L\bigr),
\qquad
\gamma_{r,j}=\sqrt2 A c_0p_{r,j}.
\label{eq:pearson-compressed-row}
\end{equation}
The distinct historical signs are $h_0,\ldots,h_r$, and the next sign is
$h_{r+1}$.
\end{lemma}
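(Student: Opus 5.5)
The plan is a direct calculation in two stages: first the Pearson statistics, which require only the block structure, and then an orthogonal rotation inside each nuisance pair, which splits every transformed row into a label-aligned part and a block-mean part.

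\emph{Primes and variances.} Each completed block contributes labels $+1$ and $-1$. Hence the label vector, which is also the target column, has mean zero and squared norm $2B>0$, and the target prime is one. For coordinate $(j,+)$, the two entries of block $b$ average to $Aq_b$, so the column mean is $A\bar q_B$. The centered entries are $A(q_b-\bar q_B)\pm Ac_0s_{b,j}$, with the sign matching the label.
\begin{itemize}
\item Covariance with the label: the $q$-parts cancel within each block, leaving $2Ac_0\sum_{b<B}s_{b,j}$. The first $B_0$ blocks use $h_0=\boldsymbol 1$, so this equals $2Ac_0(B_0+z_{r,j})$.
\item Squared centered norm: the cross terms cancel within each block, giving $2A^2(V_B+Bc_0^2)>0$.
\end{itemize}
Dividing by the two norms yields Equation~\eqref{eq:pearson-seeded-prime}. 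For $(j,-)$ only $q_b$ is replaced by $-q_b$, which leaves $V_B$ and the covariance unchanged, so both coordinates of the pair get the same prime. Every correlation is therefore defined, and the totalization plays no role. I will also note that $B_0+z_{r,j}\ge B_0-r>0$, so every $p_{r,j}>0$ and no nuisance coordinate is dropped.

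\emph{Compression.} The transformed design is $X\diag(p)$ with $p_1=1$. In each pair I rotate to the coordinates $(e_{j,+}\pm e_{j,-})/\sqrt2$. This rotation is orthogonal, so Lemma~\ref{lem:pseudoinverse-invariances} lets me compute the minimum-norm solution in the rotated basis and rotate back. The example with label $\sigma$ in block $b$ then becomes the row
\[
\bigl(\sigma,\ \sigma\gamma_{r,j}s_{b,j},\ \sqrt2Ap_{r,j}q_b\bigr)_{j\in[L]},
\]
where the last group collects the difference coordinates. Write a coefficient as $(z_1,z^{u},z^{v})$, with $z^{u}$ on the sum coordinates and $z^{v}$ on the difference coordinates.

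Interpolating both rows of block $b$ means $\sigma\,v_r(s_b)^\top(z_1,z^u)+q_b\,m(z^v)=\sigma$ for $\sigma=\pm1$, where $m(z^v)=\sum_j\sqrt2Ap_{r,j}z^v_j$. Adding the two equations gives $m(z^v)=0$, and subtracting them gives $v_r(s_b)^\top(z_1,z^u)=1$. So the constraints decouple. Setting $z^v=0$ keeps feasibility and can only lower the norm, so the minimum-norm solution has $z^v=0$ and $(z_1,z^u)$ is the minimum-norm interpolant of the all-one response by the rows $v_r(s_b)$.

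The system is consistent, since the target column alone fits it. Repeated rows with the same sign vector impose identical constraints, so only the distinct signs $h_0,\ldots,h_r$ matter. The charged query has label $+1$ and sign $h_{r+1}$; its prediction is its rotated row dotted with the solution, and since $z^v=0$ this equals $v_r(h_{r+1})^\top(z_1,z^u)$.

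The step most prone to error is the compression bookkeeping: tracking the $\sqrt2$ factors through the rotation, and checking that rotating back via $w=\diag(p)z$ leaves the prediction unchanged. The decoupling argument itself is short once the rows are in the displayed form.
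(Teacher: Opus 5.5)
Your proposal is correct and follows essentially the same route as the paper. You compute the primes from the within-block covariance $2Ac_0s_{b,j}$ and the centered norm $2A^2(V_B+Bc_0^2)$, then apply the sum/difference rotation in each nuisance pair, where the homogeneous block-mean constraint forces the difference coefficients to vanish and leaves exactly the compressed rows $v_r(s_b)$. For consistency you use the target coefficient $e_1$, which is a slightly more direct substitute for the paper's remark that positive weights preserve the row rank of the distinct Hadamard rows.
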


\begin{proof}
Every completed block has one label of each sign, so the target has mean zero
and squared norm $2B$.  The alternating $q_b$ sequence satisfies
\[
V_B=\begin{cases}
B,&B\text{ even},\\
B-B^{-1},&B\text{ odd}.
\end{cases}
\]
For coordinate $(j,+)$, summing feature--label products within a block gives
$2Ac_0s_{b,j}$, while its centered squared norm over the history is
$2A^2(V_B+Bc_0^2)$.  Coordinate $(j,-)$ has the same two quantities.  This
proves Equation~\eqref{eq:pearson-seeded-prime}.  Since
$B_0+z_{r,j}\ge B_0-r>0$ and the displayed centered norm is positive, every
prime used on a charged round is defined.

For the second assertion, apply the orthogonal change of coefficient
coordinates that takes the sum and difference inside each nuisance pair.
The difference column is constant within each two-example block; the sum
column changes sign with the label.  These block-mean and block-difference
subspaces are orthogonal, and the label belongs to the latter.  The
minimum-norm coefficient on every block-mean column is therefore zero.
More explicitly, if the original transformed coefficients on pair $j$ are
$z_{j,+},z_{j,-}$, put
$u_j=(z_{j,+}+z_{j,-})/\sqrt2$ and
$v_j=(z_{j,+}-z_{j,-})/\sqrt2$.  The two interpolation constraints in
block $b$ become
\[
 z_1+\sum_j\sqrt2 A c_0p_{r,j}s_{b,j}u_j=1,
 \qquad
 \sum_j\sqrt2 A p_{r,j}q_bv_j=0.
\]
The squared coefficient norm is $z_1^2+\|u\|_2^2+\|v\|_2^2$.
The second constraint is homogeneous, so its minimum-norm solution is
$v=0$.  The remaining constraints are exactly
Equation~\eqref{eq:pearson-compressed-row}.  They are feasible: the
positive weights $\gamma_{r,j}$ preserve the row rank of the distinct
Hadamard rows.  Repeated seed blocks therefore repeat the same exact
constraint and do not change the minimum-norm solution.  The next positive
example has no block-mean contribution because $v=0$, so the compression
also preserves its prediction, not just the historical fit.
\end{proof}

For our choices of $A$ and $c_0$, define
\begin{equation}
\lambda_r=\frac{1}{18B(V_B+B/4)}.
\label{eq:pearson-lambda}
\end{equation}
Then Lemma~\ref{lem:pearson-seeded-compression} gives
\begin{equation}
\gamma_{r,j}^2=\lambda_r(B_0+z_{r,j})^2.
\label{eq:pearson-gamma-square}
\end{equation}

\begin{lemma}[Weighted Hadamard certificate]
\label{lem:pearson-weighted-hadamard}
Let $n=r+1$ be the number of distinct compressed historical rows.  Their
nuisance Gram matrix has a common diagonal
\[
\Lambda_r=\lambda_rL(B_0^2+r)\ge\frac{L}{24},
\]
and every off-diagonal entry has magnitude less than $1/1024$.  The same
upper bound holds for every nuisance inner product between the next query
row and an old row.  Consequently the raw prediction on each of the first
$R=L/64$ query blocks has absolute value less than $1/2$.
\end{lemma}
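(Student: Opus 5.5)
The plan is to compute the nuisance Gram matrix exactly through character orthogonality, and then read the prediction off a Sherman--Morrison formula for minimum-norm interpolation. By Lemma~\ref{lem:pearson-seeded-compression} and Equation~\eqref{eq:pearson-gamma-square}, the nuisance part of compressed row $i\in\{0,\ldots,r+1\}$ is $g_i=(\gamma_{r,j}h_{i,j})_j$. Hence
\[
G_{ik}=g_i^\top g_k=\lambda_r\sum_{j}(B_0+z_{r,j})^2h_{i,j}h_{k,j}.
\]
Expand $(B_0+z_{r,j})^2=B_0^2+2B_0z_{r,j}+z_{r,j}^2$ and write $z_{r,j}^2=r+\sum_{l\ne l'}h_{l,j}h_{l',j}$. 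Every product of rows is again a row, and $\sum_j h_{m,j}=L\,[m=0]$.

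For the diagonal ($i=k$), balance gives $\sum_j z_{r,j}=0$ and $\sum_j z_{r,j}^2=Lr$, so $\Lambda_r=\lambda_rL(B_0^2+r)$. For the lower bound, use $V_B\le B$ and $B\le B_0+R=L(128+1/64)$ to get $\lambda_r\ge 2/(45B^2)$. This gives
\[
\Lambda_r\ge \tfrac{2L}{45}\Bigl(\tfrac{128}{128+1/64}\Bigr)^2\ge L/24.
\]

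For $i\ne k$, the product $h_ih_k=h_m$ is a nonconstant character, so the three contributions are as follows. The $B_0^2$ term vanishes. The linear term equals $2B_0L\,[1\le m\le r]$. The quadratic term equals $L$ times the number of ordered pairs $(l,l')$ in $[r]^2$ with $h_lh_{l'}=h_m$, and this count is at most $r$. Therefore $|G_{ik}|\le\lambda_rL(2B_0+r)$. Using $V_B\ge B-1/B$ gives $\lambda_r\lesssim 2/(45B^2)\le 2/(45B_0^2)$, and then
\[
|G_{ik}|\lesssim \frac{4L}{45B_0}+\text{(smaller)}\approx\frac1{1440}<\frac1{1024}.
\]
The query row uses index $r+1\le R<L$, so the identical computation bounds $c_i\defeq g_{r+1}^\top g_i$.

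For the prediction, let $V$ have the $n=r+1$ historical compressed rows $(1,g_i)$. Then $VV^\top=\one\one^\top+G$ with $G=\Lambda_rI+E$. Gershgorin gives $\|E\|_{\rm op}\le n/1024\le L/65536$, which is negligible against $\Lambda_r\ge L/24$, so $G$ is positive definite. The minimum-norm interpolant of $\one$ gives the prediction
\[
\widehat y=(\one+c)^\top(\one\one^\top+G)^{-1}\one=\frac{\one^\top G^{-1}\one+c^\top G^{-1}\one}{1+\one^\top G^{-1}\one}
\]
by Sherman--Morrison. Set $s=\one^\top G^{-1}\one\le n/(\Lambda_r-\|E\|)$. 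This is at most about $(L/64)/(L/24)=3/8$ up to a factor $1+O(10^{-3})$, so $s/(1+s)\le 0.28$. The cross term satisfies $|c^\top G^{-1}\one|\le\|c\|_2\|G^{-1}\one\|_2\le n/(1024(\Lambda_r-\|E\|))<10^{-3}$. Hence $|\widehat y|<1/2$. Since $s\ge0$ and the cross term is tiny, $\widehat y$ is also bounded below by a tiny negative number.

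The main obstacle is the off-diagonal character count together with the constants. Two points need care: the linear term can contribute a full $2B_0L$, and this is exactly why $B_0=KL$ must dominate it; and the count of pairs with $h_lh_{l'}=h_m$ must be at most $r$, which holds because $l'$ is determined by $l$. I would check the numeric chain $\lambda_rL(2B_0+r)<1/1024$ carefully, using $B\ge B_0$ and $V_B\ge B-1/B$.
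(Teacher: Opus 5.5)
Your proof is correct. Its first half is the paper's argument: you expand $(B_0+z_{r,j})^2$ against the nonconstant character $h_i\odot h_k$ and use the same counting (at most one linear index, at most $r$ ordered pairs in the quadratic term). This gives the same off-diagonal bound $\lambda_rL(2B_0+r)$ for both old--old and query--old pairs. Your constants differ only cosmetically: you use $B<B_0+R$ and $V_B\ge B-1/B$, where the paper uses $B\le B_0+L$ and $V_B\ge3B/4$.

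The genuine difference is the final step. The paper keeps the target coordinate inside the full compressed Gram matrix. It uses Gershgorin to get $\|G_r^{-1}\|_{\mathrm{op}}<25/L$, bounds every cross-Gram entry by $1+1/1024$, and concludes $|k_r^\top G_r^{-1}\one|<26n/L\le26/64$ in one line. You instead split off the rank-one target block by Sherman--Morrison, so that $\widehat y=s/(1+s)+c^\top G^{-1}\one/(1+s)$ with $s=\one^\top G^{-1}\one$. This is exactly Theorem~\ref{thm:nuisance-obstruction} applied to the compressed problem. Here $s/(1+s)$ is the target weight with $\kappa=s$ and unit target prime, and the second term is the nuisance contribution on the fresh Hadamard direction.

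Your route costs one extra identity but yields the sharper bound $\widehat y\in(-10^{-3},0.28)$. It also shows that the loss comes from missing target mass, not from nuisance extrapolation. The paper's cruder norm bound only certifies the margin $26/64<1/2$ that is actually needed.

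The one thing to tighten is your use of ``$\lesssim$'' and ``$\approx$'' in the off-diagonal chain. Either of the following makes it an exact inequality with ample room:
\begin{itemize}
\item $V_B\ge B-1/B$ gives exactly $\lambda_r\le 2/(45B^2-36)$;
\item $V_B\ge3B/4$ gives $\lambda_r\le1/(18B^2)$, hence $\lambda_rL(2B_0+r)\le(2K+1)/(18K^2)<1/1024$.
\end{itemize}
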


\begin{proof}
Let $S_r$ have rows $h_0,\ldots,h_r$.  By
Equation~\eqref{eq:pearson-gamma-square}, the nuisance Gram matrix is
$S_r\operatorname{diag}(\gamma_r^2)S_r^\top$.  Orthogonality and balance give
\[
\sum_jz_{r,j}=0,
\qquad
\sum_jz_{r,j}^2=rL,
\]
which proves the diagonal formula.

For distinct $h_u,h_v$, expand $(B_0+z_{r,j})^2$.  The constant term
vanishes.  In the linear term, character closure permits at most one index
$i$ with $h_i=h_u\odot h_v$.  In the quadratic term, each $i$ permits at
most one $k$ with $h_i\odot h_k=h_u\odot h_v$.  Therefore every old--old or
new--old nuisance cross inner product is bounded by
\begin{equation}
\varepsilon_r
=\lambda_rL(2B_0+r).
\label{eq:pearson-cross-bound}
\end{equation}

Since $B_0\le B\le B_0+L$ and $3B/4\le V_B\le B$,
\begin{align}
\Lambda_r
&\ge\frac{2}{45}L\left(\frac{K}{K+1}\right)^2
\ge\frac{L}{24},
\label{eq:pearson-diagonal-lower}\\
\varepsilon_r
&\le\frac{2K+1}{18K^2}
<\frac1{1024}.
\label{eq:pearson-offdiag-upper}
\end{align}
Write the nuisance Gram matrix as $\Lambda_rI+E_r$.  Since
$n\le L/64$, Gershgorin's theorem gives
\[
\lambda_{\min}(\Lambda_rI+E_r)>\frac{L}{25}.
\]
Adding the positive-semidefinite target contribution
$\boldsymbol1\boldsymbol1^\top$ preserves this bound.  If $G_r$ denotes the
full compressed Gram matrix and $k_r$ the cross-Gram vector of the new row,
then
\[
\|G_r^{-1}\|_{\mathrm{op}}<\frac{25}{L},
\qquad
\|k_r\|_2
\le\left(1+\frac1{1024}\right)\sqrt n
<\frac{26}{25}\sqrt n.
\]
The minimum-norm prediction is $k_r^\top G_r^{-1}\boldsymbol1$, and hence
\[
\left|k_r^\top G_r^{-1}\boldsymbol1\right|
<\frac{26n}{L}
\le\frac{26}{64}
<\frac12.
\]
\end{proof}

\begin{proof}[Proof of Theorem~\ref{thm:pearson-exact-frontier}]
Lemma~\ref{lem:pearson-weighted-hadamard} shows that each charged label is
$+1$ while its raw prediction lies in $[-1/2,1/2]$.  Clipping is inactive,
so every charged round loses at least $1/4$.  Thus the active construction
has regret at least
\[
\frac{R}{4}=\frac{L}{256}.
\]
It uses dimension $1+2L$ and fewer than
\[
2(B_0+R)<257L
\]
rounds.

For arbitrary $T,d$, put
\[
Q=\min\left\{\frac{d-1}{2},\frac{T}{257}\right\}.
\]
If $Q\ge64$, take the largest power of two $L\le Q$.  Then
$L\ge\min\{T,d\}/514$, and zero-coordinate and zero-tail padding give
\[
\operatorname{Reg}^{\mathrm{clip}}_T
\ge\frac{1}{131584}\min\{T,d\}.
\]
To justify the size comparison, write $m=\min\{T,d\}$.  In this regime
$d\ge129$, hence $(d-1)/2\ge d/257$ and $Q\ge m/257$.
The largest power of two below $Q$ is at least $Q/2$, as required.
Extra zero coordinates make zero columns even if their undefined primes
are assigned arbitrary finite values; the minimum-norm fit gives these
columns zero coefficient.  Appended zero examples have zero prediction
and label.  Thus both forms of padding preserve the active losses.

If $Q<64$, then $d<129$ or $T<16448$, so $m<16448$.
The empty-history prediction on $(e_1,1)$ is zero for every finite
totalization: the pseudoinverse of the empty design maps the empty
response to zero.  Its loss is one, which exceeds $m/131584$; follow it
by zero examples and coordinates as needed.  In the Hadamard regime,
uniformity over totalizations follows instead from the positive variances
in Lemma~\ref{lem:pearson-seeded-compression}.  No positive-variance claim
is needed for the empty-history fallback.

For the upper bound, if $y_{<t}\ne0$, target preservation places a nonzero
multiple of $y_{<t}$ among the columns of $X_{<t}D_t$.  The second fit
therefore interpolates.  If $y_{<t}=0$, its minimum-norm fit is zero and
also interpolates.  Theorem~\ref{thm:rank-adaptive} now gives
$4\operatorname{rank}(X_{1:T})\le4\min\{T,d\}$.  The upper bound is taken
over the same realizable class as the lower bound, not over arbitrary
label sequences.

Finally, target preservation cannot be dropped.  Consider the legal
totalization that assigns zero to an undefined target prime.  In dimension
one, the sequence $x_t=y_t=1$ keeps that prime undefined and equal to zero,
so the learner predicts zero and loses one on every round.
\end{proof}

%% file: appendix/H_multivariate_euclidean.tex
\section{A Euclidean-unit lower bound for multivariate priming}
\label{app:multivariate-euclidean}

We prove Theorem~\ref{thm:multivariate-euclidean} at unit power.  Two unit
rows first realize a prescribed minimum-norm state.  An orthonormal tail
then leaves the first-stage prime fixed but repeatedly incurs nonzero
second-stage prediction error.  All labels equal the first coordinate.

\subsection{Realizing a prescribed state}

For a subspace $S$, let $P_S$ be its orthogonal projector.  For a vector
$c$, write $D_c=\diag(c)$; division of vectors below is entrywise.

\begin{lemma}[Two-row state realization]
\label{lem:euclidean-state-realization}
Suppose $v\in(0,\infty)^d$, $q\in\R^d$, and $s>0$ satisfy
\begin{equation}
 \sum_i v_i=1,\quad v_1=s,\quad
 q_1=\|q\|_2^2,\quad v^\top q=s,\quad
 q_1=s\sum_iq_i.
 \label{eq:euclidean-state-moments}
\end{equation}
Set $c_i=\sqrt{v_i}$ and
$S=\operatorname{span}\{c,D_c^{-1}q\}$.
Any full-row-rank history spanning $S$, with labels given by its first
column, has multivariate prime and unit-power refit
\begin{equation}
 p=\sqrt{s}\,c,
 \qquad w=\frac1{\sqrt{s}}D_cq.
 \label{eq:euclidean-state-fit}
\end{equation}
If $v,q$ are algebraic, $S$ admits an algebraic orthonormal basis of at most
two rows.
\end{lemma}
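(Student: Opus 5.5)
The plan is to verify both identities in \eqref{eq:euclidean-state-fit} through the minimum-norm characterizations, using only the five moment conditions in \eqref{eq:euclidean-state-moments}.

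\textbf{Step 1: the first-stage prime.} Let $X$ be a full-row-rank history whose rows span $S$, with $y=Xe_1$. Then $p=X^\dagger y$ is the minimum-norm solution of $Xw=Xe_1$. Since $\ker X=S^\perp$, this solution is $p=P_Se_1$. So it suffices to show that $\sqrt s\,c\in S$, which is immediate, and that $e_1-\sqrt s\,c\perp S$. I would check this against the two spanning vectors.
\begin{itemize}
\item Against $c$: the inner product is $c_1-\sqrt s\|c\|_2^2=\sqrt s-\sqrt s\sum_iv_i=0$, using $c_1=\sqrt{v_1}=\sqrt s$ and $\sum_i v_i=1$.
\item Against $D_c^{-1}q$: the inner product is $q_1/c_1-\sqrt s\sum_iq_i=(q_1-s\sum_iq_i)/\sqrt s=0$, by the last moment condition.
\end{itemize}
Because $c>0$ entrywise, every prime is nonzero. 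Hence $D_p=\sqrt s\,D_c$ is invertible and no zero-prime convention is needed.

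\textbf{Step 2: the refit.} By Lemma~\ref{lem:weighted-min-norm}, $z=(XD_p)^\dagger y$ is the unique minimum-norm solution of $XD_pz=y$. I will exhibit a candidate $z^\ast$ that is both feasible and lies in $\operatorname{row}(XD_p)=D_pS$. Such a vector is exactly the Moore--Penrose solution.

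Take $w^\ast=s^{-1/2}D_cq$, so that $z^\ast=D_p^{-1}w^\ast=q/s$.
\begin{itemize}
\item Row-space membership: $q/s=D_p\bigl(s^{-3/2}D_c^{-1}q\bigr)$, and $s^{-3/2}D_c^{-1}q\in S$.
\item Feasibility: $Xw^\ast=y$ is equivalent to $P_Sw^\ast=P_Se_1=p$, that is, to $w^\ast-p\perp S$. Against $c$, the inner product is $v^\top q/\sqrt s-\sqrt s=0$. Against $D_c^{-1}q$, it is $\|q\|_2^2/\sqrt s-\sqrt s\sum_iq_i=(q_1-s\sum_iq_i)/\sqrt s=0$, using $\|q\|_2^2=q_1$.
\end{itemize}
This gives $w=D_pz^\ast=s^{-1/2}D_cq$, as claimed.

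\textbf{Step 3: algebraic basis.} If $v$ and $q$ are algebraic, then $c_i=\sqrt{v_i}$ is algebraic, and so are the entries of $D_c^{-1}q$. Gram--Schmidt on these at most two vectors uses only field operations and square roots. It therefore yields an algebraic orthonormal basis of $S$ with at most two rows. Orthonormal rows have full row rank and unit Euclidean norm, so they form a valid history.

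The main obstacle is Step 2: one must recognize that the refit is characterized by feasibility together with membership in $D_pS$, rather than by solving the system directly. Once the candidate $q/s$ is guessed, each check is a one-line use of a moment condition.
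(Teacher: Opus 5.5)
Your proof is correct and follows the paper's argument. Step~1 is the same computation $p=P_Se_1=\sqrt{s}\,c$, via orthogonality of $e_1-\sqrt{s}\,c$ to both generators, and Step~3 matches the paper's Gram--Schmidt remark. Step~2 uses the same characterization of the refit as the unique feasible point in $D_pS=\operatorname{span}\{v,q\}$; the only difference is that the paper projects the evidently feasible vector $e_1/s$ (feasible since $p_1=s$) onto that space, whereas you verify the feasibility of $q/s$ directly, which is an equivalent check.
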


\begin{proof}
The vector $\sqrt{s}c$ belongs to $S$.  Its residual from $e_1$ is
orthogonal to both generators, since
\[
 c^\top(e_1-\sqrt{s}c)=0,
 \qquad
 (D_c^{-1}q)^\top(e_1-\sqrt{s}c)
 =\frac{q_1}{\sqrt{s}}-\sqrt{s}\sum_iq_i=0.
\]
Consequently $X^\dagger Xe_1=P_Se_1=\sqrt{s}c=p$.
The transformed row space is
$D_pS=\operatorname{span}\{v,q\}$ (identifying row and column vectors).
Moreover $q$ is the projection of $e_1$ onto this space: it belongs to the
space, and
$v^\top(e_1-q)=s-v^\top q=0$ and
$q^\top(e_1-q)=q_1-\|q\|_2^2=0$.
Since $p_1=s$, the transformed coefficient $e_1/s$ interpolates the labels.
The minimum-norm transformed coefficient is its projection onto $D_pS$,
namely $q/s$.  Multiplication by $D_p$ gives
Equation~\eqref{eq:euclidean-state-fit}.
Finally, Gram--Schmidt uses only arithmetic and square roots on independent
algebraic generators.  Its nonzero normalized rows are algebraic unit
vectors and span the same space.
\end{proof}

\subsection{An explicit alternating-pair state}

Fix an even integer $K\ge4$ and set
\begin{equation}
 s=\frac1{64K^2},\qquad N=8192K^4=\frac2{s^2},\qquad b_0=\frac1{16}.
 \label{eq:euclidean-parameters}
\end{equation}
For $g=0,\ldots,K-1$, put $a_g=1$ if $g$ is even and $a_g=2$ otherwise,
and define
\[
 Q_g=\sqrt{\frac{a_g}{256K}}>0,\qquad V_g=\frac1{64Ka_g}.
\]
Pair $g$ consists of two coordinates with $q$-values $Q_g,-Q_g$ and
$v$-values $V_g/2,V_g/2$.  Exactly half the $a_g$ have each value, so
\begin{equation}
 2\sum_gQ_g^2=\frac3{256},\qquad
 \sum_gV_g=\frac3{256},\qquad
 \frac{V_gQ_g^2}{s}=b_0^2.
 \label{eq:euclidean-pair-moments}
\end{equation}
The remaining coordinates enforce the moments needed for realization.
Define
\begin{align}
 L&=\frac1{2s}-\frac34, & R&=\frac{45}{256},\notag\\
 \Delta&=\sqrt{N((N+1)R-L^2)}, &
 B&=\frac{NL-\Delta}{N(N+1)}, & D&=L-NB.
 \label{eq:euclidean-bulk}
\end{align}
There is one target coordinate with $(q_1,v_1)=(1/2,s)$, one carrier with
$(q,v)=(1/4,V_{\rm car})$, $N$ bulk coordinates each with
$(q,v)=(B,1/(4N))$, one extra coordinate with $(q,v)=(D,s)$, and a sink
with $(q,v)=(0,V_{\rm sink})$, where
\begin{equation}
 V_{\rm car}=2s-B-4sD,\qquad
 V_{\rm sink}=1-2s-\frac3{256}-\frac14-V_{\rm car}.
 \label{eq:euclidean-masses}
\end{equation}
Thus the dimension is
\begin{equation}
 d_K=8192K^4+2K+4.
 \label{eq:euclidean-dimension}
\end{equation}

All masses are strictly positive.  Indeed, $s\le1/1024$ and direct expansion gives
\[
 (N+1)R-L^2=\frac{13}{128s^2}+\frac3{4s}-\frac{99}{256}>0,
 \qquad L^2>R.
\]
The second inequality implies $NL>\Delta$, so
$0<B<L/N<s/4$.  Also
\[
 D=\frac{L+\Delta}{N+1}>0,
 \qquad
 D<\frac{s}{4}+\sqrt{45/256}<\frac{21}{50}.
\]
It follows that
$V_{\rm car}>(2-1/4-42/25)s>0$ and $V_{\rm car}<2s$.
Hence $V_{\rm sink}>1-4s-3/256-1/4>0$.  All individual $v$-values
are strictly positive.

Substitution in Equation~\eqref{eq:euclidean-bulk} yields
$NB+D=L$ and $NB^2+D^2=R$.  The paired coordinates cancel in the first
moment of $q$, and their second moment is $3/256$.  Consequently
\[
 \sum_iq_i=\frac34+L=\frac1{2s},\qquad
 \|q\|_2^2=\frac14+\frac1{16}+\frac3{256}+\frac{45}{256}=\frac12.
\]
The definition of the sink gives $\sum_iv_i=1$, while
\[
 v^\top q=\frac{s}{2}+\frac{V_{\rm car}}4+\frac B4+sD=s.
\]
All conditions of Lemma~\ref{lem:euclidean-state-realization} hold.
The vectors $v,q$ are independent because $q$ changes sign within a pair
whereas $v$ does not.  Choose the two orthonormal build rows by
Gram--Schmidt on $c,D_c^{-1}q$ in that order, and label each by its first
coordinate.  This specifies a fixed algebraic two-example history.

Let $f_g=(e_{g,+}-e_{g,-})/\sqrt2$.  The prime is symmetric on each pair,
and Equation~\eqref{eq:euclidean-state-fit} gives
\begin{equation}
 f_g^\top p=0,\qquad f_g^\top w=b_0
 \quad(g=0,\ldots,K-1).
 \label{eq:euclidean-stored-prediction}
\end{equation}
The next step releases these nuisance predictions on new unit directions.

\subsection{The orthonormal tail and its exact predictions}

For $j=0,\ldots,K-2$, set $A_j=\sqrt{\sum_{i=0}^ja_i^2}>0$ and define
\begin{equation}
 u_j=\frac{a_{j+1}}{A_j\sqrt{A_j^2+a_{j+1}^2}}
       \sum_{i=0}^ja_if_i
       -\frac{A_j}{\sqrt{A_j^2+a_{j+1}^2}}f_{j+1}.
 \label{eq:euclidean-helmert}
\end{equation}
Each row has norm one and zero dot product with $\sum_ga_gf_g$.
If $i<j$, the coefficients of $u_j$ on the support of $u_i$ are
proportional to $(a_0,\ldots,a_{i+1})$, so $u_i^\top u_j=0$.
The rows are therefore orthonormal.  They are orthogonal to $c$ by pair
symmetry; they are orthogonal to $D_c^{-1}q$ because
\[
 f_g^\top D_c^{-1}q=\frac{2Q_g}{\sqrt{V_g}}=a_g.
\]
Thus they are orthogonal to the build space and have zero target
coordinate.  Give each tail row label zero.  Since the projection of $e_1$
onto every new tail direction is zero, the ordinary prime stays equal to
$p$ throughout the tail.

The following calculation includes the Schur-complement step needed to
control the second fit after every prefix.  Represent tail vectors by
their coefficients in the $f_g$ basis, and put
\[
 M=\diag(a_0^{-1},\ldots,a_{K-1}^{-1}),\qquad
 \kappa=b_0s^{3/2},\qquad D_p=\diag(p).
\]
The definitions give
\begin{equation}
 f_g^\top D_p^2f_h=\frac{\kappa}{a_g}\mathbf1\{g=h\},\qquad
 (D_c^{-1}q)^\top D_p^2f_g=\kappa,\qquad
 c^\top D_p^2f_g=0.
 \label{eq:euclidean-gram-blocks}
\end{equation}

\begin{lemma}[Tail prediction by a Schur complement]
\label{lem:euclidean-schur}
Immediately before $u_j$, let $U$ have rows $u_0,\ldots,u_{j-1}$ in pair
coordinates, and let $\bar u_j$ be the pair-coordinate vector of $u_j$.
There are scalars $F_j\ge b_0$ such that
\begin{equation}
 \widehat y_j=F_j\mathcal B_j,\qquad
 \mathcal B_j=\bar u_j^\top
 [\one-MU^\top(UMU^\top)^{-1}U\one].
 \label{eq:euclidean-schur-prediction}
\end{equation}
For $j=0$, the term involving $U$ is zero.  For every $j$,
\begin{equation}
 \mathcal B_j=
 \sqrt{\frac{S_2}{S_2+a_{j+1}^2}}
 \left(a_{j+1}\frac{S_2}{S_3}-1\right),\quad
 S_2=\sum_{i=0}^ja_i^2,\quad S_3=\sum_{i=0}^ja_i^3.
 \label{eq:euclidean-tail-factor}
\end{equation}
\end{lemma}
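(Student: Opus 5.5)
The plan is to compute the second-stage prediction directly. The history consists of the two build rows spanning $S$, with labels $e_1$-coordinates, followed by tail rows $u_0,\ldots,u_{j-1}$ with label zero. The prime stays $p$ throughout, so the transformed design is $X D_p$. Rather than working with the pseudoinverse, I will use the dual form: the minimum-norm transformed coefficient is $z=D_pX^\top\mu$, where $\mu$ solves $XD_p^2X^\top\mu=y$. The prediction on the new row is then $\widehat y_j=u_j^\top D_p^2X^\top\mu$.

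Order the rows as build block (basis of $S$) then tail block $U$. The Gram matrix has three blocks.
\begin{itemize}
\item The build--build block is $G_{11}$.
\item The build--tail block is computed from Equation~\eqref{eq:euclidean-gram-blocks}. Against $c$ it is zero; against $D_c^{-1}q$ it is $\kappa\,(U\one)^\top$-type, i.e. $\kappa\sum_g u_{i,g}$ in pair coordinates.
\item The tail--tail block is $\kappa\,UMU^\top$.
\end{itemize}
Every tail row is orthogonal to $\sum_g a_gf_g$, and, being in the $f$-span, has zero target and zero $c$-component. Eliminating the tail multipliers by a Schur complement, with the tail labels zero, gives $\mu_{\rm tail}=-(UMU^\top)^{-1}U\one\cdot(\text{scalar from the build multiplier on }D_c^{-1}q)$. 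Substituting back, the build multipliers solve a $2\times2$ system whose Schur complement adds a nonnegative correction.

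The prediction on $u_j$ then factors as a scalar $F_j$ times $\bar u_j^\top[\one-MU^\top(UMU^\top)^{-1}U\one]$. This is exactly Equation~\eqref{eq:euclidean-schur-prediction}. For $j=0$ the tail block is empty, and $F_0$ should reduce to the stored value $b_0$: by Equation~\eqref{eq:euclidean-stored-prediction}, $u_0^\top w=b_0\,\bar u_0^\top\one$. For larger $j$, I expect the Schur correction to only reduce the effective Gram entry, so the scalar increases and $F_j\ge b_0$. I would verify this by writing $F_j=b_0/(1-\theta_j)$ with $\theta_j\in[0,1)$, a positive-semidefinite quantity.

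For Equation~\eqref{eq:euclidean-tail-factor}, I would exploit the Helmert structure. The rows $u_0,\ldots,u_{j-1}$ together with $\sum_{i\le j}a_if_i$ span the same space as $f_0,\ldots,f_j$. Hence $MU^\top(UMU^\top)^{-1}U$ is the $M$-weighted oblique projector onto $\operatorname{span}(U^\top)$ restricted to the first $j+1$ pair coordinates. Applied to $\one$, the residual $\one-MU^\top(\cdots)U\one$ lies, on coordinates $0..j$, in the $M^{-1}$-orthogonal complement of $\operatorname{span}(U^\top)$. Within the first $j+1$ coordinates that complement is spanned by $M(a_0,\ldots,a_j)=\one$-scaled appropriately, i.e. the vector $(1,\ldots,1)$. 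The residual there equals $\tau\one$, with $\tau=\sum a_i^2/\sum a_i^3=S_2/S_3$, found by matching the component along $a$ (since $a^\top$ annihilates $U^\top$). On coordinates $>j$ the residual is $\one$.

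Dotting with $\bar u_j$ from Equation~\eqref{eq:euclidean-helmert} then gives
\[
\frac{a_{j+1}}{A_j\sqrt{A_j^2+a_{j+1}^2}}\,\frac{S_2}{S_3}\,S_2-\frac{A_j}{\sqrt{A_j^2+a_{j+1}^2}}.
\]
Since $A_j^2=S_2$, this simplifies to the displayed formula.

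The main obstacle is the Schur bookkeeping establishing the exact factorization and $F_j\ge b_0$. In particular, I must show that the coupling between build and tail multipliers enters only through the single scalar $\kappa$ and the vector $U\one$, so that the prediction factors. I would first do the $j=0$ case explicitly, then treat general $j$ via block elimination in the order tail first, then build.
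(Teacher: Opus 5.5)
Your overall route is the paper's. You pass to the dual Gram system and note that $c$ is decoupled from every tail row because $c^\top D_p^2 f_g=0$. You eliminate the tail multipliers to get $\xi=-(UMU^\top)^{-1}U\one\,\eta$, and read off $F_j=\kappa\eta=\kappa Y/(\mathsf A-\kappa^2h^\top H^{-1}h)\ge\kappa Y/\mathsf A=b_0$. That part is sound once you make three facts explicit:
\begin{enumerate}
\item The Gram--Schmidt build rows may be replaced by the generators $c$ and $D_c^{-1}q$ with first-coordinate labels, because both fits depend only on the row space and the labels are linear in the row.
\item $UMU^\top$ is invertible, since $U$ has orthonormal rows and $M\succ0$.
\item $\theta_j<1$, i.e.\ the Schur denominator is positive, because the full transformed Gram matrix is positive definite: all primes $p_i=\sqrt{s}\,c_i$ are positive and the historical rows are independent.
\end{enumerate}

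The genuine error is in your computation of the bracket. The matrix $P=MU^\top(UMU^\top)^{-1}U$ is the $M^{-1}$-orthogonal projector onto $M\,\operatorname{span}(U^\top)$, not onto $\operatorname{span}(U^\top)$, and its kernel is $\ker U$. Hence the residual $(I-P)\one$ lies in $\ker U$, the \emph{Euclidean} orthogonal complement of the rows of $U$. On the first $j+1$ pair coordinates this is $\operatorname{span}(a)$ with $a=(a_0,\ldots,a_j)^\top$, not $\operatorname{span}(Ma)=\operatorname{span}(\one)$.

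The prefix residual is therefore $\tau a$, not $\tau\one$. The scalar $\tau$ is determined by $P\one\in M a^\perp$, i.e.\ $a^\top M^{-1}(\one-\tau a)=0$, which gives $\tau=S_2/S_3$. Your claimed residual $\tau\one$ is false whenever the prefix contains both scales. For $j=1$ and $a=(1,2)^\top$, the true residual is $(5/9,10/9)^\top=\frac59 a$. With $\tau\one$, the condition above would force $\tau=1$, and the dot product with $\bar u_j$ would contain $\tau\sum_{i\le j}a_i$ rather than $\tau S_2$.

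Your displayed expression contains $\tau S_2$, so it is correct, but it comes from the vector $\tau a$, not from the argument you wrote. Replace that step by the corrected projector argument above. Alternatively, use the paper's identity on the prefix,
\[
U^\top(UM_0U^\top)^{-1}U=M_0^{-1}-\frac{M_0^{-1}aa^\top M_0^{-1}}{a^\top M_0^{-1}a},
\]
which holds because the old Helmert rows span $a^\perp$. Either way, Equation~\eqref{eq:euclidean-tail-factor} then follows exactly as in your last display.
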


\begin{proof}
Both fits depend on the exact interpolation constraints, so replacing the
two build rows by their generators $c,D_c^{-1}q$, with the corresponding
first-coordinate labels, changes neither fit.  All primes are positive,
and all historical rows are independent.  Their transformed Gram matrix
is therefore positive definite at every tail prefix.

Eliminate the $c$ row from this Gram system.  Let $\mathsf A>0$ be the resulting
scalar entry for $D_c^{-1}q$ and $Y$ its resulting right-hand side.  These
two quantities do not depend on the tail prefix.  By
Equation~\eqref{eq:euclidean-gram-blocks}, the remaining system is
\[
 \begin{pmatrix}\mathsf A&\kappa h^\top\\\kappa h&H\end{pmatrix}
 \begin{pmatrix}\eta\\\xi\end{pmatrix}
 =\begin{pmatrix}Y\\0\end{pmatrix},\qquad
 h=U\one,\quad H=\kappa UMU^\top.
\]
The second equation gives $\xi=-\kappa H^{-1}h\eta$, and the first gives
\[
 \eta=\frac{Y}{\mathsf A-\kappa^2h^\top H^{-1}h}.
\]
The new row has cross-Gram entries $0$ with $c$,
$\kappa\bar u_j^\top\one$ with $D_c^{-1}q$, and
$\kappa\bar u_j^\top MU^\top$ with the old tail.
Multiplying these entries by the displayed solution proves
Equation~\eqref{eq:euclidean-schur-prediction}, with
\[
 F_j=\frac{\kappa Y}{\mathsf A-\kappa^2h^\top H^{-1}h}.
\]
Its denominator is positive by positive definiteness.  With no tail,
Equation~\eqref{eq:euclidean-stored-prediction} gives
$\kappa Y/\mathsf A=b_0>0$.  Also $h^\top H^{-1}h\ge0$, so $F_j\ge b_0$.
For completeness, this quadratic form cannot decrease as rows are added:
it is the maximum of $2h^\top z-z^\top Hz$ over the old dual variables,
and embedding those variables with a final zero is feasible in the
enlarged maximum.  Thus the Schur denominator decreases while remaining
strictly positive.

To compute the bracket, restrict to the first $j+1$ coordinates.  The old
Helmert rows span $a^\perp$, where $a=(a_0,\ldots,a_j)^\top$.
For a positive diagonal matrix $M_0$ on this prefix,
\[
 U^\top(UM_0U^\top)^{-1}U
 =M_0^{-1}-\frac{M_0^{-1}aa^\top M_0^{-1}}
                         {a^\top M_0^{-1}a}.
\]
Indeed multiplication on both sides by $M_0^{1/2}$ produces the
orthogonal projector onto $(M_0^{-1/2}a)^\perp$.
Taking $M_0=\diag(a_i^{-1})$, the prefix of the bracket in
Equation~\eqref{eq:euclidean-schur-prediction} is $aS_2/S_3$.
Its next coordinate is one.  Substitution of
Equation~\eqref{eq:euclidean-helmert} gives
Equation~\eqref{eq:euclidean-tail-factor}.  The same formula is valid at
$j=0$, where the prefix bracket has its sole coordinate equal to one.
\end{proof}

\subsection{Clipped loss and arbitrary sizes}

\begin{proof}[Proof of Theorem~\ref{thm:multivariate-euclidean}]
For an odd index $j=2r-1\le K-2$, the next scale is $a_{j+1}=1$ and
the prefix contains $r$ copies of each scale.  Hence $S_2=5r$, $S_3=9r$,
and
\[
 |\mathcal B_j|=\sqrt{\frac{5r}{5r+1}}\frac49
 \ge\sqrt{\frac56}\frac49.
\]
Lemma~\ref{lem:euclidean-schur} implies
\begin{equation}
 \widehat y_j^2\ge b_0^2\frac56\frac{16}{81}
 =\frac5{7776}=:c_0.
 \label{eq:euclidean-charged-loss}
\end{equation}
The label is zero and $c_0<1$, so clipping gives loss
$\min\{\widehat y_j^2,1\}\ge c_0$ even when the raw prediction is
outside $[-1,1]$.  There are $(K-2)/2$ such indices.  The two build
losses and the other tail losses are nonnegative, giving
\[
 \Reg_{K+1}^{\mathrm{clip}}(e_1)\ge\frac{c_0}{2}(K-2).
\]
Every active row is a fixed algebraic unit vector, and $e_1$ realizes all
labels exactly.  No adaptive choice of row or sign has been used.

Here is an explicit passage to arbitrary $T,d\ge1$.  Let
$m=\min\{T,d^{1/4}\}$ and
$Q=\min\{T-1,(d/8193)^{1/4}\}$.  If $Q\ge4$, choose the largest even
integer $K\le Q$.  Then $K\ge Q/2$, $K+1\le T$, and
$d_K\le8193K^4\le d$.  Moreover $T\ge5$, so
$Q\ge m/8193^{1/4}$.  Since $K-2\ge K/2$, the lower bound is at least
\[
 \frac{c_0}{8\,8193^{1/4}}m.
\]
Pad by zero coordinates and then zero examples.  Zero columns have
minimum-norm coefficient zero in both fits, and a zero example has zero
prediction, so padding does not change any active loss or norm constraint.
If $Q<4$, then either $T<5$ or $d^{1/4}<4\,8193^{1/4}<40$.
Thus $m<40$, and the single first example $(e_1,1)$, followed by zeros,
has loss one and gives the same lower bound after choosing
$c_{\rm M}=\min\{1/40,c_0/(8\,8193^{1/4})\}$.
This proves the theorem in all dimensions and horizons.
\end{proof}

%% file: appendix/G_ridge_robustness.tex
\section{Ridge robustness and the regularization crossover}
\label{app:ridge}

This appendix proves Theorem~\ref{thm:ridge-univariate} and records the
corresponding Hadamard consequence for all three prime rules.  Throughout,
ridge is applied to the transformed second-stage coefficient.  Given a
powered prime matrix $D$, historical design $X$, labels $y$, and
$A=XD$, define for $\lambda>0$
\begin{equation}
z_\lambda
=\arg\min_z\{\lVert Az-y\rVert_2^2+\lambda\lVert z\rVert_2^2\}
=A^\top(AA^\top+\lambda I)^{-1}y,
\qquad
w_\lambda=Dz_\lambda.
\label{eq:appendix-ridge-protocol}
\end{equation}
At $\lambda=0$, set $z_0=A^\dagger y$.  All ridge parameters below are
finite and nonnegative.

\subsection{A resolvent form of target mass}

Suppose the first historical feature column equals $y$.  Separate the
transformed target column from the nuisance columns:
\[
A=[\pi_1y,B].
\]

\begin{lemma}[Ridge target-mass identity]
\label{lem:ridge-target-mass}
For $\lambda>0$, let
\[
C_\lambda=BB^\top+\lambda I,
\qquad
q_\lambda=y^\top C_\lambda^{-1}y.
\]
Then
\begin{equation}
z_{\lambda,1}
=\frac{\pi_1q_\lambda}{1+\pi_1^2q_\lambda},
\qquad
w_{\lambda,1}
=\frac{\pi_1^2q_\lambda}{1+\pi_1^2q_\lambda}.
\label{eq:appendix-ridge-target-mass}
\end{equation}
In particular, $0\le w_{\lambda,1}<1$.
\end{lemma}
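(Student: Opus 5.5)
The plan is to mirror the proof of Theorem~\ref{thm:nuisance-obstruction}, replacing the interpolation constraint by the ridge objective. With $A=[\pi_1y,B]$ and $z=(z_1,\zeta)$, the ridge objective in Equation~\eqref{eq:appendix-ridge-protocol} reads
\[
\lVert \pi_1z_1y+B\zeta-y\rVert_2^2+\lambda z_1^2+\lambda\lVert\zeta\rVert_2^2 .
\]
I would minimize over $\zeta$ first, with $z_1$ held fixed. Writing $r=(1-\pi_1z_1)y$, the inner problem is an ordinary ridge regression of $r$ on $B$. Its optimal value is the standard resolvent expression $\lambda\,r^\top(BB^\top+\lambda I)^{-1}r=\lambda(1-\pi_1z_1)^2q_\lambda$. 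This follows from the push-through identity $I-B(B^\top B+\lambda I)^{-1}B^\top=\lambda(BB^\top+\lambda I)^{-1}$, which holds for every $\lambda>0$ with no rank assumption on $B$.

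The reduced scalar objective is then
\[
\lambda z_1^2+\lambda q_\lambda(1-\pi_1z_1)^2 .
\]
Dividing by $\lambda>0$ gives exactly the function $z_1^2+q_\lambda(1-\pi_1z_1)^2$ from Theorem~\ref{thm:nuisance-obstruction}, with $\kappa$ replaced by $q_\lambda$. This function is strictly convex in $z_1$, so the first-order condition $z_1=\pi_1q_\lambda(1-\pi_1z_1)$ gives $z_{\lambda,1}=\pi_1q_\lambda/(1+\pi_1^2q_\lambda)$. Since the full ridge objective is strictly convex in $z$, this partial-minimization value is the first coordinate of the unique ridge minimizer. Multiplying by $\pi_1$, because $w=Dz$ and $D_{11}=\pi_1$, yields Equation~\eqref{eq:appendix-ridge-target-mass}.

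For the final claim, note that $C_\lambda$ is positive definite, so $q_\lambda\ge0$. Then $x\mapsto x/(1+x)$ applied to $x=\pi_1^2q_\lambda\ge0$ lies in $[0,1)$.

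The only delicate step is the inner-minimization value. I would verify it directly rather than cite it: the inner minimizer is $\zeta^*=B^\top C_\lambda^{-1}r$, the residual is $r-B\zeta^*=\lambda C_\lambda^{-1}r$, and summing the residual term and the penalty term gives $\lambda^2r^\top C_\lambda^{-2}r+\lambda r^\top C_\lambda^{-1}BB^\top C_\lambda^{-1}r=\lambda r^\top C_\lambda^{-1}r$. The case $\pi_1=0$ needs no separate treatment, since the formula gives $z_1=0$.
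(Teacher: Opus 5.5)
Your proof is correct, but it takes a different route from the paper. The paper works directly with the closed form $z_\lambda=A^\top(AA^\top+\lambda I)^{-1}y$. Since $AA^\top+\lambda I=C_\lambda+\pi_1^2yy^\top$, Sherman--Morrison gives $(AA^\top+\lambda I)^{-1}y=C_\lambda^{-1}y/(1+\pi_1^2q_\lambda)$, and reading off the first row $\pi_1y^\top$ of $A^\top$ yields $z_{\lambda,1}$ in one line.

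You instead work in the primal. You profile out the nuisance coefficients using the ridge value identity $\min_\zeta\{\lVert B\zeta-r\rVert_2^2+\lambda\lVert\zeta\rVert_2^2\}=\lambda r^\top C_\lambda^{-1}r$. You then solve the same scalar problem as in Theorem~\ref{thm:nuisance-obstruction}, with $\kappa$ replaced by $q_\lambda$. Your direct verification of that identity is correct, and so is your strict-convexity justification that the profiled minimizer is the first coordinate of the ridge solution.

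The paper's argument is shorter and purely algebraic. Yours takes a few more lines, but it makes the ridge/Moore--Penrose analogy exact rather than formal. It also gives $q_\lambda$ a variational meaning, $q_\lambda=\min_\zeta\{\lambda^{-1}\lVert B\zeta-y\rVert_2^2+\lVert\zeta\rVert_2^2\}$. From this, the next facts in Lemma~\ref{lem:ridge-certificate} follow at once. Taking $\zeta=u$ with $Bu=y$ gives $q_\lambda\le\lVert u\rVert_2^2$ without the paper's Cauchy--Schwarz argument. Monotonicity in $\lambda$ is immediate because $q_\lambda$ is a pointwise minimum of functions that are nonincreasing in $\lambda$.
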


\begin{proof}
The matrix $C_\lambda$ is positive definite, and
\[
AA^\top=C_\lambda+\pi_1^2yy^\top.
\]
Sherman--Morrison gives
\[
(C_\lambda+\pi_1^2yy^\top)^{-1}y
=\frac{C_\lambda^{-1}y}
{1+\pi_1^2y^\top C_\lambda^{-1}y}.
\]
The first row of $A^\top$ is $\pi_1y^\top$.  Taking the first coordinate
in Equation~\eqref{eq:appendix-ridge-protocol} and then multiplying by
$\pi_1$ proves Equation~\eqref{eq:appendix-ridge-target-mass}.  Positive
definiteness of $C_\lambda^{-1}$ gives $q_\lambda\ge0$, so the displayed
fraction belongs to $[0,1)$.
\end{proof}

\begin{lemma}[Nuisance certificate and monotonicity]
\label{lem:ridge-certificate}
If $Bu=y$, then, for every $\lambda>0$,
\begin{equation}
q_\lambda\le\lVert u\rVert_2^2.
\label{eq:ridge-certificate-cost}
\end{equation}
Moreover,
\begin{equation}
\frac{d}{d\lambda}q_\lambda
=-y^\top(BB^\top+\lambda I)^{-2}y\le0.
\label{eq:ridge-q-monotone}
\end{equation}
If $y\in\operatorname{col}(B)$, then
\begin{equation}
\lim_{\lambda\downarrow0}q_\lambda
=y^\top(BB^\top)^\dagger y
=\min\{\lVert u\rVert_2^2:Bu=y\}.
\label{eq:ridge-q-endpoint}
\end{equation}
Thus Lemma~\ref{lem:ridge-target-mass} extends continuously to the
Moore--Penrose endpoint.
\end{lemma}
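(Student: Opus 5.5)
The three claims all follow from the spectral decomposition of $BB^\top$. Write $BB^\top=\sum_k\mu_k u_ku_k^\top$, with an orthonormal eigenbasis $\{u_k\}$ of the label space and eigenvalues $\mu_k\ge0$. Then
\[
q_\lambda=\sum_k\frac{(u_k^\top y)^2}{\mu_k+\lambda},
\]
and each claim reduces to a scalar statement about these terms.

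\emph{Derivative.} Differentiate $C_\lambda^{-1}$ using $\frac{d}{d\lambda}C_\lambda^{-1}=-C_\lambda^{-1}\bigl(\frac{d}{d\lambda}C_\lambda\bigr)C_\lambda^{-1}=-C_\lambda^{-2}$. This gives Equation~\eqref{eq:ridge-q-monotone} directly. The sign is immediate because $C_\lambda^{-2}$ is positive definite; in the eigenbasis the derivative is $-\sum_k(u_k^\top y)^2/(\mu_k+\lambda)^2\le0$.

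\emph{Certificate bound.} I will use a variational characterization of $q_\lambda$ rather than matrix inequalities. The claim is
\[
y^\top C_\lambda^{-1}y=\min_{u}\Bigl\{\lVert u\rVert_2^2+\lambda^{-1}\lVert y-Bu\rVert_2^2\Bigr\}.
\]
To verify it, note that the minimizer is $u^*=B^\top(BB^\top+\lambda I)^{-1}y$, which is exactly the ridge formula. The residual is $y-Bu^*=\lambda C_\lambda^{-1}y$. Substituting, the optimal value is
\[
y^\top C_\lambda^{-1}BB^\top C_\lambda^{-1}y+\lambda\,y^\top C_\lambda^{-2}y=y^\top C_\lambda^{-1}(BB^\top+\lambda I)C_\lambda^{-1}y=q_\lambda .
\]
Any feasible $u$ with $Bu=y$ has zero residual, so plugging it into the objective gives $q_\lambda\le\lVert u\rVert_2^2$. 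This proves Equation~\eqref{eq:ridge-certificate-cost}.

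\emph{Endpoint.} Assume $y\in\operatorname{col}(B)=\operatorname{col}(BB^\top)$. Then $u_k^\top y=0$ whenever $\mu_k=0$, so only the terms with $\mu_k>0$ contribute. By monotone convergence over this finite sum, $q_\lambda\uparrow\sum_{\mu_k>0}(u_k^\top y)^2/\mu_k=y^\top(BB^\top)^\dagger y$ as $\lambda\downarrow0$. Next, $B^\dagger y$ is the minimum-norm solution of $Bu=y$. Its squared norm is
\[
y^\top(B^\dagger)^\top B^\dagger y=y^\top(BB^\top)^\dagger y,
\]
using the identity $(B^\dagger)^\top B^\dagger=(BB^\top)^\dagger$, which is checked via the SVD. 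This gives Equation~\eqref{eq:ridge-q-endpoint}.

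\emph{Continuity of the target mass.} The map $q\mapsto\pi_1^2q/(1+\pi_1^2q)$ is continuous. At $\lambda=0$ its value $\pi_1^2\kappa/(1+\pi_1^2\kappa)$ is the weight given by Theorem~\ref{thm:nuisance-obstruction}, so Lemma~\ref{lem:ridge-target-mass} extends continuously to the Moore--Penrose endpoint.

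The only step needing care is the endpoint, where the column-space hypothesis is used. If $y$ had a component in $\ker(BB^\top)$, that component would make $q_\lambda$ blow up like $1/\lambda$, so the hypothesis cannot be dropped.
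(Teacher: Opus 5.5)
Your proof is correct. For the derivative and the endpoint it follows the paper's route: differentiate the resolvent, then diagonalize and use that $y\in\operatorname{col}(B)$ has no component in $\ker(BB^\top)$. You also spell out the second equality in \eqref{eq:ridge-q-endpoint} through $(B^\dagger)^\top B^\dagger=(BB^\top)^\dagger$, which the paper only cites as the minimum-norm formula.

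The genuinely different step is the certificate bound \eqref{eq:ridge-certificate-cost}. The paper works with the dual vector $v=C_\lambda^{-1}y$. It writes $q_\lambda=u^\top B^\top v$ and $q_\lambda=\lVert B^\top v\rVert_2^2+\lambda\lVert v\rVert_2^2\ge\lVert B^\top v\rVert_2^2$. Cauchy--Schwarz then gives $q_\lambda\le\lVert u\rVert_2\sqrt{q_\lambda}$, with $q_\lambda=0$ handled separately. You instead identify $q_\lambda$ as the optimal value of the penalized interpolation problem $\min_u\{\lVert u\rVert_2^2+\lambda^{-1}\lVert y-Bu\rVert_2^2\}$, and evaluate it at an exact interpolant.

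The weight of your argument rests on $u^*$ being the global minimizer, not merely a point where the objective equals $q_\lambda$. That alone would only give $\min\le q_\lambda$, the useless direction. This fact is the standard ridge identity, as in \eqref{eq:appendix-ridge-protocol}, or follows from strict convexity, and it is worth stating explicitly.

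In exchange, your route has three advantages:
\begin{itemize}
\item it interprets $q_\lambda$ as a regularized nuisance-interpolation cost;
\item it avoids the case split;
\item it yields monotonicity in $\lambda$ without differentiation, because the objective is pointwise nonincreasing in $\lambda$.
\end{itemize}
The paper's argument, by contrast, is purely algebraic and needs no optimization fact.

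One cosmetic point: rename your eigenvectors $u_k$, since $u$ already denotes the certificate.
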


\begin{proof}
Set $v=C_\lambda^{-1}y$.  Since $y=Bu$,
\[
q_\lambda=y^\top v=u^\top B^\top v.
\]
The same quantity also satisfies
\begin{equation}
q_\lambda
=v^\top C_\lambda v
=\lVert B^\top v\rVert_2^2+\lambda\lVert v\rVert_2^2
\ge\lVert B^\top v\rVert_2^2.
\label{eq:ridge-q-energy}
\end{equation}
If $q_\lambda=0$, Equation~\eqref{eq:ridge-certificate-cost} is immediate.
Otherwise, Cauchy--Schwarz and Equation~\eqref{eq:ridge-q-energy} give
\[
q_\lambda
\le\lVert u\rVert_2\lVert B^\top v\rVert_2
\le\lVert u\rVert_2\sqrt{q_\lambda},
\]
and division by $\sqrt{q_\lambda}$ proves the certificate bound.
Differentiating the matrix inverse proves
Equation~\eqref{eq:ridge-q-monotone}.

For the endpoint, take a singular-value decomposition of $B$.  Because
$y\in\operatorname{col}(B)$, it has no component in $\ker(B^\top)$.
Each nonzero singular component of $q_\lambda$ therefore converges from
$(\sigma_i^2+\lambda)^{-1}$ to $\sigma_i^{-2}$.  This proves the first
equality in Equation~\eqref{eq:ridge-q-endpoint}; the second is the
minimum-norm formula for the consistent system $Bu=y$.
\end{proof}

In particular, if $Bu=y$ and
$\pi_1^2\lVert u\rVert_2^2\le\theta$, then for every $\lambda\ge0$,
\begin{equation}
0\le w_{\lambda,1}\le\frac{\theta}{1+\theta}.
\label{eq:ridge-target-cap}
\end{equation}

\subsection{The triangular sequence for every ridge schedule}

Let $m=\min\{T,d\}$ and use the triangular sequence in
Equation~\eqref{eq:triangular-gadget}.  Fix $\alpha\ge1$, put
$\beta=2\alpha-2$, and, before active round $t\ge2$, write
\[
s=t-1,
\qquad
S_t=1+\sum_{j=t-1}^{m-1}j^\beta.
\]

\begin{lemma}[Exact triangular ridge shrinkage]
\label{lem:triangular-ridge-shrinkage}
The powered transformed historical design has the rank-one form
\begin{equation}
A_t=\one_s r_t^\top,
\qquad
\lVert r_t\rVert_2^2=S_t,
\label{eq:ridge-triangular-rank-one}
\end{equation}
where
\[
r_t=(1,0,\ldots,0,(t-1)^{\alpha-1},\ldots,(m-1)^{\alpha-1})^\top.
\]
For every $\lambda>0$,
\begin{equation}
z_t^\lambda=\frac{s}{\lambda+sS_t}r_t.
\label{eq:ridge-triangular-z}
\end{equation}
If $w_t^0$ and $\widehat y_t^0$ denote the Moore--Penrose coefficient and
current raw prediction, then for every $\lambda\ge0$,
\begin{equation}
w_t^\lambda=\rho_t(\lambda)w_t^0,
\qquad
\widehat y_t^\lambda=\rho_t(\lambda)\widehat y_t^0,
\qquad
\rho_t(\lambda)=\frac{sS_t}{\lambda+sS_t}.
\label{eq:ridge-triangular-scaling}
\end{equation}
The target resolvent and target weight are
\begin{equation}
q_t(\lambda)=\frac{s}{\lambda+s(S_t-1)},
\qquad
w_{t,1}^\lambda=\frac{s}{\lambda+sS_t}.
\label{eq:ridge-triangular-target}
\end{equation}
\end{lemma}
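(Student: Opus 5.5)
The plan is to reuse the prime computation from Lemma~\ref{lem:triangular-exact-prediction}, which does not depend on $\lambda$. Before active round $t\ge2$ the history has $s=t-1$ rows. The powered primes are: one on the target, zero on expired coordinates $k<t$, and $(k-1)^\alpha$ on future coordinates $k\ge t$. Every historical entry of a future coordinate $k$ equals $1/(k-1)$, and the target column is $\one_s$. Hence the transformed column $k$ equals $(k-1)^{\alpha-1}\one_s$, and expired columns vanish. This gives $A_t=\one_s r_t^\top$ with the stated $r_t$. Summing squares gives $\lVert r_t\rVert_2^2=1+\sum_{k=t}^m(k-1)^{2\alpha-2}=S_t$, which is Equation~\eqref{eq:ridge-triangular-rank-one}.

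For Equation~\eqref{eq:ridge-triangular-z}, apply the ridge formula in Equation~\eqref{eq:appendix-ridge-protocol}. Here $A_tA_t^\top=S_t\one_s\one_s^\top$ and $y=\one_s$, and $\one_s$ is an eigenvector of $S_t\one_s\one_s^\top+\lambda I$ with eigenvalue $sS_t+\lambda$. Therefore $(A_tA_t^\top+\lambda I)^{-1}\one_s=\one_s/(sS_t+\lambda)$, and multiplying by $A_t^\top=r_t\one_s^\top$ gives $z_t^\lambda=s\,r_t/(\lambda+sS_t)$. At $\lambda=0$ the same vector arises as $A_t^\dagger\one_s=r_t/(sS_t)\cdot s$, since the pseudoinverse of the rank-one matrix $\one_s r_t^\top$ is $r_t\one_s^\top/(s\lVert r_t\rVert_2^2)$; this also matches Equation~\eqref{eq:triangular-weights}. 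Comparing the two expressions shows $z_t^\lambda=\rho_t(\lambda)z_t^0$. Multiplying by the $\lambda$-independent matrix $D_t$ and then by $x_t$ gives the scaling of both $w_t^\lambda$ and $\widehat y_t^\lambda$, which is Equation~\eqref{eq:ridge-triangular-scaling}.

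For Equation~\eqref{eq:ridge-triangular-target}, the target weight is the first coordinate of $w_t^\lambda=D_tz_t^\lambda$. Since $\pi_1=1$ and $(r_t)_1=1$, this is $s/(\lambda+sS_t)$. For the resolvent, the nuisance block is $B=\one_s\bar r^\top$ with $\lVert\bar r\rVert_2^2=S_t-1$. The same eigenvector computation gives $q_t(\lambda)=\one_s^\top(\,(S_t-1)\one_s\one_s^\top+\lambda I)^{-1}\one_s=s/(\lambda+s(S_t-1))$. As a cross-check, substituting this into Lemma~\ref{lem:ridge-target-mass} should reproduce $w_{t,1}^\lambda$: indeed $q/(1+q)=s/(\lambda+sS_t)$.

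The only delicate point is bookkeeping: expired coordinates have zero prime and must not contribute columns. One should also confirm that $S_t-1>0$, or else treat $\lambda>0$ so that the inverse exists; the formula holds even if $S_t-1=0$, because $\lambda I$ is invertible. Otherwise the argument is a direct rank-one computation.
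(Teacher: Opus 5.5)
Your proposal is correct and follows the paper's proof essentially step for step: the same rank-one factorization $A_t=\one_s r_t^\top$, the same eigenvector computation for the resolvent applied to $\one_s$, and the same comparison with the Moore--Penrose coefficient $r_t/S_t$ to obtain $\rho_t(\lambda)$. The only cosmetic difference is in the target weight. You read $w_{t,1}^\lambda$ directly off the first coordinate of $z_t^\lambda$ and use Lemma~\ref{lem:ridge-target-mass} as a cross-check, whereas the paper derives it by substituting $q_t(\lambda)$ into that lemma.
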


\begin{proof}
Lemma~\ref{lem:triangular-exact-prediction} gives the historical primes:
the target base prime is one, expired coordinates have prime zero, and future
coordinate $k$, indexed by $j=k-1\ge t-1$, has base prime $j$.
Its powered transformed historical column is therefore
$j^\alpha(1/j)\one_s=j^{\alpha-1}\one_s$.  This proves
Equation~\eqref{eq:ridge-triangular-rank-one}, including its squared norm.

Since
\[
A_tA_t^\top=S_t\one_s\one_s^\top,
\]
the vector $\one_s$ is an eigenvector with eigenvalue $sS_t$.  Hence
\[
(A_tA_t^\top+\lambda I)^{-1}\one_s
=\frac{\one_s}{\lambda+sS_t}.
\]
Multiplication by $A_t^\top=r_t\one_s^\top$ proves
Equation~\eqref{eq:ridge-triangular-z}.  At $\lambda=0$, the minimum-norm
coefficient is $z_t^0=r_t/S_t$, so their ratio is
$\rho_t(\lambda)$.  Multiplication by the same prime diagonal and evaluation
on the same current row preserve this scalar, proving
Equation~\eqref{eq:ridge-triangular-scaling}.

For Equation~\eqref{eq:ridge-triangular-target}, remove the target entry of
$r_t$.  The nuisance transformed design $B_t$ then satisfies
$B_tB_t^\top=(S_t-1)\one_s\one_s^\top$.  Applying its resolvent to
$y=\one_s$ gives $q_t(\lambda)=s/[\lambda+s(S_t-1)]$.  Substitution into
Lemma~\ref{lem:ridge-target-mass}, with $\pi_1=1$, yields the target weight.
\end{proof}

The crossover in Equation~\eqref{eq:ridge-phase-scale} is now immediate:
\[
\rho_t(\lambda)
=\frac{1}{1+\lambda/\lambda_c(t)},
\qquad
\lambda_c(t)=sS_t.
\]

\begin{lemma}[Shrinking a bad prediction]
\label{lem:ridge-shrinkage-loss}
For every $q\le1$ and $\rho\in[0,1]$, with label one,
\[
(\clip(\rho q)-1)^2
\ge\min\{(1-q)^2,1\}.
\]
\end{lemma}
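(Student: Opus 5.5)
We need to prove that for $q\le1$ and $\rho\in[0,1]$, $(\clip(\rho q)-1)^2\ge\min\{(1-q)^2,1\}$. The plan is a short case split on the sign of $q$, using that $\rho q$ lies between $0$ and $q$ and that the map $x\mapsto(\clip(x)-1)^2$ is nonincreasing on $(-\infty,1]$.

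\emph{Case $0\le q\le1$.} Here $\rho q\in[0,q]\subseteq[0,1]$, so clipping is inactive. The loss is $(1-\rho q)^2$. Since $\rho q\le q\le1$, we have $1-\rho q\ge1-q\ge0$, and squaring gives $(1-\rho q)^2\ge(1-q)^2\ge\min\{(1-q)^2,1\}$.

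\emph{Case $q<0$.} Now $\rho q\le0$, so $\clip(\rho q)\in[-1,0]$ and $1-\clip(\rho q)\ge1$. The loss is therefore at least $1\ge\min\{(1-q)^2,1\}$. The endpoint $\rho=0$, which gives prediction $0$ and loss exactly $1$, is included here.

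Both cases give the claim. The only point needing care is that $q\le1$ prevents any clipping from above. For negative $q$, the bound does not compare against $(1-q)^2$; it uses the cap $1$, which is exactly why the right-hand side is a minimum. There is no real obstacle.
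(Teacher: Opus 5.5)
Your proof is correct and matches the paper's argument exactly. Like the paper, you split on the sign of $q$: when $0\le q\le1$ there is no clipping and $1-\rho q\ge1-q$, and when $q<0$ the clipped prediction $\clip(\rho q)\le0$ lies at distance at least one from the label.
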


\begin{proof}
If $0\le q\le1$, then $0\le\rho q\le q\le1$, so no clipping occurs and
$1-\rho q\ge1-q$.  If $q<0$, then $\clip(\rho q)\le0$, whose distance from
the label one is at least one.
\end{proof}

\begin{proof}[Proof of Theorem~\ref{thm:ridge-univariate}]
On active round $t\ge2$, Equation~\eqref{eq:triangular-powered-error} gives
\[
\widehat y_t^0=1-\delta_t,
\qquad
\delta_t=\frac{t(t-1)^\beta}{S_t}\ge0.
\]
Lemma~\ref{lem:triangular-ridge-shrinkage} and
Lemma~\ref{lem:ridge-shrinkage-loss} therefore imply, pointwise in every
$\lambda_t\ge0$,
\[
\ell_t^{\lambda_t}
\ge\min\{\delta_t^2,1\}.
\]

If $m<8$, the empty-history coefficient is zero for every ridge value, so the
first loss is one and
$1\ge m/8\ge c_\alpha m$.  Suppose $m\ge8$ and take
$t=\lceil3m/4\rceil,\ldots,m$.  The endpoint calculation in
Lemma~\ref{lem:triangular-count} gives
\[
\delta_t
\ge L_\alpha,
\qquad
L_\alpha=\frac32\left(\frac58\right)^{2\alpha-2}.
\]
There are at least $m/4$ such rounds, so
\begin{align*}
\Reg_m^{\rm clip}
&\ge\frac m4\min\{1,L_\alpha^2\}\\
&=\frac m4\min\left\{1,
\frac94\left(\frac58\right)^{4\alpha-4}\right\}
\ge c_\alpha m.
\end{align*}
If the second argument of the minimum is at most one, the coefficient in the
last line is exactly
$(9/16)(5/8)^{4\alpha-4}$; otherwise it is $1/4\ge1/8$.

At $\alpha=1$, Lemma~\ref{lem:triangular-count} shows that precisely the
final $\lfloor m/2\rfloor$ active rounds have $\delta_t\ge1$.  Their
Moore--Penrose raw predictions are nonpositive.  Multiplication by
$\rho_t(\lambda_t)\in[0,1]$ keeps them nonpositive, so each has clipped loss
at least one.  Adding the first-round loss proves the sharper bound.

If $d>m$, append identically zero feature coordinates.  The ridge objective
assigns zero to each new transformed coordinate because it does not affect
the residual and a nonzero value would increase the penalty.  Thus active
predictions are unchanged.  If $T>m$, append zero-input, zero-label rounds;
they have zero loss.  This proves the coordinatewise-bounded statement for
arbitrary $(T,d)$.

For the normalized statement, half-scale every active input and label.
Univariate primes remain unchanged by the calculation in
Lemma~\ref{lem:triangular-normalization}.  If $c=1/2$, direct substitution in
Equation~\eqref{eq:appendix-ridge-protocol} gives
\begin{equation}
z_\lambda^{\prime}
=(cA)^\top(c^2AA^\top+\lambda I)^{-1}(cy)
=z_{\lambda/c^2}.
\label{eq:ridge-normalized-parameter}
\end{equation}
By Equation~\eqref{eq:ridge-normalized-parameter}, the scaled raw prediction
is $c$ times the unscaled prediction at ridge value $\lambda/c^2$.  The latter
equals $\rho q_0$ for some $\rho\in[0,1]$, and the Moore--Penrose prediction
$q_0$ is at most one.  The clipping calculation in
Lemma~\ref{lem:triangular-normalization} therefore applies verbatim and
retains at least a factor $c^2=1/4$ of the unscaled ridge loss.  Since the
theorem is uniform over all nonnegative ridge values, this yields
$(c_\alpha/4)\min\{T,d\}$.

Every inequality above is pointwise in $\lambda_t$.  Hence the schedule may
depend on the history, the current input, and an arbitrary fixed realization
of internal randomness without changing the sequence or the bound.
\end{proof}

\subsection{Hadamard ridge certificates for all three rules}

The triangular theorem gives the exact frontier for univariate priming.  The
general resolvent also shows that the common target-mass obstruction survives
ridge for each of the three rules.  Use the completed-pair history from
Equation~\eqref{eq:paired-history}:
\[
X_s=[a,H_s],
\qquad
\bar X_s=\begin{bmatrix}X_s\\-X_s\end{bmatrix},
\qquad
\bar y_s=\begin{bmatrix}a\\-a\end{bmatrix},
\qquad
b=H_s^\top a.
\]

\begin{lemma}[Three ridge certificate costs]
\label{lem:three-rule-ridge-certificate}
Fix $\alpha\ge1$ and $s\ge1$.  For each powered univariate, Pearson, and
multivariate prime on the completed-pair history, there is a nuisance-only
transformed coefficient $u$ satisfying
\[
Bu=\bar y_s,
\qquad
\pi_1^2\lVert u\rVert_2^2\le\frac{s^{2\alpha}}N.
\]
Consequently, for every $\lambda\ge0$,
\begin{equation}
0\le w_{\lambda,1}
\le\frac{s^{2\alpha}/N}{1+s^{2\alpha}/N}.
\label{eq:three-rule-ridge-target-cap}
\end{equation}
The Pearson statement is uniform over every finite totalization.
\end{lemma}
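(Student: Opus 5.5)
The plan is to reuse the Moore--Penrose nuisance certificates from Lemma~\ref{lem:powered-individual-certificates}, lift them to the stacked paired system, and then apply the ridge cap~\eqref{eq:ridge-target-cap} with $\theta=s^{2\alpha}/N$.

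The first step is to identify the primes. Lemma~\ref{lem:paired-base-primes} gives the base primes on the completed-pair history $(\bar X_s,\bar y_s)$:
\[
p^{\UNI}=p^{\PEAR}=(1,b/s),\qquad p^{\LLS}=\tfrac{s}{N+s}\,(1,b/s).
\]
Every Pearson correlation involved is defined, so nothing depends on the finite totalization. The powered multipliers are $\pi=f_\alpha(p)$ in each case. The first historical feature column of $\bar X_s$ equals $\bar y_s$, so the decomposition $A=[\pi_1\bar y_s,B]$ required by Lemma~\ref{lem:ridge-target-mass} is available, with $B=\bar H_s\diag(\pi_{2:d})$ and $\bar H_s=[H_s;-H_s]$.

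The second step builds the certificate, starting with the univariate and Pearson rules, which share $\pi$. Take $v=b/N$, so that $H_sv=a$ by Lemma~\ref{lem:hadamard-rounding}. Define $u_j=v_j/f_\alpha(b_j/s)$ when $b_j\ne0$ and $u_j=0$ otherwise. Then $\diag(\pi_{2:d})u=v$, which gives $H_s\diag(\pi_{2:d})u=a$, and negating the stacked rows gives $Bu=\bar y_s$. Since each nonzero $b_j$ is an integer, the bound in the proof of Lemma~\ref{lem:powered-individual-certificates} yields $|u_j|\le s^\alpha/N$. Summing over the $N$ coordinates gives $\|u\|_2^2\le s^{2\alpha}/N$, and $\pi_1=1$.

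For the multivariate rule, use the same $v$ with $u_j=v_j/f_\alpha(b_j/(N+s))$. The same integrality argument gives $\|u\|_2^2\le (N+s)^{2\alpha}/N$. Since $\pi_1^2=(s/(N+s))^{2\alpha}$, the product satisfies $\pi_1^2\|u\|_2^2\le s^{2\alpha}/N$.

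The remaining care point is the zero-prime convention: coordinates with $b_j=0$ must be handled without dividing by a zero prime. This works because $v_j=0$ exactly when $b_j=0$, so setting $u_j=0$ there loses no interpolation mass.

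Finally, for $\lambda>0$, Lemmas~\ref{lem:ridge-target-mass} and~\ref{lem:ridge-certificate} give
\[
w_{\lambda,1}=\frac{\pi_1^2q_\lambda}{1+\pi_1^2q_\lambda},\qquad q_\lambda\le\|u\|_2^2 .
\]
The map $x\mapsto x/(1+x)$ is increasing, so $0\le w_{\lambda,1}\le\theta/(1+\theta)$. At $\lambda=0$, the target-mass identity holds because $\bar y_s\in\operatorname{col}(B)$, so Theorem~\ref{thm:nuisance-obstruction} applies. It gives $w_1=\pi_1^2\kappa/(1+\pi_1^2\kappa)$ with $\kappa\le\|u\|_2^2$, which yields the same cap. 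This is consistent with the continuity statement~\eqref{eq:ridge-q-endpoint} and completes~\eqref{eq:three-rule-ridge-target-cap}.
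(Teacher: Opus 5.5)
Your proposal is correct and follows the paper's proof essentially step for step: it uses the same primes from Lemma~\ref{lem:paired-base-primes}, the same coordinatewise certificate $u_j=v_j/f_\alpha(\cdot)$ with the integrality bound and zero-prime convention, and the same cancellation of $(N+s)^{2\alpha}$ for the multivariate rule. The only difference is cosmetic: you treat $\lambda=0$ explicitly via Theorem~\ref{thm:nuisance-obstruction}, whereas the paper invokes the cap~\eqref{eq:ridge-target-cap}, which already covers $\lambda\ge0$ through the continuity endpoint~\eqref{eq:ridge-q-endpoint}.
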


\begin{proof}
Hadamard orthogonality gives the original-coordinate nuisance interpolant
\[
v=\frac1NH_s^\top a=\frac bN,
\qquad
H_sv=a.
\]
By Lemma~\ref{lem:paired-base-primes}, the completed-pair univariate and
Pearson base primes are $(1,b/s)$.  For $b_j\ne0$, set
\[
u_j
=\frac{b_j/N}{\operatorname{sign}(b_j)|b_j/s|^\alpha}
=\frac{s^\alpha}{N}|b_j|^{1-\alpha},
\]
and set $u_j=0$ when $b_j=0$.  Coordinatewise multiplication by the powered
nuisance prime produces $b_j/N$, so the representative transformed system
maps $u$ to $a$ and the full paired system maps it to $\bar y_s$.  Every
nonzero $b_j$ is an integer of magnitude at least one, whence
\[
\lVert u\rVert_2^2
\le N(s^\alpha/N)^2
=s^{2\alpha}/N.
\]
The powered target prime is one.  All active Pearson correlations are defined
on a nonempty paired history, so no totalized value enters.

For multivariate priming, Lemma~\ref{lem:paired-base-primes} gives
\[
p^{\LLS}=\frac{(s,b)}{N+s},
\qquad
\pi_1=\left(\frac{s}{N+s}\right)^\alpha.
\]
For $b_j\ne0$, use
\[
u_j=\frac{(N+s)^\alpha}{N}|b_j|^{1-\alpha},
\]
and set the zero-correlation coordinates to zero.  The powered nuisance
diagonal again maps $u$ to $b/N$, so $Bu=\bar y_s$, and
\[
\lVert u\rVert_2^2\le\frac{(N+s)^{2\alpha}}N.
\]
Multiplying by the squared target prime cancels $(N+s)^{2\alpha}$ and gives
$s^{2\alpha}/N$.  Finally,
Equation~\eqref{eq:three-rule-ridge-target-cap} follows from
Equation~\eqref{eq:ridge-target-cap}.
\end{proof}

\begin{proposition}[Rule-specific paired-Hadamard ridge lower bound]
\label{prop:ridge-hadamard}
Fix $\alpha\ge1$, one of the three powered prime rules, a deterministic ridge
schedule measurable with respect to the completed data history, and $M\ge1$.
Choose a power of two $N$ with
$8M^{2\alpha}\le N<16M^{2\alpha}$ and set $T=2M$, $d=N+1$.  There is a
deterministic realizable paired-Hadamard sequence, fixed before replay, on
which
\begin{equation}
\Reg_T^{\rm clip}\ge\frac{64M}{81}=\frac{32T}{81}.
\label{eq:ridge-hadamard-lower-bound}
\end{equation}
The sequence may depend on the rule and ridge policy.  For Pearson priming,
the same charged losses hold for every finite totalization.
\end{proposition}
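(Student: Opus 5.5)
The plan is to follow the offline sign compiler of Proposition~\ref{prop:shared-powered-compiler}, replacing the exact Moore--Penrose target-mass bound with the ridge cap of Lemma~\ref{lem:three-rule-ridge-certificate}. The witness alternates charged queries $(\sigma_{s+1},h_{s+1})$ with exact opposites $(-\sigma_{s+1},-h_{s+1})$, for $s=0,\ldots,M-1$, and all labels equal the first coordinate. Thus $e_1$ has zero loss and regret equals learner loss. Before the query after $s$ completed pairs, the rule and the deterministic ridge policy fix $\lambda$ from the completed history alone, since the policy is past-only. Hence the coefficient $w_{\lambda}$ is fixed before the sign is chosen.

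\textbf{Target-weight cap.} For $s\ge1$, Lemma~\ref{lem:three-rule-ridge-certificate} gives
\[
0\le w_{\lambda,1}\le\frac{\theta}{1+\theta},
\qquad
\theta=\frac{s^{2\alpha}}{N}\le\frac{M^{2\alpha}}{N}\le\frac18 .
\]
This holds for every $\lambda\ge0$, for each of the three powered rules, and uniformly over finite Pearson totalizations. It follows that $w_{\lambda,1}\le 1/9$, so $1-w_{\lambda,1}\ge 8/9$.

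For $s=0$ the coefficient is zero for every ridge value and every totalization, so $w_{\lambda,1}=0$ and the bound still holds.

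\textbf{Adverse sign.} Write $c_s=h_{s+1}^\top w_{\lambda,2:d}$. The two candidate raw predictions are $c_s\pm w_{\lambda,1}$, with labels $\pm1$. Lemma~\ref{lem:two-sign} then gives an average clipped loss of at least $(8/9)^2=64/81$, so one sign incurs at least $64/81$. We append that sign, breaking ties toward $+1$, and then append its exact opposite.

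The recursion is finite and deterministic, because the policy is deterministic and measurable in the completed history. The sequence it produces depends on the rule and the policy, which the statement permits, and it is fixed before replay.

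\textbf{Summing the losses.} Discarding the nonnegative recovery losses, the $M$ charged queries give total loss at least $64M/81=32T/81$. There are enough Hadamard rows because $N\ge 8M^{2\alpha}\ge M$.

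\textbf{Pearson uniformity.} For every $s\ge1$, Lemma~\ref{lem:paired-base-primes} shows that all active correlations are defined, so the primes, the ridge input history and the chosen sign do not depend on the fill. At $s=0$ the prediction is zero regardless of the totalization. Only the first recovery round can depend on the fill, and its loss is discarded.

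The main point needing care is that the ridge value must not depend on the current sign. The deterministic compiler evaluates a single coefficient vector against both signs. If $\lambda$ depended on $x_t$, the two branches would see different coefficients and Lemma~\ref{lem:two-sign} would no longer apply. The past-only hypothesis of the proposition is exactly what rules this out.

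A subsidiary check is that ridge is applied to the same $\lambda$ for the whole transformed paired design. Opposite duplication no longer leaves the ridge solution unchanged: stacking $[A;-A]$ doubles $AA^\top$, which amounts to replacing $\lambda$ by $\lambda/2$. This causes no problem, because the cap of Lemma~\ref{lem:three-rule-ridge-certificate} is stated directly for the full paired system $Bu=\bar y_s$ and holds uniformly in $\lambda$. So no reduction to a single representative of each pair is needed.
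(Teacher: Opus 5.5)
Your proposal is correct and follows essentially the same route as the paper. Both arguments combine a greedy sign compiler over paired-Hadamard histories, the cap $w_{\lambda,1}\le 1/9$ from Lemma~\ref{lem:three-rule-ridge-certificate} fed into Lemma~\ref{lem:two-sign}, past-only measurability of the ridge value (which fixes the coefficient before the sign), discarded recovery losses, and fill-independence of every charged Pearson state. One small slip in your side remark: opposite stacking doubles $A^\top A$ and $A^\top y$, not $AA^\top$. Your conclusions still hold: stacking is equivalent to replacing $\lambda$ by $\lambda/2$, and this is harmless because the certificate is stated directly for the full paired system.
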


\begin{proof}
After $s$ completed pairs, the next nuisance row is the fixed Hadamard row
$h_{s+1}$.  For $s=0$, the historical coefficient is zero and either query
sign loses one.  For $1\le s<M$, Lemma
\ref{lem:three-rule-ridge-certificate} and the choice of $N$ give
\[
0\le w_{\lambda,1}
\le\frac{1/8}{1+1/8}
=\frac19.
\]
Because the ridge schedule is data-history measurable, this coefficient is
fixed before the current input and its target sign are revealed.  Let
$c=h_{s+1}^\top w_{\lambda,2:d}$.  The two candidate raw predictions are
$c+w_{\lambda,1}$ and $c-w_{\lambda,1}$ with labels $+1$ and $-1$.
Lemma~\ref{lem:two-sign} yields
\begin{equation}
\frac{L_s(+1)+L_s(-1)}2
\ge(1-w_{\lambda,1})^2
\ge\frac{64}{81}.
\label{eq:ridge-hadamard-two-sign}
\end{equation}
Choose a maximizing sign, breaking ties toward $+1$, and append that query
followed by its exact negative.  Finite recursion compiles the full sign
string before replay.  Summing the $M$ query losses and discarding all
nonnegative recovery losses proves
Equation~\eqref{eq:ridge-hadamard-lower-bound}.  Every label is the first
coordinate, so $e_1$ has zero loss.

The empty-history Pearson query uses the zero coefficient independently of
the fill.  Only the first uncharged recovery can see a one-sample undefined
correlation.  Every later charged state consists of completed opposite pairs,
where all active variances are positive.  Because the ridge schedule depends
only on the data history, the compiled charged losses are uniform over finite
Pearson totalizations.
\end{proof}

\begin{corollary}[Common random-sign ridge obstruction]
\label{cor:ridge-random-sign}
Under the same $(\alpha,M,N,H)$, take independent uniform Rademacher query
signs, independently of the learner's random tape, and append the exact
opposite after each query.  Every one of the three powered rules and every
possibly randomized past-only ridge schedule satisfies
\[
\mathbb E\Reg_T^{\rm clip}\ge\frac{64M}{81}.
\]
The same bound holds for a pre-input switcher, a pre-input convex mixture of
the three coefficient vectors, or a pre-input convex mixture of their
individually clipped predictions, all at the common power $\alpha$.
\end{corollary}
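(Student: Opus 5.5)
The statement to prove is Corollary~\ref{cor:ridge-random-sign}. I would argue conditionally, query by query, following the pattern of Corollary~\ref{cor:shared-random-signs} but with Lemma~\ref{lem:three-rule-ridge-certificate} in place of the exact refit equivalence.

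Fix the learner's random tape $\xi$ and condition on the completed paired history before query $s+1$, for $0\le s<M$. This history is determined by the earlier Rademacher signs.

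\textbf{Step 1: a target-weight cap for every available coefficient.}
Because the schedule is past-only, each ridge value $\lambda_t$ is fixed by the history and $\xi$ before the current input arrives. The coefficients $w^R$ of the three rules are then fixed as well, for $R\in\{\UNI,\LLS,\PEAR\}$.
\begin{itemize}
  \item For $s=0$ every coefficient is zero, so the cap holds trivially.
  \item For $s\ge1$, Lemma~\ref{lem:three-rule-ridge-certificate} with $N\ge8M^{2\alpha}$ gives $0\le w^R_1\le1/9$ for each rule, uniformly over finite Pearson totalizations.
\end{itemize}

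\textbf{Step 2: the selector cannot react to the current sign.}
A pre-input switcher or mixture chooses its weights $\lambda\in\Delta_{\mathcal R}$ before seeing $x_t$. Its choice is therefore independent of the current sign $\sigma$, given the history and $\xi$.
\begin{itemize}
  \item \emph{Coefficient mixture.} The mixed coefficient $\bar w=\sum_R\lambda_Rw^R$ has $\bar w_1\in[0,1/9]$ by convexity. The two candidate raw predictions are $c\pm\bar w_1$ with $c=h_{s+1}^\top\bar w_{2:d}$. Lemma~\ref{lem:two-sign} then gives a two-sign average loss of at least $(8/9)^2=64/81$.
  \item \emph{Prediction mixture.} The prediction is $\sum_R\lambda_R\,\clip(c_R+\sigma w^R_1)$. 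Square loss is convex, so Jensen's inequality bounds its loss by at least $\sum_R\lambda_R$ times each rule's individual clipped loss. Averaging over $\sigma$ and applying Lemma~\ref{lem:two-sign} to each rule gives $\sum_R\lambda_R(1-w^R_1)^2\ge64/81$.
  \item \emph{Switchers and single rules.} These are the vertex cases of either mixture.
\end{itemize}

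\textbf{Step 3: summing.}
Since $\sigma$ is uniform and independent of everything above, the conditional expected query loss is exactly the two-sign average, hence at least $64/81$. The recovery rounds are charged nothing. Summing over the $M$ queries gives $\mathbb E[\Reg_T^{\rm clip}\mid\xi]\ge64M/81$, and averaging over $\xi$ gives the stated bound.

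\textbf{Main obstacle.}
The delicate point is the measurability and independence bookkeeping. The ridge value and the mixture weights may depend on $\xi$ and on earlier signs, but not on the current sign. This must hold even though the recovery rounds reveal labels, and the first recovery may depend on the Pearson fill. I would handle this exactly as in Proposition~\ref{prop:ridge-hadamard}: charge only the queries, and note that every charged state, including the $s=0$ zero state, is totalization-independent.
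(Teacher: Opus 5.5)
Your treatment of single rules, switchers, and coefficient mixtures matches the paper's argument. The prediction-mixture step, however, fails. Convexity of the square loss gives Jensen's inequality in the \emph{opposite} direction:
\[
\Bigl(\sum_R\lambda_R r_R-y\Bigr)^2\le\sum_R\lambda_R(r_R-y)^2 .
\]
Averaging the individual two-sign bounds therefore yields an upper bound on the mixture's loss, not a lower bound. The inequality you claim is genuinely false here. Take two components with $w_1^R=0$ and $c_R=\pm1$. Each has two-sign average clipped loss $2$, but their equal-weight mixture of clipped predictions is identically $0$ and has average loss $1$. Hence the bound $\sum_R\lambda_R(1-w_1^R)^2\ge64/81$ tells you nothing about the mixed prediction.

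The missing idea is to control the two-sign spread of the mixed prediction directly. Put $r_R^\pm=\clip(c_R\pm w_1^R)$. Clipping is monotone and $1$-Lipschitz, so $0\le r_R^+-r_R^-\le2w_1^R\le2/9$. The convex mixtures $r^\pm=\sum_R\lambda_Rr_R^\pm$ therefore have the form $r^\pm=m_0\pm\delta$ with $0\le\delta\le1/9$, and both lie in $[-1,1]$. The conditional two-sign loss is then
\[
\tfrac12\bigl[(r^+-1)^2+(r^-+1)^2\bigr]=m_0^2+(1-\delta)^2\ge\frac{64}{81}.
\]
Equivalently, Lemma~\ref{lem:two-sign} applies with $b=m_0$ and $w=\delta$, since no clipping is active. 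This is exactly how the paper handles the second mixture semantics. With this replacement, your Step~3 goes through unchanged.
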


\begin{proof}
Condition on the completed-pair history and the learner's random tape.  The
current sign remains uniform, whereas every component coefficient, ridge
value, and mixture weight is fixed.  Each component target weight lies in
$[0,1/9]$ by Lemma~\ref{lem:three-rule-ridge-certificate}; a convex
coefficient mixture has the same property, so
Equation~\eqref{eq:ridge-hadamard-two-sign} applies.

For a mixture of individually clipped predictions, write
\[
r_R^+=\clip(c_R+w_R),
\qquad
r_R^-=\clip(c_R-w_R).
\]
Monotonicity and one-Lipschitzness of clipping give
$0\le r_R^+-r_R^-\le2w_R$.  Let $r^+$ and $r^-$ be their convex mixtures,
and put
\[
m_0=\frac{r^++r^-}{2},
\qquad
\delta=\frac{r^+-r^-}{2}.
\]
Then $0\le\delta\le1/9$ and the conditional two-sign loss is
\[
\frac{(r^+-1)^2+(r^-+1)^2}{2}
=m_0^2+(1-\delta)^2
\ge\frac{64}{81}.
\]
Thus both mixture semantics incur conditional expected query loss at least
$64/81$.  The tower property and nonnegativity of recovery losses complete
the proof.
\end{proof}

The deterministic result requires the ridge policy to be fixed before the
current target sign is visible; unlike Theorem~\ref{thm:ridge-univariate}, it
does not cover a current-input-dependent ridge choice.  The regularized
coefficient vectors of the three rules also need not coincide, so the
deterministic witness in Proposition~\ref{prop:ridge-hadamard} is
rule--policy specific.  The common statement in
Corollary~\ref{cor:ridge-random-sign} is distributional.

%% file: appendix/G_activation_experiment.tex
\section{Frozen-activation experiment details}
\label{app:activation-experiment}

\paragraph{Representation and target selection.}
We use the input to the MLP down-projection (the post-gating activation) at the
last nonpadding token of Qwen2.5-7B-Instruct layers 6, 13, and 20.  Text inputs
are Alpaca instructions and optional inputs; model responses are unused.  All
four data pools are disjoint.  Coordinate $i$ is divided by its calibration
95th absolute percentile (floored at $10^{-4}$) and clipped to $[-1,1]$.
Eligibility requires standard deviation at least $0.08$, a 5--95 percentile
range of at least $0.25$, and at least 32 distinct observed values.  Before
computing outcomes, a seeded
permutation selects six eligible targets per layer; another orders each
target's remaining neurons into nested nuisance sets.

\paragraph{Online protocol and metrics.}
For target neuron $j$, each input is the selected activation vector and the
label is $y_t=x_{t,j}$, so the fixed $e_j$ has zero loss.  Strict-past
predictions are clipped to $[-1,1]$, and normalized regret is cumulative loss
divided by $\sum_t y_t^2$.  We test the three unit-power rules from
Section~\ref{sec:setup}, with finite unit fill for undefined Pearson primes.
For each target, one order is compiled offline against univariate priming,
saved, and replayed unchanged for all methods.  The compiler uses only the
fixed activation records and strict-past predictions; it does not alter the
features or labels.

\paragraph{Recovery result.}
The mechanism criterion requires normalized regret at least $0.25$, a
second-half target-weight statistic at most $0.5$, historical residual at
most $10^{-5}$, and prime-weighted nuisance cost
$\pi_j^2\lVert B^\dagger y\rVert_2^2\le1$.  Success was defined as meeting
all four conditions on at least 12 targets spanning at least two layers.  The
criterion passes for 16 targets across all three layers.  Every compiled target exceeds
the regret threshold for all three rules; their median regrets are $1.181$,
$0.355$, and $0.641$.  The comparator error is zero and the largest historical
residual is $2.367\times10^{-13}$.

\paragraph{Fresh dimension--horizon diagnostic.}
The fresh test uses 1,024 nonoverlapping records, two fixed orders, nuisance
widths $8,16,\ldots,2{,}048$, and horizons $16,32,48$.  Order outcomes are
averaged within targets.  The prespecified $T=48$ comparison of widths $16$ and
$2{,}048$ requires regret to rise by $0.10$, target weight to fall by $0.20$,
and nuisance residuals to be respectively above $10^{-3}$ and below $10^{-5}$.
All 18 targets pass.  Median regret rises by $0.619$ (target-bootstrap 95\%
interval $[0.541,0.672]$) and target weight falls by $0.809$
($[0.784,0.857]$).  At width $2{,}048$, row-permuted and Rademacher controls
have median regrets $0.857$ and $0.810$, versus $0.773$ for real activations on
the corresponding subset.

\paragraph{Cross-generation replication.}
After completing the Qwen2.5 experiment, we apply the same protocol and
prespecified outcome thresholds to the official Qwen3.8-27B checkpoint
\citep{qwen38modelcard}.  We use post-gating activations from layers 14, 30,
and 46, matching the relative depths of Qwen2.5 layers 6, 13, and 20.  The split
contains 384 calibration, 384 tuning, and 1,024 evaluation records.  The
protocol seed, calibration-only eligibility rules, six targets per layer,
nuisance dimensions $128,512,2{,}048$, horizon 48, five random-order seeds, and
offline compiled order are unchanged.

The complete mechanism criterion passes for 17 of 18 Qwen3.8 targets: 6 of 6
in layers 14 and 30, and 5 of 6 in layer 46.  The sole failure, neuron 7052 in
layer 46, has normalized regret $1.215$ but second-half target weight $0.507$,
just above the prespecified cutoff $0.5$.  Over the five registered random
orders, median normalized regrets are $0.805$, $0.796$, and $0.774$ for
univariate, multivariate, and Pearson priming.  The one-sparse comparator has
exactly zero error, the largest historical relative residual is
$2.401\times10^{-13}$, and an independent SVD recomputation of nine registered
trajectories agrees in per-round loss to at most $8.01\times10^{-8}$.

These random-order medians are distinct from the compiled-order values in
Table~\ref{tab:qwen-replication}.  For two fixed targets per layer,
target-only median regret on compiled orders
is $0.029$ for each rule.  Adding real nuisance coordinates gives medians
$1.238$, $0.303$, and $0.587$; independently row-permuting each nuisance
coordinate gives $0.799$, $0.306$, and $0.555$.  As a post-hoc sparse-recovery
check, exact best-one-sparse ERM selects the target throughout the second half
of every registered sequence and has median normalized regret $0.058$ on
compiled orders.  This replication diagnoses the same activation-space
mechanism without asserting that Qwen internally uses the priming update.

\begin{table}[!htbp]
  \centering
  \small
  \setlength{\tabcolsep}{5.5pt}
  \caption{\textbf{Cross-generation frozen-activation replication.}
  Entries are median normalized regret on each model's fixed compiled orders.
  The criterion column reports targets satisfying the complete prespecified
  mechanism signature.  Targets and compiled orders are model specific, so
  the table is a replication check rather than a model ranking.}
  \label{tab:qwen-replication}
  \begin{tabular}{lcccc}
    \toprule
    Frozen representation & Criterion & Univariate & Multivariate & Pearson \\
    \midrule
    Qwen2.5-7B & $16/18$ & $1.181$ & $0.355$ & $0.641$ \\
    Qwen3.8-27B & $17/18$ & $1.223$ & $0.303$ & $0.593$ \\
    \bottomrule
  \end{tabular}
\end{table}

\begin{figure}[H]
  \centering
  \includegraphics[width=\linewidth]{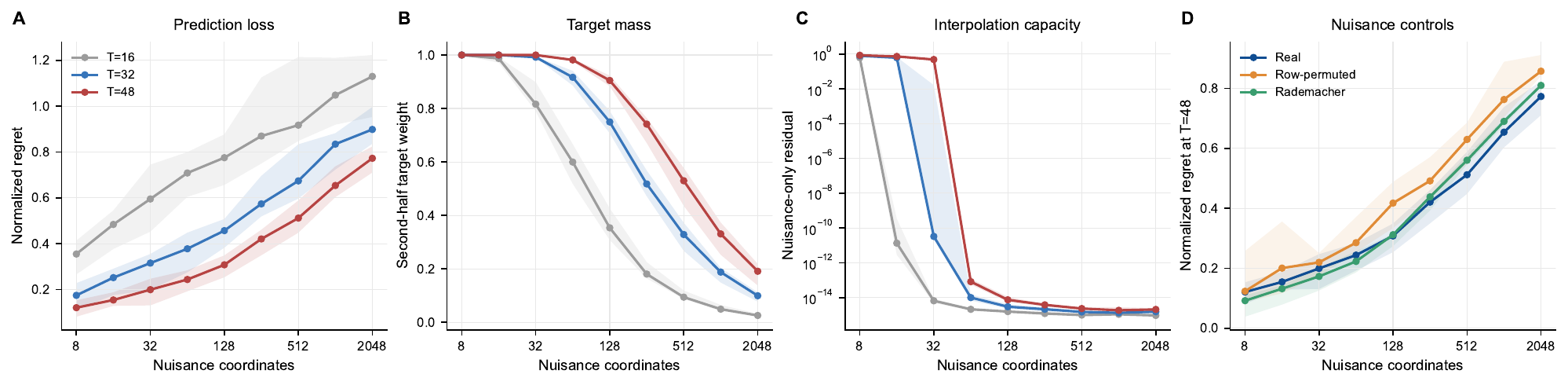}
  \caption{\textbf{The Qwen2.5 mechanism diagnostic.}
  Lines and bands are target-level medians and interquartile ranges: 18 targets
  in A--C and six fixed control targets in D.  As nuisance width grows, regret
  rises (A), target weight falls (B), and nuisance-only interpolation becomes
  exact (C); permuted and Rademacher controls reproduce the regret trend (D).}
  \label{fig:activation-phase}
\end{figure}